\def\eqref#1{equation~\ref{#1}}
\def\1{\bm{1}}
\def\mH{{\bm{H}}}
\DeclareMathAlphabet{\mathsfit}{\encodingdefault}{\sfdefault}{m}{sl}
\SetMathAlphabet{\mathsfit}{bold}{\encodingdefault}{\sfdefault}{bx}{n}
\definecolor{mydarkblue}{rgb}{0,0.08,0.45}
\definecolor{darkgreen}{rgb}{0.0, 0.5, 0.0} 
\definecolor{myblue}{RGB}{235,235,250}
\definecolor{lightpink}{RGB}{225, 235, 250}
\definecolor{lightblue}{RGB}{230, 235, 245}  
\definecolor{lightgray}{RGB}{240, 240, 240}  
\definecolor{darkgray}{RGB}{220, 220, 220} 
\definecolor{superlightred}{rgb}{0.99, 0.92, 0.92}
\definecolor{darkgreen}{RGB}{50,100,0}
\definecolor{darkred}{RGB}{230, 150, 170}
\newtheorem{proposition}{Proposition}
\newtheorem{corollary}{Corollary}
\newtheorem{lemma}{Lemma}
\pgfplotsset{compat=1.17}
\algrenewcommand\algorithmiccomment[1]{\hfill \textcolor{gray}{// #1}}
\definecolor{uclablue}{RGB}{159, 195, 224}
\definecolor{uclagold}{RGB}{156, 172, 234}
\definecolor{grayred}{RGB}{30, 140, 224}
\definecolor{cyan}{rgb}{0.573, 0.675, 0.878}
\definecolor{limegreen}{rgb}{0.675, 0.843, 0.557}
\definecolor{TealBlue}{rgb}{1.0, 0.97, 0.8}
\newtcolorbox{remarkbox}[1][]{ 
  colback=uclagold!10,
  colframe=uclagold!80!black,
  coltitle=white,
  title=#1,
  fonttitle=\bfseries,
  colbacktitle=uclagold!80!black,
  left=6pt, right=6pt, top=6pt, bottom=6pt,
  breakable
}
\title{Script: Graph-Structured and Query-Conditioned Semantic Token Pruning for Multimodal Large Language Models}
\author{\name Zhongyu Yang\footnotemark[1]\,\email zy4028@hw.ac.uk\\
      \addr BCML, Heriot-Watt University \\
      \name Dannong Xu\footnotemark[1]\, \email danielxu0208@gmail.com \\
      \addr BCML, Heriot-Watt University \\
      \name Wei Pang \email w.pang@hw.ac.uk\\
      \addr BCML, Heriot-Watt University \\
      \name Yingfang Yuan\footnotemark[2]\, \email y.yuan@hw.ac.uk\\
      \addr BCML, Heriot-Watt University}
\begin{document}

\footnotetext[1]{$^*$ Equal contribution}
\footnotetext[2]{$\dagger$ Correspond Author}


\maketitle

\begin{center}
\vspace{-5mm}
    \captionsetup{type=figure}
    \centering
    \includegraphics[width=\linewidth]{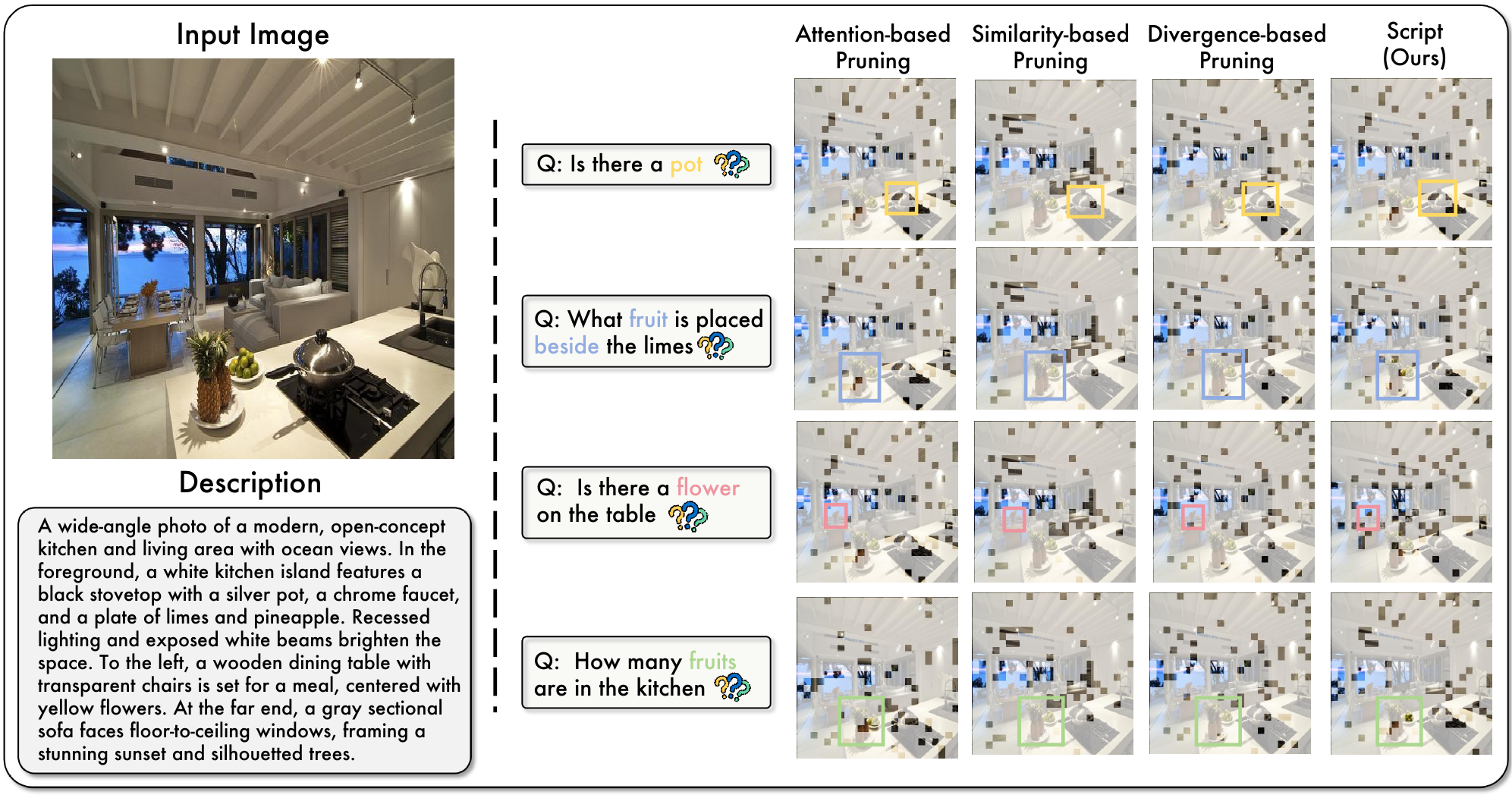}
    \caption{\textbf{Comparison of different token pruning methods.} Attention-based and similarity-based methods prune tokens using attention scores and similarity scores, respectively. In contrast, divergence-based methods detect changes in model performance and retain tokens that cause minimal impact. 
    \textbf{Script} (Graph-\textbf{S}tructured and Que\textbf{R}y-\textbf{C}ond\textbf{I}tioned \textbf{T}oken \textbf{P}runing) combines graph-structured reduction of visual redundancy and query-conditioned semantic token selection to enable efficient pruning in MLLMs.
    In this example, Script successfully preserves key visual cues, such as the \textbf{\textcolor{Goldenrod}{silver pot}} on the stove, the \textbf{\textcolor{cyan}{pineapple}} beside the \textbf{\textcolor{limegreen}{limes}}, and the \textbf{\textcolor{pink}{flowers}} on the table. Other methods fail to retain consistently.}
    \label{fig:teaser}
\end{center}

\begin{abstract}
The rapid growth of visual tokens in multimodal large language models (MLLMs) leads to excessive memory consumption and inference latency, especially when handling high-resolution images and videos.
Token pruning is a technique used to mitigate this issue by removing redundancy, but existing methods often ignore relevance to the user query or suffer from the limitations of attention mechanisms, reducing their adaptability and effectiveness. 
To address these challenges, we propose Script, a plug-and-play pruning method that requires no retraining and generalizes across diverse MLLMs.
Script comprises two modules: a graph-structured pruning module that removes visually redundant tokens, and a query-conditioned semantic pruning module that preserves query-relevant visual information. Together, they enhance performance on multimodal tasks.
Experiments on fourteen benchmarks across image and video understanding tasks show that Script consistently achieves higher model efficiency and predictive accuracy compared to existing pruning methods.
On LLaVA-NeXT-7B, it achieves up to $6.8\times$ prefill speedup and $10\times$ FLOP reduction, while retaining 96.88\% of the original performance.
\end{abstract}
\section{Introduction}
\label{intro}
Recent advances in large language models (LLMs)~\citep{touvron2023llama, bai2023qwen, yang2024qwen2, qwen3} have substantially enhanced language understanding and reasoning, forming the backbone of general-purpose multimodal systems, including vision-language models and embodied agents operating in real-world scenarios. 
Building on this progress, multimodal large language models (MLLMs)~\citep{li2024llava-ov, chen2024internvl, chen2024internvl2, zhang2025llava-mini, zhang2025openmmreasonerpushingfrontiersmultimodal,yang2025longvtincentivizingthinkinglong} extend LLMs by integrating vision encoders, enabling joint reasoning over both visual and textual modalities.
This integration empowers MLLMs to excel at diverse vision-language tasks such as visual question answering and image captioning.

However, deploying MLLMs in practical scenarios such as multi-agent systems or interactive assistants is constrained by the high computational cost of handling high-resolution or temporally extended visual inputs~\citep{wikiautogen, mermaid, zhao2024retrievalaugmentedgenerationaigeneratedcontent}. 
Patch-based vision encoders typically tokenize each visual input into hundreds or even thousands of tokens~\citep{luo2024llava-hr, guo2024llava-uhd, zhang2023videollama, LongVILA, li2025baichuan, liu2025ola}.  
In contrast to compact textual inputs, visual representations often contain hundreds or thousands of tokens, which increases memory usage and inference latency.
The explosion of vision tokens, compounded by the quadratic complexity of attention mechanisms~\citep{vaswani2017transformer}, creates a significant bottleneck for latency- and memory-sensitive scenarios such as mobile deployment, online inference, and edge-based vision-language applications.  
To mitigate this inefficiency, visual token pruning has emerged as a promising strategy~\cite{li2025comprehensivestudyvisualtoken}. 
An effective pruning method should minimize computational cost while preserving task performance within resource-constrained environments.

Existing visual token pruning methods can be broadly categorized into three main paradigms: 
(1) attention-based methods, which retain tokens with high model-assigned importance, commonly referred to as attention scores~\citep{chen2024fastv, xing2024pyramiddrop, zhang2024sparsevlm}; 
(2) similarity-based methods, which identify and eliminate redundant visual tokens based on feature similarity~\citep{tome, zhang2024fastervlm, wen2025dart, jeddi2025saint}; 
and (3) divergence-minimization methods, which prune tokens by minimizing the change in the model’s output~\citep{DivPrune, ye2025fitprune}.

\begin{wrapfigure}{r}{0.54\textwidth}
  \vspace{-11pt}
  \centering
  \includegraphics[width=0.54\textwidth]{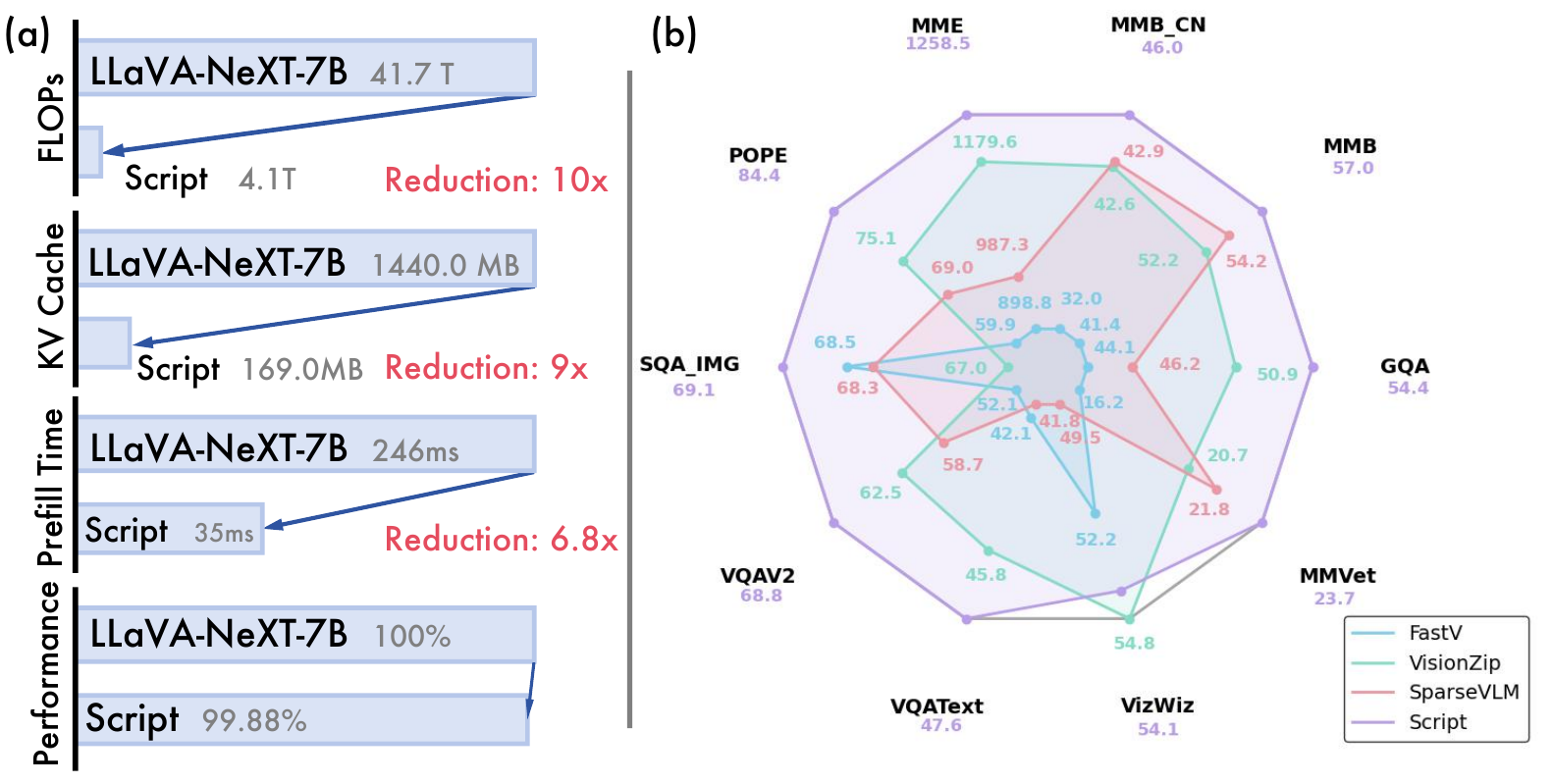}
  \vspace{-5mm}
  \caption{(a) Efficiency Analysis on LLaVA-NeXT-7B under 88.9\% reduction. (b) Comparison with other baselines on LLaVA-1.5-7B under 94.4\% reduction.}
  \label{fig:example}
  \vspace{-3mm}
\end{wrapfigure}

However, existing pruning methods face two fundamental challenges in query-conditioned scenarios.
\textbf{First}, attention-based approaches depend on raw attention scores, which are susceptible to the \emph{attention sink} issue~\citep{barbero2025llms}, 
often overlooking critical tokens. Moreover, assigning similar scores to adjacent or semantically similar tokens can reduce pruning efficiency~\citep{yang2024visionzip}. As shown in Figure~\ref{fig:teaser}, such methods fail to preserve the token representing the flower, despite its clear relevance to the query.
\textbf{Second}, similarity- and divergence-based methods lack explicit query conditioning~\citep{DivPrune, li2025todreeffectivevisualtoken}. Although they address visual redundancy or output stability, they generate fixed token subsets regardless of the input query, which can lead to the omission of query-referenced objects, such as the pineapple and lime shown in Figure~\ref{fig:teaser}.

To address these limitations, we propose \textbf{Script}, a training-free and architecture-agnostic token pruning approach that combines graph-structured pruning (GSP) and query-conditioned semantic pruning (QCSP).
On one hand, we build a bipartite graph to structure visual token redundancy, effectively identifying redundant tokens with lower computational cost.
On the other hand, to eliminate reliance on attention scores and avoid issues such as attention sink, Script explicitly models interactions between the query and visual tokens, identifying query-relevant tokens using Determinantal Point Processes (DPP), a probabilistic model that favors diverse and semantically meaningful subsets.
Together, these two modules enable effective token pruning by jointly considering visual redundancy and input query relevance.
As illustrated in Figure~\ref{fig:example}, Script trims 88.9\% tokens yet preserves 99.88\% accuracy on POPE, and consistently outperforms other baselines on 10 benchmarks.
In summary, our contributions are as follows:

\begin{itemize}
\setlength\itemsep{0pt}
\setlength\parsep{0pt}
\setlength\topsep{0pt}
\item We identify the core limitations of existing pruning paradigms in query-conditioned scenarios, highlighting their inability to adaptively preserve task-relevant content.
\item We propose Script, a training-free pruning method that combines graph-based redundancy reduction with query-aware token selection via DPP.
\item Experiments across fourteen benchmarks demonstrate that Script consistently overperforms other baselines, and achieves up to \textbf{10$\times$} speedup while retaining \textbf{96.88\%} of original performance.
\end{itemize}

\section{Related Work}
\label{sec:related}
\noindent \textbf{Scalability Challenges in MLLMs.}  
MLLMs combine visual and textual modalities to support general-purpose perception and reasoning~\citep{alayrac2022flamingo, zhu2024minigpt, chen2023minigpt, yu2024rlaif, liu2024multi, cai2024internlm2}.
However, visual inputs introduce substantial token overhead due to spatial redundancy and high dimensionality, exacerbating the quadratic complexity of Transformer attention.
For example, LLaVA-1.5~\citep{liu2023llava} generates 576 tokens from a $336 \times 336$ image, which is already 4–5$\times$ longer than typical text-only prompts.
High-resolution models like LLaVA-NeXT~\citep{liu2024llava-next} and video models like LongVA~\citep{LongVA} push this further, producing tens or even hundreds of thousands of tokens per input, leading to untenable memory and compute demands for many downstream applications.
Consequently, such models often encounter memory bottlenecks, latency spikes, or even inference failures, especially under real-time or edge deployment scenarios~\citep{Papa2023ASO, li-etal-2025-prompt}, where computational resources are inherently limited.
These challenges have motivated a growing body of work on token pruning, aiming to reduce input size while preserving task-relevant semantics and ideally without retraining or architectural changes.

\noindent \textbf{Token Pruning Strategies.}  
Efforts to mitigate visual token overhead in MLLMs can be broadly categorized into four approaches, each with distinct assumptions and limitations.
(1) Pre-fusion compression, which downsamples or selects tokens before vision-language fusion~\citep{li2024llama-vid, hu2024mqt-llava}, often requires model retraining or structural modifications, hindering plug-and-play usage.
(2) Attention-based pruning, which selects tokens using attention scores~\citep{chen2024fastv, ye2025fitprune}, suffers from attention drift~\citep{wen2025problem}, misguiding token importance, and limiting compatibility with optimized backends such as FlashAttention~\citep{flashattention2}.
(3) Pre-language fusion pruning, which drops tokens before language-level alignment~\citep{shang2024llava-prumerge, TRIM}, is often tightly coupled to specific vision backbones, limiting transferability across architectures.
(4) Similarity-based pruning, which eliminates redundant tokens using intra-image feature similarity~\citep{wen2025dart, DivPrune}, is generally efficient and model-agnostic, but often ignores query semantics, leading to the loss of task-relevant content.

While existing approaches partially alleviate visual redundancy, few simultaneously achieve the trifecta of query relevance, architectural generality, and training-free deployment.
In contrast, our proposed \textbf{Script} is explicitly query-aware, visual diversity, entirely model-agnostic, and operates without any additional training. 

\section{Preliminary}
\label{sec:empirical}
In this section, within the context of the task Text Visual Question Answering (TextVQA), we empirically investigate two key aspects of visual tokens in MLLMs: visual redundancy and query-conditioned relevance.
In addition, we present a theoretical formulation of efficient query-conditioned token selection. The insights from both empirical and theoretical analysis inform the design of our \textbf{Script}.

\begin{figure}[t]
    \centering
    \includegraphics[width=\linewidth]{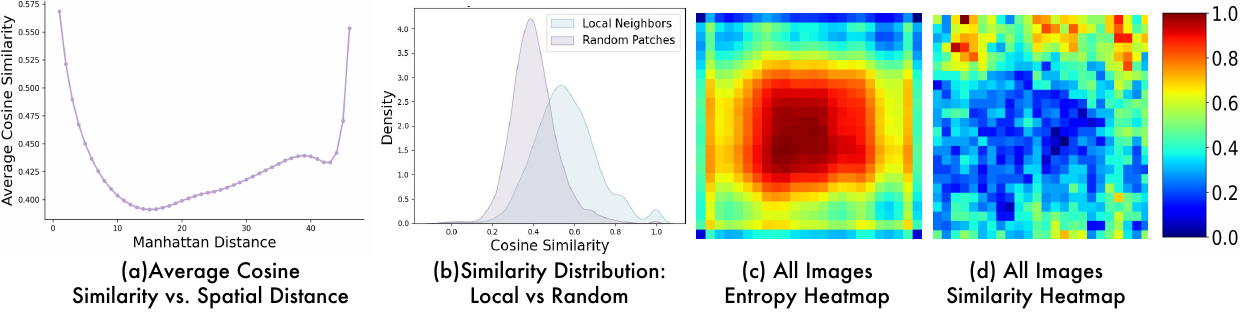}
    \caption{
\textbf{Token redundancy visualized via similarity and entropy on 10,000 COCO images.} }
    \label{fig:entropyandredudancy}
\end{figure}

\subsection{Research Objective}
\label{sec:task}

Existing MLLMs~\citep{liu2023llava, liu2025ola, Qwen2.5-Omni} typically comprise three components: a vision encoder $f_v$, a multimodal projector $g$, and a large language model $f_{\phi}$.
After $f_v$ processes an input image $\boldsymbol{X}_v$, the projector $g$ maps the features into a visual-token sequence $\mH_v \in \mathbb{R}^{n \times d}$, where $n$ is the token count and $d$ is the token dimension.
The language model $f_\phi$ processes the visual tokens $\mH_v$ and the tokenized query $\mH_q$ to generate an answer grounded in the image.
Typically, $n \gg |\mH_q|$, i.e., the visual sequence is much longer than the textual query.
Equation \ref{eq:original} defines vision-token pruning by seeking the subset $\tilde{\mH}_v^{*}$ that preserves model performance while using the fewest tokens.
\begin{equation}
  \tilde{\mH}_v^* = \arg\min_{\tilde{\mH}_v \subseteq \mH_v,\, \mid \tilde{\mH}_v \mid = m} \mathcal{L} \left( f_{\phi}([\tilde{\mH}_v;\mH_q]), f_{\phi}([\mH_v;\mH_q]) \right).
\label{eq:original}
\end{equation}
Here, $\mathcal{L}$ measures the discrepancy between model outputs before and after pruning, serving as a proxy for performance loss.  
The candidate subset $\tilde{\boldsymbol{H}}_v \subset \boldsymbol{H}_v$ contains $m$ tokens, with $m < n$. 
To ensure query-aware pruning, it is essential to explicitly capture the semantic interactions between visual tokens $\boldsymbol{H}_v$ and their corresponding query tokens $\boldsymbol{H}_q$. These query-specific interactions are crucial for identifying the most query-relevant subset $\tilde{\boldsymbol{H}}_v$. At the same time, assessing the visual redundancy between $\tilde{\boldsymbol{H}}_v$ and the full set $\boldsymbol{H}_v$ is vital for promoting diversity in token selection, which is important for generalizing across a wide range of tasks. Together, these two aspects contribute to forming an informative and compact token subset.


\subsection{Rethinking Visual Redundancy}
\label{sec:localredundancy}
\begin{wraptable}[11]{r}{0.43\textwidth}  
\vspace{-15pt}   
\caption{Comparison of redundancy estimation methods with a 90\% pruning ratio on the POPE Benchmark.
\textbf{Sim. (\%)}: Percentage of similarity pairs considered. 
\textbf{Time}: Inference time in milliseconds. 
\textbf{IOU@K}: Intersection-over-Union with top-K selections. 
\textbf{Acc.}: Classification accuracy (\%). 
\textbf{Speedup}: Relative time speedup compared to Full Graph.
}
\vspace{-8pt}  
\label{tab:graph}
\centering
\scriptsize  
\setlength{\tabcolsep}{3pt}   
\renewcommand{\arraystretch}{1.05}   
\begin{tabular}{lccccc}
\toprule
\textbf{Method} & \textbf{Sim. (\%)} & \textbf{Time} & \textbf{IOU@K} & \textbf{Acc.} & \textbf{Speedup} \\
\midrule
Exhaustive    & 100      & 100 & 1.00 & 85.14 & 1.00x \\
Bipartite      & $\sim$50 & 35  & 0.93 & 84.49 & 2.81x \\
Random         & 0        & 34  & 0.52 & 78.63 & 2.82x \\
\bottomrule
\end{tabular}
\vspace{-3pt}
\end{wraptable}

In natural images, spatially adjacent tokens often encode highly overlapping content, resulting in what we refer to as local redundancy. Carefully identifying and removing such redundancy can significantly reduce computation without compromising accuracy.

To investigate local redundancy, we adopt the approach from~\citep{wang2024zerotprune} and compute Manhattan distances and cosine similarities between vision-token pairs.
We group similarity values by distance and calculate the average for each group. As shown in Figure~\ref{fig:framework}(a), cosine similarity is highest for nearby tokens and decreases until a distance of roughly 15.
Beyond this, we observe that similarity increases again at longer distances, indicating that long-range token redundancy also exists and deserves attention.
Furthermore, we visualize the similarity distributions for neighboring tokens (\textit{hop} = 1) and randomly sampled neighbor token pairs (\textit{hop} > 1). 
As shown in Figure~\ref{fig:framework}(b), the number of neighboring tokens with similarity greater than 0.5 is consistently higher than that of random token pairs.
These results indicate that neighboring (i.e., local) tokens encode more similar visual features, but long-range redundancy should also be taken into account, as it clearly exists.

\begin{remarkbox}[Insight 1: Redundancy Exists Beyond Local Neighborhoods]
Spatially adjacent tokens show high similarity (local redundancy), yet long-range pairs still share appreciable similarity, revealing long‑range redundancy. Effective pruning must therefore model both local and long-range redundancy to avoid information loss.
\end{remarkbox}


To further validate local redundancy and its estimation via cosine similarity, we introduce local information entropy as a distribution-level measure of neighborhood compactness and compare it with cosine similarity.
For each token, we extract its $3\times3$ Moore neighbourhood $\mathcal N(v)=\{\mathbf h_i\mid\lVert\mathrm{pos}(i)-\mathrm{pos}(j)\rVert_1\le1\}$ (\textit{hop}=1). We then project all vectors in the neighborhood onto their first principal component and discretize the resulting scalar values into 20 equal-width bins to estimate the probability distribution $\{p_i\}$.
The local information entropy is defined as:
\begin{equation}
\mathcal{H}_v = -\sum_{i=1}^{n} p_i \log\ p_i,
\end{equation}
where $p_i = p_i + \epsilon$ and $\epsilon = 10^{-8}$.  A compact neighborhood typically exhibits low $\mathcal{H}_v$, as its token representations are similar and concentrated in a few bins. In contrast, a diverse neighborhood naturally yields higher $\mathcal{H}_v$, reflecting greater variation among neighboring tokens. As shown in Figures~\ref{fig:entropyandredudancy}(c) and (d), regions with low local similarity typically exhibit high entropy, confirming local redundancy and validating cosine similarity as an effective proxy.
\begin{remarkbox}[Insight 2: Cosine Similarity is a Valid Redundancy Proxy]
Local information entropy inversely correlates with cosine similarity. Low‑entropy (predictable) regions coincide with high neighbor similarity.
\end{remarkbox}

Motivated by the validated role of cosine similarity in capturing visual redundancy and the need for computational efficiency, we design a lightweight bipartite graph-based estimator that approximates both local and long-range redundancy using cosine similarity alone. This estimator forms the core of the graph-structured pruning module detailed in Section~\ref{sec:graph-pruning}.
\begin{remarkbox}[Insight 3: Lightweight Bipartite Graph Yields Scalable Redundancy Estimation]
By weighting edges with cosine similarity, the even–odd bipartite graph achieves redundancy estimation comparable to exhaustive computation, but with significantly lower computational cost.
\end{remarkbox}
Tokens are represented as nodes, divided into even- and odd-indexed sets; each node in one set connects to all nodes in the other, with edge weights defined by cosine similarity. These weights enable efficient redundancy estimation and guide token removal. 
As shown in Table~\ref{tab:graph}, empirical results confirm the effectiveness of the bipartite graph-based approach. On the POPE Benchmark with a 90\% pruning ratio, it achieves an accuracy of 84.49\%, which is close to that of the exhaustive approach (85.14\%), while being nearly three times faster. Meanwhile, it retains 93\% IOU agreement with the exhaustive approach, indicating that most key tokens are preserved.  
In contrast, the random approach is equally fast but drops to 78.63\% accuracy and only 0.52 IOU, failing to preserve semantically relevant tokens.

\subsection{Rethinking Query Relevance}
\label{sec:query-sepcific}
Recent studies~\citep{zhang2024fastervlm, yang2024visionzip} address visual token redundancy using text-agnostic pruning strategies that retain tokens with high [\texttt{CLS}] attention scores from the vision encoder's final layer.
However, these methods often fail to incorporate the query explicitly. They are also limited by inherent issues in attention mechanisms, such as the attention sink effect, where a disproportionate amount of attention is assigned to the first token in a sequence, regardless of its semantic relevance. As a result, these methods tend to preserve tokens from visually salient or high-attention regions. Meanwhile, they may discard less prominent areas, such as backgrounds or textures, which can still be crucial to answering the query.
%

This observation raises a critical question:  
\textit{Is output-layer [\texttt{CLS}] attention sufficient to capture all query-relevant visual information?} 
Empirical observations suggest such attention-based strategies often overemphasize dominant foreground objects while neglecting background elements relevant to the query.  
As shown in Figure~\ref{fig:teaser}, attention focuses on the \textit{pot} and \textit{fruit} in the foreground but overlooks the actual target, the \textit{flower} in the background. This results in incorrect outputs. Moreover, both divergence-based and similarity-based approaches also fail to adapt to the user query in this example.
\begin{remarkbox}[Insight 4: CLS Attention Misses Query-Relevant Background Tokens]
Attention-based relevance measures tend to focus on visually salient foregrounds and can miss query‑relevant background tokens, underscoring the need for explicitly query‑aware relevance scoring.
\end{remarkbox}

\subsection{Rethinking Query-Relevant Visual Token Selection}
Let the query tokens be $\mathbf{H}_q = \{ \mathbf{h}^{(q)}_j \}$, and define their mean embedding as:
\begin{equation}
\mathbf{h}^{(q)}_{\mu} = \frac{1}{\ell} \sum_{j=1}^{\ell} \mathbf{h}^{(q)}_j.
\label{eq:query-mean}
\end{equation}
The relevance score $q_i$ of each visual token $\mathbf{h}_i$ is computed as the cosine similarity between $\mathbf{h}_i$ and the mean query embedding $\mathbf{h}_\mu^{(q)}$:
\begin{equation}
r_i = \frac{\mathbf{h}_i^\top \mathbf{h}_\mu^{(q)}}{\lVert \mathbf{h}_i \rVert \lVert \mathbf{h}_\mu^{(q)} \rVert}, \quad \text{for } i = 1, \dots, n.
\label{eq:vqr}
\end{equation}
We construct a diagonal matrix $\boldsymbol{Q} = \mathrm{diag}(r_1, \dots, r_n) \in \mathbb{R}^{n \times n}$ to encode the query relevance of each visual token. To encourage diversity among selected tokens, we define a similarity matrix $\boldsymbol{S}^{(v)} \in \mathbb{R}^{n \times n}$, where $\boldsymbol{S}^{(v)}_{ij}$ measures the pairwise similarity between vision tokens $\mathbf{h}_i$ and $\mathbf{h}_j$. We assume $\boldsymbol{S}^{(v)}$ is symmetric and positive semidefinite, which approximately holds when using cosine similarity among normalized features.
This formulation naturally aligns with modeling token selection as a $k$-DPP (Determinantal Point Process), which promotes subsets that balance individual relevance with collective diversity.
We therefore construct the following $k$‑DPP kernel (note that $\boldsymbol{L}$ is a positive‑semidefinite kernel, \textbf{not} a loss function) to integrate both query relevance and feature diversity:
\begin{equation}
\boldsymbol{L} = \boldsymbol{Q}^{1/2} \boldsymbol{S}^{(v)} \boldsymbol{Q}^{1/2},
\label{eq:dpp-kernel}
\end{equation}
where $\boldsymbol{Q}$ explicitly encodes query alignment and  $\boldsymbol{S}^{(v)}$ impliciltly encourages feature diversity.

\begin{remarkbox}[Insight 5: $\mathbf{S}^{(v)}$ Promotes Diversity]
For any subset $\mathcal{I}$, $\det(\boldsymbol{L}_{\mathcal{I}})$ equals the squared volume of the parallelotope spanned by the selected token vectors (see Appendix~B). If $\boldsymbol{S}^{(v)}$ were the identity matrix, this volume reduces to $\prod_{i\in\mathcal{I}} q_i$, i.e., relevance only. Introducing off-diagonal similarities lowers the determinant when two tokens are similar, thus maximizing $\det(\boldsymbol{L}_{\mathcal{I}})$ encourages mutually orthogonal (diverse) token subsets.
\end{remarkbox}

%

A $k$-DPP assigns to each subset $\mathcal{I} \subseteq [n]$ of fixed size $k$ a probability proportional to $\det(\boldsymbol{L}_\mathcal{I})$, the determinant of the principal submatrix of $\boldsymbol{L}$ indexed by $\mathcal{I}$. This determinant reflects the volume spanned by the selected vectors, and is larger when the selected tokens are both informative and mutually distinct.
Expanding the determinant yields:
\begin{align}
\det(\boldsymbol{L}_\mathcal{I}) 
&= \det(\boldsymbol{Q}_\mathcal{I}^{1/2} \boldsymbol{S}^{(v)}_\mathcal{I} \boldsymbol{Q}_\mathcal{I}^{1/2}) \notag \\
&= \det(\boldsymbol{Q}_\mathcal{I}^{1/2})^2 \cdot \det(\boldsymbol{S}^{(v)}_\mathcal{I}) \notag \\
&= \left( \prod_{i \in \mathcal{I}} q_i \right) \cdot \det(\boldsymbol{S}^{(v)}_\mathcal{I}).
\label{eq:dpp-expansion}
\end{align}

This decomposition highlights the trade-off between query relevance (via $\prod_i q_i$) and visual diversity (via $\det(\boldsymbol{S}^{(v)}_\mathcal{I})$). Based on this, we define the final surrogate objective for selecting an informative token subset:
\begin{equation}
\mathcal{I}^* = \arg\max_{\mathcal{I} \subseteq [n],\, |\mathcal{I}| = m} \prod_{i \in \mathcal{I}} q_i \cdot \det(\boldsymbol{S}^{(v)}_\mathcal{I}).
\label{eq:surrogate}
\end{equation}
\begin{remarkbox}[Insight 6: $k$-DPP Balances Relevance and Diversity via Geometry]
The $k$‑DPP determinant can be analogized to the feature‑space volume spanned by selected tokens, thus optimizing it jointly maximizes query relevance and inter‑token diversity in a single objective.
\end{remarkbox}

\section{Method}
\label{sec:method}

\begin{figure}[t]
    \centering
    \includegraphics[width=\linewidth]{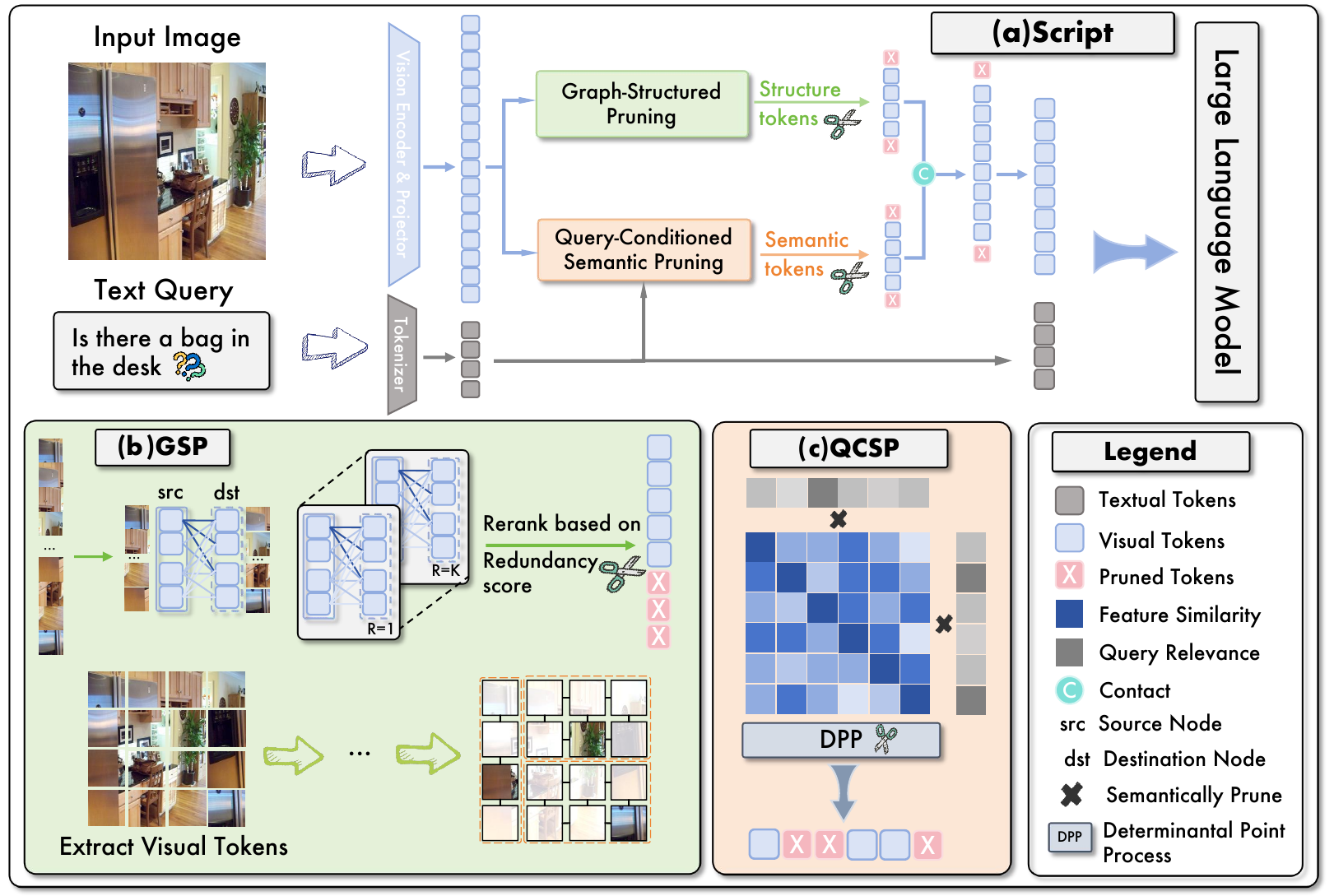}
    \caption{
\textbf{Overview of Script}, a three-stage pruning framework: (a) overall architecture; (b) Query-Conditioned Semantic Pruning (QCSP); (c) Graph-Structured Pruning (GSP). Together, these modules remove semantically irrelevant and visually redundant tokens through a joint selection process.}
    \label{fig:framework}
\end{figure}

Motivated by the empirical analysis in Section~\ref{sec:empirical}, we introduce Script, as illustrated in Figure~\ref{fig:framework}. Script progressively refines token selection through two main modules: (1) Graph-Structured Pruning (GSP), which removes visually redundant tokens using a bipartite similarity graph; and (2) Query-Conditioned Semantic Pruning (QCSP), which consists of two steps: Query-Conditioned Relevance Scoring (QCRS), computing the semantic relevance of each token with respect to the input query, and Diversity-Preserving Selection via DPP, selecting a compact, diverse subset of relevant tokens. The outputs from GSP and QCSP are then intersected to obtain the final token subset $\tilde{\boldsymbol{H}}_v^*$.


\subsection{Graph-Structured Pruning}
\label{sec:graph-pruning}
Transformer-based visual encoders often generate dense token sequences with substantial visual redundancy across both local and long-range contexts, as identified in Section~\ref{sec:localredundancy}.
%
To mitigate this, we propose an inference-time pruning strategy based on bipartite similarity graphs without relying on model parameters.

\paragraph{Bipartite Graph Construction.}
Given visual token embeddings $\boldsymbol{H}_v = \left[\mathbf{h}_1, \ldots, \mathbf{h}_n\right]\in \mathbb{R}^{n \times d}$, we construct a bipartite graph by partitioning tokens into two disjoint node sets $V_{\text{src}}$ and $V_{\text{dst}}$ via alternating index assignment (e.g., even indices to $V_{\text{src}}$, odd-indexed to $V_{\text{dst}}$)~\citep{buchholz2024learning}. The bipartite graph is defined as $G = (V_{\text{src}}, V_{\text{dst}}, \boldsymbol{S}^{(v)})$, where each node in $V_{\text{src}}$ is fully connected to every node in $V_{\text{dst}}$, and the edge weight $\boldsymbol{S}^{(v)}_{ij} \in \boldsymbol{S}^{(v)}$ represents the cosine similarity between tokens $t_i$ and $t_j$:

\begin{table*}[t]
\caption{\textbf{Performance comparisons on LLaVA-1.5-7B~\citep{liu2023llava} across 10 image understanding benchmarks}. The best results in each setting are \textbf{bolded}, and the second-best are \underline{underlined}.}
\label{tab:llava}
\setlength{\tabcolsep}{5pt}
\resizebox{\textwidth}{!}{
\begin{tabular}{lc|cccccccccc|cc}
\noalign{\hrule height 1pt}
\textbf{Method} & \textbf{\quad Venue\quad} & \textbf{GQA} & \textbf{MMB} & \textbf{MMB}$^{\text{CN}}$ & \textbf{MME} & \textbf{POPE} & \textbf{SQA}$^{\text{IMG}}$ & \textbf{VQA}$^{\text{V2}}$ & \textbf{VQA}$^{\text{Text}}$ & \textbf{VizWiz} & \textbf{MMVet} &{\textbf{\quad Acc.\quad}} &{\textbf{Relative}}\\
\noalign{\hrule height 1pt}
\rowcolor{gray!20}
\multicolumn{14}{c}{\textit{Upper Bound, 576 Tokens (100\%), 3.817 TFLOPs}}\\
\textcolor{gray!80}{LLaVA-1.5-7B}~\citep{liu2023llava} & \textcolor{gray!80}{\textit{Nips'23}} & \textcolor{gray!80}{61.94} & \textcolor{gray!80}{64.09} & \textcolor{gray!80}{58.10} & \textcolor{gray!80}{1507.06} & \textcolor{gray!80}{86.96} & \textcolor{gray!80}{69.41} & \textcolor{gray!80}{78.50} & \textcolor{gray!80}{58.20} & \textcolor{gray!80}{50.32}  &   \textcolor{gray!80}{31.82} & \textcolor{gray!80}{63.47}  &   \textcolor{gray!80}{100\%} \\
\noalign{\hrule height 1pt}
\rowcolor{gray!20}
\multicolumn{14}{c}{\textit{Retain 192 Tokens in Average \textcolor{grayred}{($\downarrow$ 66.7\%), $\sim$1.253 TFLOPs}}} \\
FastV~\citep{chen2024fastv} & \textit{ECCV'24} & 58.09 & 63.40 & 54.04 & \underline{1490.65} & 82.30 & 68.96 & 72.88 & 54.09 & \underline{54.97} & 29.59 & 61.29 & 96.56\% \\
TRIM~\citep{TRIM}& \textit{COLING'25} & 58.18 & 59.19 & 51.02 & 1405.29 & 78.09 & 67.87 & 74.86 & 54.29 & 48.23 & 30.82 & 59.28 & 93.41\%  \\
VisionZip~\citep{yang2024visionzip} & \textit{CVPR'25}  & 59.24 & \underline{63.75} & 53.52 & 1445.67 & 86.63 & 68.37 & 75.44 & 54.89 & 53.97 &  30.74 & 59.28 & 93.40\% \\
DivPrune~\citep{DivPrune} & \textit{CVPR'25} & \underline{59.91} & 62.54 & 43.00 & 1436.39 & \textbf{87.56} & \underline{69.31} & 75.10 & 53.38 & \textbf{55.33} &  30.96  & 60.89 & 95.94\%  \\
SparseVLM~\citep{zhang2024sparsevlm} & \textit{ICML'25} & 59.54 & 63.46 & \underline{55.20} & 1450.07 & 86.49 & 68.67 & \underline{75.61} & \underline{56.93} & 51.23 & \underline{31.24} & \underline{62.08} & \underline{97.82\%} \\
\rowcolor{cyan!25} \textbf{Script (Ours)} & \textbf{\textit{Proposed}} & \textbf{60.82} & \textbf{63.85} & \textbf{57.55} & \textbf{1493.87} & \underline{87.42} & \textbf{69.49} & \textbf{77.62} & \textbf{57.81} & 53.80 &\textbf{31.40}  &\textbf{63.45} & \textbf{100.00\%} \\
\noalign{\hrule height 1pt}
\rowcolor{gray!20}
\multicolumn{14}{c}{\textit{Retain 128 Tokens in Average \textcolor{grayred}{($\downarrow$ 77.8\%), $\sim$0.833 TFLOPs}}} \\
FastV~\citep{chen2024fastv} & \textit{ECCV'24} & 56.85 & 62.46 & \underline{53.61} &  \textbf{1440.19} & 80.24 & 68.67 & 71.68 & 52.21 & \underline{55.51} &  28.07 & 60.13 & 94.74\%\\
TRIM~\citep{TRIM}& \textit{COLING'25} & 56.85 & 58.42 & 48.66 & 1359.86 & 77.95 & 68.47 & 74.42 & 53.74 & 47.62 &  \underline{29.35} & 58.35 & 91.93\%\\
VisionZip~\citep{yang2024visionzip} & \textit{CVPR'25} & 57.87 & 62.20 & 53.18 & 1429.75 & 85.20 & 68.22 & \underline{74.62}  & 54.12 & 54.39 &  27.39 & 60.87 & 95.90\%\\
DivPrune~\citep{DivPrune} & \textit{CVPR'25} & \underline{59.16} & 61.86 & 42.16 & 1412.97 & \textbf{87.36} & \textbf{69.01} & 74.19 & 51.92 & \textbf{56.21} & 28.30 &60.08 &94.66\% \\
SparseVLM~\citep{zhang2024sparsevlm} & \textit{ICML'25} & 58.41 & \underline{62.75} & 53.48 & 1429.41 & 86.22 & 68.62 & 73.86 & \underline{56.29} & 51.83 & 29.12 & \underline{61.21} & \underline{96.43\%} \\
\rowcolor{cyan!25} \textbf{Script (Ours)} & \textbf{\textit{Proposed}}& \textbf{60.27} & \textbf{63.10}  & \textbf{55.85} & \underline{1431.24} & \underline{87.32} & \underline{68.79} & \textbf{76.55} & \textbf{56.45} & 53.97 & \textbf{31.09} & \textbf{62.51} & \textbf{98.49\%} \\
\noalign{\hrule height 1pt}
\rowcolor{gray!20}
\multicolumn{14}{c}{\textit{Retain 64 Tokens in Average \textcolor{grayred}{($\downarrow$ 88.9\%), $\sim$0.415 TFLOPs}}} \\
FastV~\citep{chen2024fastv} & \textit{ECCV'24} & 53.59 & 59.62 & 50.08 & 1366.33 & 75.34 & 68.11 & 58.71 & 51.62 & 54.74 & 27.06 & 56.72 & 89.36\%\\
TRIM~\citep{TRIM}& \textit{COLING'25} & 56.35 & 55.68 & 46.39 & 1288.31 & 77.80 & 68.22 & \underline{73.10} & 51.69 & 43.84 &  \underline{28.29} &56.58&89.14\%\\
VisionZip~\citep{yang2024visionzip} & \textit{CVPR'25} & 55.47 & \underline{60.82} & \underline{51.29} & \underline{1374.67} & 81.68 & \underline{68.96} & 71.59 & 52.72 & \underline{54.81} & 27.13 & \underline{59.32} & \underline{93.46\%}\\
DivPrune~\citep{DivPrune} & \textit{CVPR'25} & \underline{57.74} & 59.28 & 39.20 & 1368.28 & \underline{86.51} & 68.22 & 72.35 & \underline{54.51} & \textbf{57.55} &  27.39  & 59.12 & 93.14\%   \\
SparseVLM~\citep{zhang2024sparsevlm} & \textit{ICML'25} & 53.80 & 60.14 & 50.68 & 1295.16 & 80.92 & \textbf{69.46} & 68.24 & 53.67 & 50.11 & 24.92 & 57.67 &90.86\% \\
\rowcolor{cyan!25} \textbf{Script (Ours)} & \textbf{\textit{Proposed}} & \textbf{59.28}  & \textbf{61.90} & \textbf{52.93}  & \textbf{1412.08} & \textbf{86.95} & 68.65  & \textbf{75.08} & \textbf{55.20} & 54.31 & \textbf{29.96}  & \textbf{61.49}& \textbf{96.88\%} \\
\noalign{\hrule height 1pt}
\rowcolor{gray!20}
\multicolumn{14}{c}{\textit{Retain 32 Tokens in Average \textcolor{grayred}{($\downarrow$ 94.5\%), $\sim$0.208 TFLOPs}}} \\
FastV~\citep{chen2024fastv} & \textit{ECCV'24} & 49.61 & 54.64 & 43.99 & 1108.35 & 68.59 & 68.26 & 56.48 & 49.85 & 54.22 &  24.50 & 52.56 & 82.80\% \\
TRIM~\citep{TRIM}& \textit{COLING'25} & 54.35 & 53.31 & 43.27 & 1215.42 & 77.54 & 68.22 & \underline{69.54} & 49.12 & 38.41 & \underline{25.92} & 54.05 & 85.15\% \\
VisionZip~\citep{yang2024visionzip} & \textit{CVPR'25} & 53.32 & 58.85 & \underline{49.31} & \underline{1306.97} & 78.18 & 68.62 & 67.51 & 50.76 & \underline{55.47} &  24.95 & \underline{57.23} & \underline{90.17\%}\\
DivPrune~\citep{DivPrune} & \textit{CVPR'25} & \underline{55.11} & 58.93 & 35.99 & 1303.26 & \underline{83.93} & 68.57 & 69.41 & \underline{51.46} & \textbf{57.26} & 25.73  & 57.16 & 90.05\% \\
SparseVLM~\citep{zhang2024sparsevlm} & \textit{ICML'25} & 51.74 & \underline{59.64} & 48.64 & 1181.05 & 77.27 & \underline{69.35} & 63.25 & 51.28 & 49.62 & 23.38 & 55.32 &87.16\% \\
\rowcolor{cyan!25} \textbf{Script (Ours)} & \textbf{\textit{Proposed}} & \textbf{57.58} & \textbf{61.28} & \textbf{50.38} & \textbf{1338.27} & \textbf{86.77}  & \textbf{68.75} & \textbf{73.25} & \textbf{53.10} & 53.77 & \textbf{27.57} & \textbf{59.93} & \textbf{94.42\%} \\
\noalign{\hrule height 1pt}
\rowcolor{gray!20}
\multicolumn{14}{c}{\textit{Retain 16 Tokens in Average \textcolor{grayred}{($\downarrow$ 97.3\%), $\sim$0.103 TFLOPs}}} \\
FastV~\citep{chen2024fastv} & \textit{ECCV'24} & 44.08 & 41.41 & 32.04 & 898.83 & 59.89 & 68.51 & 52.06 & 42.15 &  52.16 &  16.24 & 45.35 & 71.45\%\\
TRIM~\citep{TRIM}& \textit{COLING'25} & 50.81 & 45.28 & 38.46 & 1085.66 & \underline{81.10} & 66.98 & 61.48 & 39.80 & 35.26 &  20.49 &49.39 &77.82\%\\
VisionZip~\citep{yang2024visionzip} & \textit{CVPR'25} & 50.88 & 52.23 & 42.61 & 1179.56 & 75.11 & 66.98 & 62.48 & 45.77 & 54.78 & \underline{20.69} & \underline{53.05} & \underline{83.58\%}\\
DivPrune~\citep{DivPrune} & \textit{CVPR'25} & \underline{51.10} & 53.09 & 31.76 & \underline{1235.71} & 75.98 & \textbf{69.21} & \underline{63.89} & 41.15 & \textbf{55.04} &  19.86  & 52.29 & 82.38\%  \\
SparseVLM~\citep{zhang2024sparsevlm} & \textit{ICML'25} & 46.16 & \underline{54.16} & \underline{42.93} & 987.32 & 69.00 & 68.26 & 58.72 & 41.78 & 49.47 &  21.84 & 50.17 &79.04\% \\
\rowcolor{cyan!25} \textbf{Script (Ours)} & \textbf{\textit{Proposed}} & \textbf{54.42} & \textbf{57.02} & \textbf{45.98} & \textbf{1258.55} & \textbf{84.44}  & \underline{69.12} & \textbf{68.77} & \textbf{47.65} & \underline{54.09} &  \textbf{23.71} & \textbf{56.81} & \textbf{89.51\%} \\
\noalign{\hrule height 1pt}
\end{tabular}
}
\end{table*}


\begin{equation}
\boldsymbol{S}^{(v)}_{ij} = \frac{\bar{\mathbf{h}}_i^{\mathrm{src}} \cdot \bar{\mathbf{h}}_j^{\mathrm{dst}}}
{\lVert \bar{\mathbf{h}}_i^{\mathrm{src}} \rVert \lVert \bar{\mathbf{h}}_j^{\mathrm{dst}} \rVert}.
\end{equation}

In this way, $G$ explicitly structures token redundancy, with similarity scores providing insights into both local and global redundancy. 
Notably, this bipartite graph design reduces computational cost by up to 75\% compared to conventional similarity-based pruning methods. As empirically validated in Section~\ref{sec:localredundancy}, most redundancy lies in local neighborhoods and is thus well-captured by the bipartite approximation effectively. Overall, $V_{\text{src}}$ and $V_{\text{dst}}$ jointly offer a structured, low-cost representation of $\boldsymbol{H}_v$ for redundancy modeling.

\paragraph{Redundancy Scoring.}
To identify redundant tokens, we retain all similarity edges above a threshold $\tau$, forming a pruned subgraph $G_\tau$, where each node is connected to highly similar neighbors.
For a token $t_i \in V_{\text{src}}$, we define its degree as: $d(t_i) = \sum_{j \in V_{\text{dst}}} \mathbb{I}\{\boldsymbol{S}^{(v)}_{ij} \ge \tau\}$, where $\mathbb{I}$ denotes the indicator function, returning 1 if the condition is true and 0 otherwise.
Nodes with high degrees in $G_\tau$ are identified as structurally redundant, typically indicating that tokens represent repeated visual content in either local or long-range contexts. 
To further distinguish tokens with similar degrees, we use the average similarity $\mu(t_i)$ between token $t_i$ and its neighbors in $G_\tau$, which captures their typical similarity. For cases where $t_i$ is an isolated node in $G_\tau$, we define a fallback similarity $\tilde{\mu}(t_i)$ based on the mean similarity between $t_i$ and all tokens in $V_{\text{dst}}$ or $V_{\text{src}}$ under the full graph $G$. 
The final redundancy score is then defined as: 
\begin{equation}
\mathrm{score}(t_i) =
\begin{cases}
d(t_i) \cdot \exp \left( \gamma (\mu(t_i) - \tau) \right), & d(t_i) > 0 \text{~in~} G_\tau, \\
\tilde{\mu}(t_i) \text{~in~} G, & \text{otherwise},
\end{cases}
\label{eq:score}
\end{equation}
where $\gamma$ is a tunable scaling factor.
The score, computed using $d(t_i)$ and $\mu(t_i)$ with exponential weighting, is designed to improve ranking contrast between highly and moderately redundant tokens.

\paragraph{Pruning Rate and Token Removal.}

Based on the computed redundancy scores, we rank all visual tokens in descending order and discard the top $p$ most redundant ones, where $p$ denotes the pruning ratio.  
This yields a structurally pruned candidate set $\tilde{\boldsymbol{H}}_v$, which is then combined with the token set semantically aligned with the query, produced in the next stage, to jointly form $\tilde{\boldsymbol{H}}_v^*$.


\subsection{Query-Conditioned Semantic Pruning}
\label{sec:query-kernel}

While the GSP module eliminates visually redundant tokens, it does not guarantee semantic alignment with the user query. To address this limitation, we introduce the QCSP module, which comprises QCRS for each token and diversity-preserving selection via DPP based on the relevance scores.

\paragraph{Query-Conditioned Relevance Scoring.}  
We begin by computing an average query embedding $\mathbf{h}^{(q)}_\mu$ via mean pooling over $\boldsymbol{H}_q$. Each visual token $\mathbf{h}_i$ is then scored by cosine similarity to $\mathbf{h}^{(q)}_\mu$, yielding a raw query relevance vector $\mathbf{r}_{\text{raw}} = [r_1 \dots r_n]$ (see Eq.~\ref{eq:vqr}).  
To ensure consistency across samples, we apply min-max normalization to obtain $\mathbf{r}_{\text{norm}} \in [0, 1]^n$.

\paragraph{Query-Conditioned Kernel Construction.}
To support diversity-preserving selection of query-relevant tokens, we construct a query-aware DPP kernel that integrates token relevance with visual similarity. Importantly, visual similarity in this context helps refine query-relevant token selection rather than explicitly promote visual diversity, which is addressed by the GSP module.

Specifically, let $\boldsymbol{S}^{\prime (v)} \in \mathbb{R}^{n \times n}$ be the cosine similarity matrix of $\ell_2$-normalized visual token embeddings:
\begin{small}
\begin{equation}
\boldsymbol{S}^{\prime(v)}_{ij} = \left\langle \frac{\mathbf{h}_i}{\lVert \mathbf{h}_i \rVert}, \frac{\mathbf{h}_j}{\lVert \mathbf{h}_j \rVert} \right\rangle.
\end{equation}    
\end{small}

\renewcommand{\multirowsetup}{\centering}
\begin{table*}[t]
\renewcommand{\arraystretch}{1.2}
\caption{\textbf{Performance comparisons on }LLaVA-NeXT-7B\textbf{~\citep{liu2024llava-next} across 9 image understanding benchmarks}. The best results in each setting are \textbf{bolded}, and the second-best are \underline{underlined}.}
\label{tab:llava-next}
\setlength{\tabcolsep}{5pt}
\resizebox{\textwidth}{!}{
\begin{tabular}{lc|cccccccccc|cc}
\noalign{\hrule height 1pt}
\textbf{Method} & \textbf{\quad Venue\quad} & \textbf{GQA} & \textbf{MMB} & \textbf{MMB}$^{\text{CN}}$ & \textbf{MME} & \textbf{POPE} & \textbf{SQA}$^{\text{IMG}}$ & \textbf{VQA}$^{\text{V2}}$ & \textbf{VQA}$^{\text{Text}}$ & \textbf{VizWiz} & \textbf{MMVet} &{\textbf{\quad Acc.\quad}} & {\textbf{Relative}}\\
\noalign{\hrule height 1pt}
\rowcolor{gray!20}
\multicolumn{14}{c}{\textit{Upper Bound, 2,880 Tokens (100\%), $\sim$20.825 TFLOPs}}\\
\textcolor{gray!80}{LLaVA-NeXT-7B}~\citep{liu2024llava-next} & \textcolor{gray!80}{\textit{CVPR'24}} & \textcolor{gray!80}{62.83} & \textcolor{gray!80}{65.81} & \textcolor{gray!80}{57.65} & \textcolor{gray!80}{1504.72} & \textcolor{gray!80}{86.92} & \textcolor{gray!80}{67.59} & \textcolor{gray!80}{81.20} & \textcolor{gray!80}{60.38} & \textcolor{gray!80}{55.65}  &  \textcolor{gray!80}{41.11} & \textcolor{gray!80}{65.43}  &  \textcolor{gray!80}{100\%} \\
\noalign{\hrule height 1pt}
\rowcolor{gray!20}
\multicolumn{14}{c}{\textit{Retain 640 Tokens in Average \textcolor{grayred}{($\downarrow$  77.8\%), $\sim$ 4.627 TFLOPs}}} \\
FastV~\citep{chen2024fastv} & \textit{ECCV'24} & 58.93 & 63.14 & 53.88 & 1412.86 & 79.96 & 67.32 & 77.06 & 58.15  & 53.94 & 39.17 & 62.22 & 95.09\%  \\
TRIM~\citep{TRIM}& \textit{COLING'25} & \underline{61.13} & \underline{65.83} & 55.79 & \underline{1473.62} & \underline{86.92} & 66.71 & 78.36 &54.85  & 54.12 &  37.16 & 63.46 & 96.98\% \\
VisionZip~\citep{yang2024visionzip} & \textit{CVPR'25} & 60.84 & 64.52 & 56.28 & 1408.67 & 85.86 & 66.36 & 79.13 & \textbf{59.90} & 55.46 &  38.79 & 63.76 & 97.44\%  \\
DivPrune~\citep{DivPrune} & \textit{CVPR'25} & 60.58 & 65.77 & \underline{57.31} & 1457.48 & 86.69 & 66.76 & \underline{79.26} &  57.04 & \underline{55.62} & 38.16 & \underline{64.01} & \underline{97.82\%}  \\
SparseVLM~\citep{zhang2024sparsevlm} & \textit{ICML'25} & 60.32 & 65.66 & 56.81 & 1426.83 & 85.98 & \underline{67.27} & 77.11 &  \underline{59.64} & 55.58 &  \underline{39.48} & 63.92 & 97.69\% \\
\rowcolor{cyan!25} \textbf{Script (Ours)} & \textbf{\textit{Proposed}} &\textbf{62.64} & \textbf{65.98} &\textbf{57.76} & \textbf{1481.98} & \textbf{87.65} & \textbf{67.43} &  \textbf{80.47} & 58.92 & \textbf{55.71} & \textbf{41.90} & \textbf{65.35} & \textbf{99.88\%}  \\
\noalign{\hrule height 1pt}
\rowcolor{gray!20}
\multicolumn{14}{c}{\textit{Retain 320 Tokens in Average \textcolor{grayred}{($\downarrow$ 88.9\%), $\sim$2.314 TFLOPs}}} \\
FastV~\citep{chen2024fastv} & \textit{ECCV'24} & 52.17 & 53.87 & 44.51 & 1178.61 & 72.54 & 65.67 & 66.52 & 52.30 & 51.33 & 26.58 & 54.44 & 83.21\%  \\
TRIM~\citep{TRIM}& \textit{COLING'25} & \underline{59.94} & 63.58 & 51.01 & \underline{1426.49} & \underline{86.17} & 66.22 & 74.27 & 51.02 & 53.94 & 32.77 & 61.02 & 93.27\%  \\
VisionZip~\citep{yang2024visionzip} & \textit{CVPR'25} & 59.11 & 62.89 & 53.39 & 1360.61 & 84.44 & 65.71 & 76.41 & \textbf{58.53} & \underline{55.27} &  36.25 & 62.00 & 94.76\%  \\
DivPrune~\citep{DivPrune} & \textit{CVPR'25} & 59.24 & 64.03 & \underline{55.58} & 1418.46 & 84.92 & \underline{67.11} & \underline{77.24} & 56.17 & 54.97 &  35.72 & \underline{62.59} & \underline{95.66\%}  \\
SparseVLM~\citep{zhang2024sparsevlm} & \textit{ICML'25} & 58.75 & \underline{64.18} & 54.86 & 1399.77 & 85.16 & 66.14 & 75.62 & 56.55 & 55.22 & \underline{37.94} & 62.44 & 95.43\%  \\
\rowcolor{cyan!25} \textbf{Script (Ours)} & \textbf{\textit{Proposed}} & \textbf{61.36} & \textbf{65.39} & \textbf{55.82} & \textbf{1452.85} & \textbf{87.22} & \textbf{67.80} & \textbf{78.45} & \underline{57.24} & \textbf{55.51} & \textbf{38.12} & \textbf{63.95}& \textbf{97.78\%}   \\
\midrule
\rowcolor{gray!20}
\multicolumn{14}{c}{\textit{Retain 160 Tokens in Average \textcolor{grayred}{($\downarrow$ 94.4\%), $\sim$1.156 TFLOPs}}} \\
FastV~\citep{chen2024fastv} & \textit{ECCV'24} & 48.19 &  47.96 & 39.77 & 1083.20 & 68.81 & 64.07 & 61.42 & 48.92 & 48.04 & 22.04 & 50.34 & 76.93\%  \\
TRIM~\citep{TRIM}& \textit{COLING'25} & \underline{57.36} & 61.53 & 47.79 & 1279.44 & 84.58 & 65.51 & 71.04 & 45.94 & 52.92 & 29.82 & 58.05 & 88.71\%  \\
VisionZip~\citep{yang2024visionzip} & \textit{CVPR'25} & 56.28 & 59.72 & 51.33 & 1342.84 & 84.06 & 65.89 & 73.49 & \underline{54.32} & 53.68 &  34.59 & 60.05 & 91.78\%  \\
DivPrune~\citep{DivPrune} & \textit{CVPR'25} & 55.62 & \underline{62.97} & \underline{53.51} & 1359.82 & 81.05 & \underline{66.28} & \underline{74.48} & 54.18 & \underline{54.87} &  32.38 & 60.33 & 92.21\%  \\
SparseVLM~\citep{zhang2024sparsevlm} & \textit{ICML'25} & 53.27 & 62.93 & 53.45 & \underline{1362.86} & \underline{84.83} & 66.08 & 71.83 & 53.85 & 54.61 & \underline{34.81} & \underline{60.38} & \underline{92.28\%}  \\
\rowcolor{cyan!25} \textbf{Script (Ours)} & \textbf{\textit{Proposed}} & \textbf{60.73} & \textbf{64.20} & \textbf{53.67} & \textbf{1423.89} & \textbf{87.19} & \textbf{67.44} & \textbf{76.71} & \textbf{55.28} & \textbf{55.19} & \textbf{36.38} &  \textbf{62.80}  & \textbf{95.98\%}     \\
\noalign{\hrule height 1pt}
\end{tabular}}
\end{table*}

\noindent Then, we define a query-conditioned kernel $\tilde{\boldsymbol{L}} \in \mathbb{R}^{n \times n}$ by reweighting the pairwise visual similarity matrix $\boldsymbol{S}^{\prime(v)}$ with normalized token-to-query relevance scores $\mathbf{r}_{\text{norm}}$: $\tilde{\boldsymbol{L}} = \mathrm{diag}(\mathbf{r}_{\text{norm}}) \cdot \boldsymbol{S}^{\prime(v)} \cdot \mathrm{diag}(\mathbf{r}_{\text{norm}})$. Intuitively, $\tilde{\boldsymbol{L}}$ captures diversity within token relevance, promoting token selections that are both query-aligned and diverse. Its symmetric and positive semi-definite structure makes it well-suited for $k$-DPP-based selection \footnote{$k$ is not a tunable hyperparameter but a direct control of the number of retained tokens. It is conceptually equivalent to the pruning ratio $p$ in GSP, with the correspondence $k = (1 - p)\,n$, where $n$ is the number of original tokens.}.




\paragraph{Greedy Token Selection via Kernel Decomposition}

To select a compact yet informative subset of $k$ visual tokens, we perform approximate MAP (maximum a posteriori) inference under a $k$-DPP defined by the query-conditioned kernel $\tilde{\boldsymbol{L}}$. 
This procedure allows us to find tokens that are not only individually relevant to the query but also collectively diverse in semantic content, reducing redundancy while ensuring broad query coverage.
The objective is to select a subset $S$ of $k$ tokens such that the corresponding submatrix $\tilde{\boldsymbol{L}}_S$ maximizes $\det(\tilde{\boldsymbol{L}}_S)$, $    \max _{|S|=k} \operatorname{det}\left(\tilde{\boldsymbol{L}}_S\right)$.
The determinant reflects both semantic relevance and diversity. A higher value indicates greater diversity in the selected subset.

To efficiently approximate the MAP solution\footnote{ MAP for DPP is an NP-hard problem.} for DPP, we adopt the Cholesky-based greedy algorithm~\citep{chen2018fast-map-dpp}. The algorithm starts with an empty set and incrementally adds tokens. At each step, it selects the token that yields the highest gain, where the gain is defined as the increase in the determinant of the kernel submatrix after adding the token.
The algorithm is initialized as follows: $v_i^2 = \tilde{\boldsymbol{L}}_{ii}, \quad \mathbf{u}_i = \mathbf{0} \in \mathbb{R}^k \quad \text{for all } i \in \{1, \dots, n\}$,
where $v_i^2$ represents the importance of token $i$ for selection, and $\mathbf{u}_i$ is a vector that stores the projection of token $i$ onto the subspace spanned by the previously selected tokens.
At each iteration, the token with the largest $v^2$ is selected: $j = \arg\max_{i \in Z \setminus S} v_i^2.$
After the selection, for each $i \notin (S \cup \{j\})$, we compute:
\begin{small}
\begin{equation}
e_i = \frac{\tilde{\boldsymbol{L}}_{ji} - \langle \mathbf{u}_j, \mathbf{u}_i \rangle}{\sqrt{v_j^2 + \epsilon}}, \quad
u_i \leftarrow u_i + e_i \mathbf{e}_j, \quad
v_i^2 \leftarrow v_i^2 - e_i^2,
\end{equation}
\end{small}

\noindent
where $e_i$ represents the projection of token $i$ onto the direction defined by token $j$, $\mathbf{e}_j$ is the $j$-th standard basis vector in $\mathbb{R}^n$, and $\epsilon$ is a small positive constant (e.g., $10^{-6}$) introduced for numerical stability. The update $v_i^2 \leftarrow v_i^2 - e_i^2$ encourages selecting tokens with small projections $e_i$. A smaller $e_i$ indicates that the token is less similar, that is, closer to being orthogonal to the already selected tokens, thereby enhancing the diversity of the subset.

\subsection{Final Token Selection}
GSP and QCSP each address complementary aspects of token selection. GSP explicitly targets visual redundancy, while QCSP captures query relevance. However, neither is sufficient on its own for robust selection. We therefore take their intersection to ensure that the final subset is both visually compact and semantically aligned with the query. If the intersection contains fewer tokens than the required number, we supplement it with additional tokens from QCSP by iteratively increasing $k$ to retrieve more tokens.

\section{Experiments}
\label{sec:experiment}


\begin{table}[t]
\renewcommand{\arraystretch}{1.2}
\small
\begin{center}
\setlength{\tabcolsep}{4pt}
\caption{\textbf{Performance comparison of different pruning methods on Video-LLaVA-7B with 64 frames per video.} The best results in each setting are \textbf{bolded}, and the second-best are \underline{underlined}.}\label{tab:video}
\resizebox{\linewidth}{!}{
\begin{tabular}{ll|ccccccccc|cc}
\toprule
\textbf{Method} & \multirow{2}{*}{\textbf{\quad Venue\quad}} & \textbf{MLVU} & \textbf{MVBench} & \multicolumn{3}{c}{\textbf{LongVideoBench}} & \multicolumn{4}{c|}{\textbf{Video-MME}} & \multirow{2}{*}{\textbf{\quad Acc. \quad}} & \multirow{2}{*}{\textbf{\quad Relative\quad}} \\
\cmidrule(lr){3-3} \cmidrule(lr){4-4} \cmidrule(lr){5-7} \cmidrule(lr){8-11}
\textbf{Metric} & &\textbf{m-avg} & \textbf{test} & \textbf{val} & perception & relation & \textbf{w/o sub} & short & medium & long & \\
\noalign{\hrule height 1pt}
\rowcolor{gray!20}
\multicolumn{13}{c}{\textit{Upper Bound, All 64 $\times$ 169 Tokens} ($\mathbf{100\%}$)} \\
LLaVA-Video-7B & - &\textcolor{gray!80}{67.75} & \textcolor{gray!80}{58.15} & \textcolor{gray!80}{59.07} & \textcolor{gray!80}{65.03} & \textcolor{gray!80}{53.64} & \textcolor{gray!80}{63.66} & \textcolor{gray!80}{76.46} & \textcolor{gray!80}{61.21} & \textcolor{gray!80}{53.19} & \textcolor{gray!80}{62.19}  &   \textcolor{gray!80}{100\%}\\
\midrule
\rowcolor{gray!20}
\multicolumn{13}{c}{\textit{Retain 64 $\times$ 64 Tokens} \textcolor{grayred}{($\downarrow\mathbf{62.1\%}$)}} \\
FastV~\cite{chen2024fastv} & \textit{ECCV'24} & 63.79 &55.81 &56.21 &60.60 &52.15 &61.89 &\underline{73.46} &\underline{59.45} &\textbf{ 52.71} &59.56 &95.77\%  \\
DivPrune~\cite{DivPrune} & \textit{CVPR'25}  & 64.01 &54.82 &\underline{57.97} & \textbf{64.32} &\textbf{53.75} &\underline{61.31} & 72.49 &59.43 &51.02 &\underline{59.90} &\underline{96.32\%} \\
SparseVLM~\cite{zhang2024sparsevlm} & \textit{ICML'25} &\underline{65.33} &\underline{56.68} &56.30 &61.20 &51.57 &60.94 &73.20 &58.78 &51.12 &59.45 &95.59\%  \\
\rowcolor{cyan!25} \textbf{Script (Ours)} &  \textbf{\textit{Purpose}} &  \textbf{66.13} & \textbf{57.74} & \textbf{58.27} & \underline{64.25} & \underline{53.37} & \textbf{62.36} & \textbf{74.67}& \textbf{60.31} & \underline{52.59} & \textbf{61.08} & \textbf{98.22\%} \\
\midrule
\rowcolor{gray!20}
\multicolumn{13}{c}{\textit{Retain 64 $\times$ 32 Tokens} \textcolor{grayred}{($\downarrow\mathbf{81.1\%}$)}} \\
FastV~\cite{chen2024fastv} & \textit{ECCV'24}& 58.75 &52.27 &52.64& 56.88 &48.25 &56.00 &63.38 &55.79 &48.34& 54.70& 87.95\% \\
DivPrune~\cite{DivPrune} & \textit{CVPR'25}  &  \underline{61.44} & 53.37 &\textbf{56.64} & \textbf{62.41} & \underline{51.34} & \underline{59.73} &  \underline{69.69} &  \underline{57.82} & \underline{49.88} &  \underline{58.04}  &  \underline{93.32\%} \\
SparseVLM~\cite{zhang2024sparsevlm} & \textit{ICML'25}& 60.37 & \underline{54.13} &53.47 &58.51 &49.79 &59.09 &69.18 &56.49 &50.38& 56.82 &91.37\%  \\
\rowcolor{cyan!25} \textbf{Script (Ours)} &  \textbf{\textit{Purpose}} & \textbf{62.77} &\textbf{ 55.37}&   \underline{56.35} &  \underline{61.40} & \textbf{52.97} &\textbf{60.35} & \textbf{72.11} & \textbf{58.46} &\textbf{51.20} & \textbf{58.99} & \textbf{95.00\%} \\
\midrule
\rowcolor{gray!20}
\multicolumn{13}{c}{\textit{Retain 64 $\times$ 16 Tokens} \textcolor{grayred}{($\downarrow\mathbf{90.5\%}$)}} \\
FastV~\cite{chen2024fastv} & \textit{ECCV'24}&52.58 &46.57 &46.61 &48.68 &44.79& 49.88 &54.92 &50.30 &45.28 &48.85  & 78.54\% \\
DivPrune~\cite{DivPrune} & \textit{CVPR'25}  &  \underline{58.60} & \underline{51.13} &  \underline{52.27} &\textbf{57.56} &  \underline{47.29} &  \underline{56.37} & \textbf{67.67} &  \underline{53.34} & \underline{48.11} &  \underline{54.72} &  \underline{88.98\%} \\
SparseVLM~\cite{zhang2024sparsevlm} & \textit{ICML'25} & 51.97 &48.87 &47.42 &53.02 &42.89 &49.58 &53.81& 49.45 &46.22&49.25  &  79.19\%\\
\rowcolor{cyan!25} \textbf{Script (Ours)} &  \textbf{\textit{Purpose}}& \textbf{58.77} & \textbf{53.45} &\textbf{52.91} &  \underline{57.13} & \textbf{48.66} & \textbf{57.20} &  \underline{65.97} & \textbf{56.08} & \textbf{49.46} & \textbf{55.52} & \textbf{89.30\%} \\
\noalign{\hrule height 1pt}
\end{tabular}
}
\end{center}
\end{table}

\subsection{Experimental Settings}
We conduct comprehensive experiments to evaluate Script’s performance across diverse MLLMs and benchmarks\footnote{See the appendix for additional experiments, model settings, limitations, and broader impact.}. In all tables, the ``Relative'' column reports performance relative to the unpruned model.

\noindent \textbf{Models and Baselines.}
We select four representative MLLMs: LLaVA 1.5 7B~\citep{liu2023llava}, LLaVA NeXT 7B~\citep{liu2024llava-next}, Video LLaVA 7B~\citep{lin2024video}, and Intern VL3~\citep{zhu2025internvl3}. These models cover both image and video modalities with diverse architectures, providing a comprehensive testbed for evaluating Script’s generalizability. 
In addition, we compare with five state-of-the-art pruning baselines, including FastV~\citep{chen2024fastv}, TRIM~\citep{TRIM}, SparseVLM~\citep{zhang2024sparsevlm}, DivPrune~\citep{DivPrune}, and VisionZip~\citep{yang2024visionzip}. 
Notably, most of these baselines are either query-agnostic or tightly bound to specific model architectures. Some of these architectures are included in our evaluation, enabling fair and direct comparisons.

\noindent \textbf{Benchmarks.} 
We evaluate Script across fourteen widely adopted benchmarks spanning both image and video understanding tasks.
\textbf{Image understanding tasks:} GQA~\citep{hudson2019gqa}, ScienceQA~\citep{lu2022sqa}, VQAv2~\citep{goyal2017making}, TextVQA~\citep{singh2019textvqa}, VizWiz~\citep{gurari2018vizwiz}, MMVet~\citep{mmvet}, MMBench~\citep{liu2024mmbench}, MMBench-CN~\citep{liu2024mmbench}, MME~\citep{fu2023mme}, and POPE~\citep{pope}.  
\textbf{Video reasoning tasks:} LongVideoBench~\citep{wu2024longvideobench}, VideoMME~\citep{videomme}, MLVU~\citep{mlvu}, and MVBench~\citep{MVBench}.

\subsection{Results and Discussions}
\noindent \textbf{LLaVA-1.5-7B.}
Table~\ref{tab:llava} compares visual token pruning methods on LLaVA-1.5-7B across different token budgets. Our proposed method, Script,
consistently outperforms baselines such as FastV, DivPrune, and SparseVLM, demonstrating strong adaptability and robustness across pruning levels and diverse tasks.
At the highest token budget (192 tokens), Script achieves perfect accuracy (100.00\%), outperforming the next-best method, DivPrune, by over 4 percentage points (95.94\%).
When reducing tokens to an intermediate level (64 tokens), Script retains exceptional accuracy at 96.86\%, indicating a minimal performance drop despite significant token reduction. This outperforms SparseVLM by around 6 percentage points, underscoring Script's effective token selection strategy.
Even under the most aggressive pruning (16 tokens, corresponding to 97.2\% pruning), Script still retains high accuracy (89.51\%), significantly outperforming all baselines, highlighting its strong ability to preserve essential semantic and visual information, making it suitable for resource-constrained deployment.

\noindent \textbf{LLaVA-NeXT-7B.}
Table~\ref{tab:llava-next} presents a detailed comparison of visual token pruning methods applied to LLaVA-NeXT-7B across different token budgets.
Our method, Script, consistently delivers the best performance across pruning levels, outperforming all baseline methods.
Under mild pruning (640 tokens, 77.8\% reduction), Script achieves the highest average accuracy (65.1\%) and the best relative performance (99.88\%), indicating minimal information loss despite significant pruning.
Under moderate pruning conditions (320 tokens, 88.9\% reduction), Script maintains robust performance, achieving 63.97\% accuracy and 97.09\% relative performance, outperforming the next-best baseline by 2.12\% points.
Even at the most aggressive pruning level (160 tokens, 94.4\% reduction), Script still maintains superior results, reaching 62.80\% accuracy and 95.98\% relative performance.
These results demonstrate Script’s strong ability to retain essential visual and semantic cues under tight computational budgets.

\noindent \textbf{Video-LLaVA-7B.}
Table~\ref{tab:video} reports the performance of SOTA pruning methods on Video-LLaVA-7B across multiple token budgets.
Script consistently outperforms all baselines across pruning levels, demonstrating robust performance under growing compression ratios.
Under mild pruning (64$\times$64 tokens, 62.1\% reduction), Script achieves 98.22\% average accuracy, demonstrating effective preservation of informative video content essential for accurate multimodal reasoning.
Furthermore, under moderate pruning (64$\times$32 tokens, 81.1\% reduction), Script maintains 95.00\% accuracy, outperforming FastV by roughly 7 percentage points, which demonstrates its effectiveness even under tighter token constraints.
Even under aggressive pruning (64$\times$16 tokens, 90.5\% reduction), Script retains strong performance with 89.30\% average accuracy, significantly surpassing other baselines at similar compression levels.
These results highlight Script’s ability to preserve spatio-temporal semantics, supporting its applicability in real-world video-based multimodal tasks.


    

\subsection{Ablation study}
\label{subsec:ablation}

\begin{wraptable}[10]{r}{0.48\textwidth}
\vspace{-18pt}
\renewcommand{\arraystretch}{1}
\begin{center}
\setlength{\tabcolsep}{2.5pt}
\caption{Ablation on LLava-1.5-7B evaluating the impact of GSP, QCRS, and DPP.}
\vspace{-10pt}
\resizebox{\linewidth}{!}{
\begin{tabular}{ccccccccc}
\toprule
\textbf{GSP} & \textbf{QCRS} & \textbf{DPP} & \textbf{POPE} & \textbf{MME} & \textbf{GQA} & \textbf{SQA$^{IMG}$}& \textbf{ Acc.} & \textbf{Relative} \\
\midrule
\rowcolor{gray!20}
\multicolumn{9}{c}{\textbf{\textit{LLaVA-1.5-7B, Retain 576 Tokens (100\%)}}}\\
 &  &  & \textcolor{gray!80}{86.96} & \textcolor{gray!80}{1507.06} & \textcolor{gray!80}{61.94} & \textcolor{gray!80}{69.41} 
 &\textcolor{gray!80}{73.41} & \textcolor{gray!80}{100\%} \\
\midrule
\rowcolor{gray!20}
\multicolumn{9}{c}{\textit{Retain 16 Tokens in Average ($\downarrow$ 97.3\%), $\sim$0.103 TFLOPs}} \\
\usym{1F5F6} & \usym{1F5F6} & \usym{1F5F6} & 49.92 & 695.70 &   38.69  & 63.76 & 46.78  & 63.73\% \\
\usym{2714}  & \usym{1F5F6} & \usym{1F5F6} & 52.85 & 758.23 & 40.86  & 63.86 & 48.87 & 66.58\% \\
\usym{1F5F6} & \usym{2714} & \usym{1F5F6}  & 77.20 & 1225.27 &  45.82 & 68.67  & 63.23 & 86.14\% \\
 \usym{2714} & \usym{2714}  &\usym{1F5F6} & 77.75 & 1111.92 & 51.11 &  68.57 & 63.25 & 86.17\% \\
\usym{1F5F6} & \usym{2714} & \usym{2714}  & 83.38 & 1252.28 &  53.28 & 68.51  & 66.95 & 91.19\% \\
\rowcolor{cyan!25}
\usym{2714}  & \usym{2714}  & \usym{2714} & \textbf{84.44}  & \textbf{1258.55} &  \textbf{54.42}& \textbf{69.12}  & \textbf{67.73} & \textbf{92.19\%} \\
\bottomrule
\end{tabular}
}
\label{tab:ablation}
\end{center}
\end{wraptable}

To analyze the individual impact of each module, we conduct an ablation study to evaluate the contributions of the GSP, QCRS, and DPP modules. All experiments are conducted under stringent constraints, retaining only 16 tokens on average (97.3\% pruning), with a compute cost of 0.103 TFLOPs.
As shown in Table~\ref{tab:ablation}, naively pruning tokens causes a sharp drop in relative accuracy (average: 63.73\%), underscoring the need for informed token selection strategies.
Incorporating GSP alone raises relative accuracy to 66.58\%, demonstrating the benefit of using a bipartite graph to model redundancy. Integrating QCSP (QCRS + DPP) alone further improves performance, increasing relative accuracy to 91.19\%. Notably, the full configuration achieves 92.19\% of the original accuracy while retaining only 2.7\% of tokens. These results further reflect the distinct yet complementary contributions of each module.


\subsection{Case study}
\begin{figure}[t]
\vspace{-1mm}
  \centering
  \begin{subfigure}[b]{0.19\linewidth}
    \includegraphics[width=\linewidth]{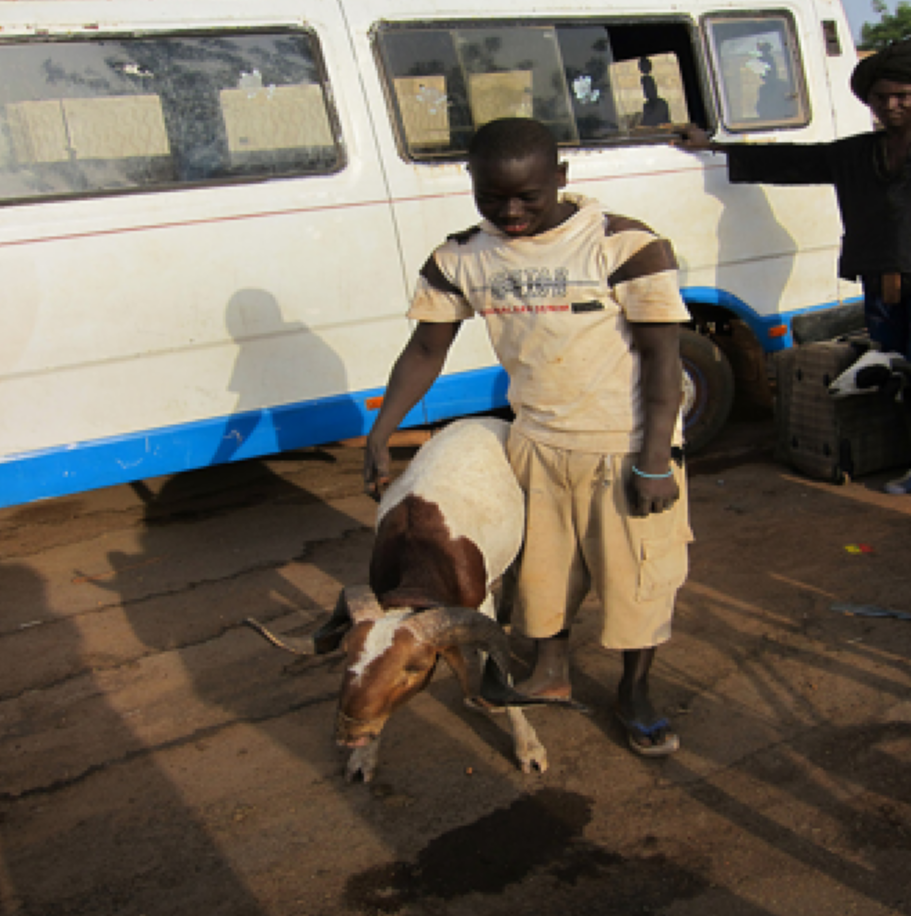}
    \caption*{Input Image}
  \end{subfigure}
  \hfill
  \begin{subfigure}[b]{0.19\linewidth}
    \includegraphics[width=\linewidth]{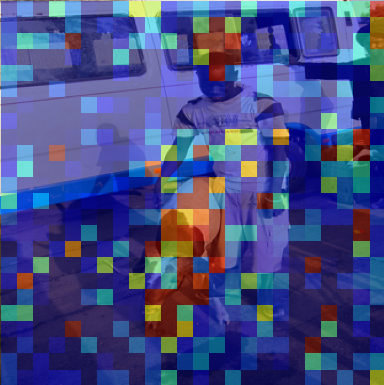}
    \caption*{Attention-based}
  \end{subfigure}
  \hfill
  \begin{subfigure}[b]{0.19\linewidth}
    \includegraphics[width=\linewidth]{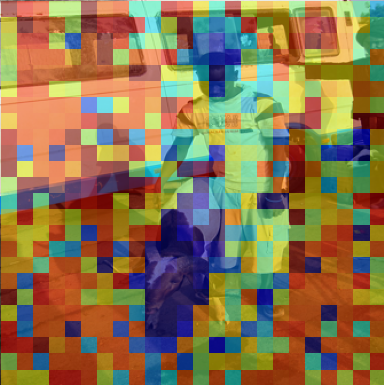}
    \caption*{Divergence-based}
  \end{subfigure}
    \hfill
  \begin{subfigure}[b]{0.19\linewidth}
    \includegraphics[width=\linewidth]{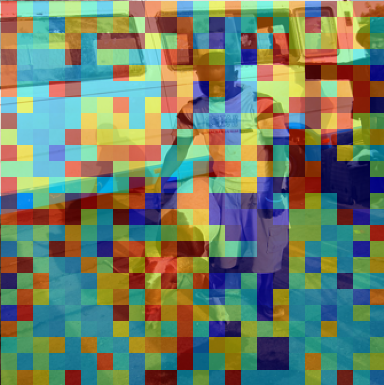}
    \caption*{Similarity-based}
  \end{subfigure}
  \hfill
  \begin{subfigure}[b]{0.19\linewidth}
    \includegraphics[width=\linewidth]{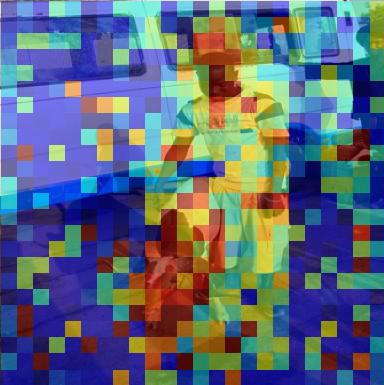}
    \caption*{DPP-based}
  \end{subfigure}
  \caption{\textbf{Visualizations of Pruning Preferences.}  We compute four different pruning scores for a sample from the POPE benchmark using LLaVA-1.5-7B, with the query: \textit{``Is there a person in the image?''} \textbf{\textcolor{red}{Red}} indicates high preference, while \textbf{\textcolor{blue}{blue}} indicates low.}
  \label{fig:relevance-score}
  \vspace{-3mm}
\end{figure}

To analyze how different pruning strategies handle query-relevant vision tokens, we compare four approaches: attention-based (via [\texttt{CLS}]), divergence-based, similarity-based, and DPP-based. Figure~\ref{fig:relevance-score} visualizes the normalized scores from each method on a representative POPE benchmark. 
The scores reveal how each method prioritizes tokens based on distinct selection criteria.

As shown in Figure~\ref{fig:relevance-score}, the four strategies exhibit clearly different token selection behaviors.
The attention-based method emphasizes globally salient regions, such as central objects, but often overlooks contextually relevant backgrounds like human bodies. In contrast, divergence-based and similarity-based methods struggle to prioritize tokens that align with the input query. The similarity-based method, in particular, ignores query information and produces relatively uniform token scores that primarily reflect visual diversity. While such diversity helps preserve overall image content, it fails to capture query-specific relevance. The DPP-based method assigns the highest relevance to human faces and moderately high scores to body-related tokens, demonstrating strong alignment with query semantics. However, similar to attention-based approaches, it lacks diversity in the selected tokens. These observations highlight the importance of combining GSP and QCSP for more balanced and effective token pruning.

\subsection{Efficiency Analysis}
\label{subsec:efficiency}
\begin{wraptable}[11]{r}{0.62\textwidth}
\vspace{-18pt}
\renewcommand{\arraystretch}{1}
\begin{center}
\setlength{\tabcolsep}{2.5pt}
\caption{\textbf{Efficiency comparisons on the POPE benchmark}. We report the theoretical FLOPs, actual runtime, KV cache compression rate (\%), and the achieved accuracy.}
  \label{tab:efficiency}
  \resizebox{\linewidth}{!}{
\begin{tabular}{l|ccccccc}
\toprule
\textbf{Method} & \textbf{\# Token} & \begin{tabular}[c]{@{}c@{}}\textbf{FLOPs}\\ \textbf{(T)}\end{tabular} & \begin{tabular}[c]{@{}c@{}}\textbf{Prefill Time}\\ \textbf{(ms/token)}\end{tabular} & \begin{tabular}[c]{@{}c@{}}\textbf{Decode Time}\\ \textbf{(ms/token)}\end{tabular} & \begin{tabular}[c]{@{}c@{}}\textbf{KV Cache}\\ \textbf{(MB)}\end{tabular} & \begin{tabular}[c]{@{}c@{}}\textbf{GPU Memory}\\ \textbf{(GB)}\end{tabular} & \begin{tabular}[c]{@{}c@{}}\textbf{F1 Score}\\ \textbf{(F1)}\end{tabular} \\
\midrule
\rowcolor{lightgray!25} LLaVA-NeXT-7B & 2880 & 41.7 & 246 & 29 & 1440.0 & 16.7 & 86.8 \\
\midrule
FastV{\small\texttt{(ECCV24)}} & 320 &  4.4 ($\times$9.5) &  54 ($\times$4.6) & 23 ($\times$1.2) &  160.3 & 15.6 & 49.5 \\
PDrop{\small\texttt{(CVPR25)}} & 320 &  4.5 ($\times$9.3) &  55 ($\times$4.5) & 24 ($\times$1.2) &  160.2 & 15.6 & 60.8 \\
SparseVLM{\small\texttt{(ICML25)}} & 320 &  4.5 ($\times$9.3) &  71 ($\times$3.5) & 25 ($\times$1.1) &  161.2 & 18.6 & 76.9 \\
VisionZip{\small\texttt{(CVPR25)}} & 320 &  4.2 ($\times$9.9) &  38 ($\times$6.6) & \textbf{22 ($\times$1.3)} &  \textbf{160.0} & 14.8 & 82.3 \\
DivPrune{\small\texttt{(CVPR25)}} & 320 &  4.2 ($\times$9.9) &  38 ($\times$6.6) & \textbf{22 ($\times$1.3)} &  \textbf{160.0} & 13.9 & 84.7 \\
\midrule
\rowcolor{cyan!25} \textbf{Script}{\small\texttt{(Ours)}} & 320 &  \textbf{4.1 ($\times$10)} &  \textbf{35 ($\times$6.8)} & \textbf{22 ($\times$1.3)} &  \textbf{160.0} & \textbf{13.5} & \textbf{86.7} \\
\rowcolor{cyan!25} \quad\quad w/o DPP & 320 &  \textbf{4.1 ($\times$10)} &  38 ($\times$6.6) & \textbf{22 ($\times$1.3)} &  \textbf{160.0} & 14.0 & 86.5 \\
    \bottomrule
    \end{tabular}
}
\end{center}
\end{wraptable}

We comprehensively evaluate recent token pruning methods on the POPE benchmark, focusing on computation cost, inference latency, memory usage, and accuracy.
As shown in Table~\ref{tab:efficiency}, the uncompressed LLaVA-NeXT-7B requires 41.7T FLOPs, with 246ms prefill and 29ms decode latency per token. It consumes 1440MB KV cache and 16.7GB peak GPU memory, achieving an F1 score of 86.8.
In contrast, Script reduces FLOPs to 4.1T (10× reduction), prefill latency to 35ms (6.8× faster), and decode latency to 22ms. It also lowers KV cache to 160MB and memory to 13.5GB, while maintaining an F1 score of 86.7. Without DPP, Script maintains similar efficiency and achieves a slightly lower F1 score of 86.5. 
Compared to other baselines, Script achieves the lowest prefill/decode latency, the smallest KV cache, and the best overall F1 score. For instance, FastV and PDrop suffer major accuracy loss (F1: 49.5 and 60.8), while VisionZip and DivPrune perform better (F1: 82.3 and 84.7) but at higher computational and memory cost. Overall, Script achieves the best balance between efficiency and accuracy, outperforming all baselines across nearly all metrics.
\section{Conclusion}
\label{sec:conclusion}
In this paper, we introduce \textbf{Script} which reduces inference cost by selecting a compact yet semantically meaningful subset of visual tokens, conditioned on the user query.
Script combines visual redundancy and query relevance perspectives through four stages: (1) graph-structured filtering to remove visually redundant tokens, (2) query-conditioned relevance scoring, (3) diversity-promoting subset selection based on relevance scores using DPP, and (4) intersection-based fusion to retain tokens that are both visually compact and semantically aligned with the query.
Its architecture-agnostic, plug-and-play design enables seamless integration into existing MLLMs without requiring retraining or architectural changes.
Extensive experiments on fourteen vision-language benchmarks show that Script consistently reduces computational overhead and latency while preserving, or even improving, task performance under aggressive pruning settings.
Future work includes extending Script to additional modalities such as audio and video in spatial-temporal contexts, and exploring adaptive pruning schedules that dynamically adjust to input complexity.

\newpage

\bibliography{main}
\bibliographystyle{tmlr}

\appendix
\newpage
\begin{center}
  \LARGE
  \textbf{Script: Graph-Structured and Query-Conditioned Semantic Token Pruning for Multimodal Large Language Models} 
  \textbf{Appendix} 
\end{center}

\renewcommand{\thetheorem}{A.\arabic{theorem}}
\renewcommand{\thefigure}{A.\arabic{figure}}
\renewcommand{\thetable}{A.\arabic{table}}
\renewcommand{\theequation}{A.\arabic{equation}}
\setcounter{figure}{0}
\setcounter{theorem}{0}
\setcounter{table}{0}
\setcounter{equation}{0}
\section{Notation Overview}

To facilitate clarity and improve readability, we provide a detailed glossary of all mathematical symbols used throughout this paper, as shown in Table~\ref{tab:symbol-glossary}. The table serves as a centralized reference point, systematically organizing every notation involved in the core components of the Script framework, including equations, algorithmic modules, and architectural definitions.
The glossary covers all key components of the proposed framework, including visual encoding, query representation, relevance scoring, similarity computation, DPP-based token selection, structural pruning, and auxiliary constants. 
%
This table is intended to improve the clarity of mathematical derivations, reduce cross-referencing overhead, and support accurate reproduction and further development of the method.

\begin{table*}[ht]
\centering
\caption{Detailed symbol glossary used throughout the \emph{Script} framework.}
\label{tab:symbol-glossary}
\resizebox{\linewidth}{!}{
\begin{tabular}{l|l|l} 
\toprule
\textbf{Symbol}                                                         & \textbf{Type / Dim}         & \textbf{Definition}                                                              \\ 
\hline
$\boldsymbol{X_v}$                                                      & Image / video input         & Raw visual input passed into the vision encoder $f_v$.                           \\
$f_v$                                                                   & Function                    & Vision encoder extracting patch-level features from $\boldsymbol{X_v}$.          \\
$g$                                                                     & Function                    & Multimodal projector mapping visual features to token embeddings.                \\
$n, d, \ell, m$                                                         & Integers                    & \#tokens (input, dim, query length, retained) respectively.                      \\
$\boldsymbol{H}_v$                            & Matrix                      & Sequence of $n$ visual tokens, each $d$-dimensional, before pruning.             \\
$\boldsymbol{H}_q$                             & Sequence of vectors         & Tokenized user query of length $\ell$.                                           \\
$\tilde{\boldsymbol{H}}_v$                                              & Matrix ($m \times d$)       & Visual token subset after pruning ($m < n$).                                     \\
$f_\varphi$                                                             & Function (LLM)              & Frozen language model consuming $[\tilde{\boldsymbol{H}}_v; \boldsymbol{H_q}]$.  \\
$L(\cdot,\cdot)$                                                        & Scalar loss                 & Task loss comparing model outputs before and after pruning.                      \\
$\mathbf{h}_i$                                                          & Vector ($\in \mathbb{R}^d$) & $i$-th visual token embedding in $\boldsymbol{H_v}$.                             \\
$\mathbf{h}_\mu^{(q)}$                                                            & Vector ($\in \mathbb{R}^d$) & Mean pooled embedding of the query tokens.                                       \\
$q_i$                                                                   & Scalar                      & Query-conditioned relevance score of token $i$.                                  \\
$\boldsymbol{Q} = \operatorname{diag}(q_1,\dots,q_n)$                   & Diagonal matrix             & Relevance weighting matrix for visual tokens.                                    \\
$S_{ij}$                                                                  & Scalar                      & Cosine similarity between tokens $i$ and $j$.                                    \\
$\boldsymbol{S}$                                                        & Matrix ($n \times n$)       & Symmetric similarity matrix among visual tokens.                                 \\
$\boldsymbol{L} = \boldsymbol{Q}^{1/2} \boldsymbol{S} \boldsymbol{Q}^{1/2}$ & Matrix ($n \times n$)       & DPP kernel combining relevance and diversity.                                    \\
$\mathcal{I},\ |\mathcal{I}|=m$                                                             & Index set                   & Token indices selected for retention.                                            \\
$\mathcal{I}^*$                                                                   & Index set                   & Optimal index subset maximizing log-det of DPP kernel.                           \\
$\tilde{\boldsymbol{L}}$                                                & Matrix                      & Query-conditioned DPP kernel with temperature scaling.                           \\
$\mathbf{r},\tilde{\mathbf{r}}$                                         & Vectors ($\in [0,1]^n$)     & Raw and temperature-scaled relevance scores.                                     \\
$V_\mathrm{src}, V_\mathrm{dst}$                                        & Token subsets               & Bipartite partitions used in graph-based redundancy pruning.                     \\
$G=(V_\mathrm{src}, V_\mathrm{dst}, \boldsymbol{S})$                    & Graph                       & Token similarity graph for structural filtering.                                 \\
$d_i^2$                                                                 & Scalar                      & Residual variance of token $i$ under low-rank projection.                        \\
$u_i$                                                                   & Vector ($\in \mathbb{R}^T$) & Coefficient vector for token $i$ in basis space.                                 \\
$e_i$                                                                   & Scalar                      & Normalizer in Cholesky-style update.                                             \\
$\boldsymbol{C} \in \mathbb{R}^n \times T$                              & Matrix                      & Basis matrix for low-rank approximation of DPP kernel.                           \\
$T$                                                                     & Integer                     & Target rank or DPP subset size.                                                  \\
$\tau$                                                                  & Threshold (scalar)          & Redundancy threshold for similarity pruning.                                     \\
$\epsilon$                                                              & Constant ($10^-6$)          & Numerical stability term in projection update.                                   \\
$\odot$                                                                 & Operator                    & Element-wise (Hadamard) product.                                                 \\
$\langle \cdot, \cdot \rangle$                                          & Operator                    & Inner product used for cosine similarity.                                        \\
$\lVert \cdot \rVert$                                                   & Operator                    & Euclidean norm used in normalization.                                            \\
\bottomrule
\end{tabular}
}
\end{table*}

\renewcommand{\thetheorem}{B.\arabic{theorem}}
\renewcommand{\thefigure}{B.\arabic{figure}}
\renewcommand{\thetable}{B.\arabic{table}}
\renewcommand{\theequation}{B.\arabic{equation}}
\setcounter{figure}{0}
\setcounter{theorem}{0}
\setcounter{table}{0}
\setcounter{equation}{0}

\section{Diversity in Determinantal Point Processes: Geometric Intuition and Theoretical Analysis}
\label{sec:equiv}

\subsection{Notation.}
\label{par:notation}
To facilitate clear and consistent exposition, we first define the key symbols and assumptions used throughout the paper and illustrate each with a simple example or explanation.

\begin{itemize}
\item $\lVert\cdot\rVert_2$ \textbf{(Euclidean norm):} The standard $\ell_2$ norm for vectors in $\mathbb{R}^d$.\newline
\textit{Example:} $\lVert(3,4)\rVert_2 = \sqrt{3^2 + 4^2} = 5$.

\item $\lambda_i(A)$ \textbf{($i$-th largest eigenvalue of $A$):} For a symmetric matrix $A\in\mathbb{R}^{k\times k}$, we order its eigenvalues as $\lambda_1(A) \ge \lambda_2(A) \ge \cdots \ge \lambda_k(A)$. Since all our kernels will be positive semi-definite (PSD), all $\lambda_i \ge 0$.

\item $\mathbf{1}_k$ \textbf{(all-ones vector):} A $k$-dimensional column vector with all entries equal to one, i.e., $(1,\dots,1)^\top \in \mathbb{R}^k$.
\end{itemize}

Let $\mathcal{T} = {v_1,\dots,v_n} \subset \mathbb{R}^d$ denote a collection of feature vectors. We assume that each $v_i$ has unit norm: $\lVert v_i \rVert_2 = 1$ for all $i$. This means that each vector lies on the unit hypersphere in $\mathbb{R}^d$.

\textit{Example:} The vector $v_i = (1, 0, \dots, 0)^\top$ has $\lVert v_i \rVert_2 = 1$, and thus is a valid member of $\mathcal{T}$.

We adopt this unit-norm convention to ensure that inner products $v_i^\top v_j$ correspond exactly to cosine similarities, simplifying the interpretation of the similarity kernel introduced next. 
Meanwhile, this normalization remains fixed throughout the paper.

\paragraph{Similarity Kernel.}
\label{par:kernel}
Given a collection of unit-norm vectors $\mathcal{T} = [v_1, \dots, v_n] \subset \mathbb{R}^d$ as introduced above, we define a similarity kernel matrix $L \in \mathbb{R}^{n \times n}$ whose $(i,j)$-entry captures the cosine similarity between $v_i$ and $v_j$:
\begin{equation}
L_{ij} = v_i^\top v_j, \qquad \text{for all } 1 \le i,j \le n.
\label{eq:kernel}
\end{equation}

Since all $v_i$ lie on the unit hypersphere, we have $v_i^\top v_j \in [-1,1]$. In particular, $L_{ii} = v_i^\top v_i = 1$, and the more similar two vectors are, the closer $L_{ij}$ is to 1.

We can express the full kernel compactly using matrix notation. Let $V = [v_1\ v_2\ \cdots\ v_n] \in \mathbb{R}^{d \times n}$ be the matrix whose columns are the feature vectors.

Then:
\begin{equation}
L = V^\top V.
\end{equation}
This representation immediately implies that $L$ is symmetric and positive semi-definite (PSD), because for any $x \in \mathbb{R}^n$, we have:
\begin{equation}
x^\top L x = x^\top V^\top V x = \lVert Vx \rVert_2^2 \ge 0.
\end{equation}

Therefore, $L$ satisfies the standard spectral properties of Gram matrices: all eigenvalues of $L$ are real and non-negative, and $L$ admits an orthonormal eigendecomposition. These properties are critical for the DPP model discussed next.

\subsection{Determinant as a Diversity Objective}
\label{par:det-as-volume}

We now connect the determinant of a kernel submatrix with geometric diversity. Specifically, we show that for any subset of $k$ unit vectors, the determinant of their Gram matrix equals the square of the volume of the parallelotope they span.

\begin{proposition}[Determinant--Volume Equivalence]\label{prop:det-vol}
Let $S = [i_1, \dots, i_k] \subseteq [n]$ be an index subset of size $k$. 
Define $V_S = [v_{i_1}, \dots, v_{i_k}] \in \mathbb{R}^{d \times k}$ as the submatrix of feature vectors indexed by $S$, and $L_S = V_S^\top V_S \in \mathbb{R}^{k \times k}$ as their Gram matrix. Then,
\begin{equation}
\det L_S = \det(V_S^\top V_S) = (\operatorname{Vol}_k(V_S))^2,
\label{eq:det-volume}
\end{equation}
where $\operatorname{Vol}_k(V_S)$ is the $k$-dimensional volume of the parallelotope spanned by the column vectors of $V_S$.
\end{proposition}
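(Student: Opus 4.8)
The plan is to factor $V_S$ through an orthonormal basis and reduce both sides of \eqref{eq:det-volume} to the same product of Gram--Schmidt heights. First I would pin down the meaning of $\operatorname{Vol}_k(V_S)$, since this is where the only real subtlety lies: the $k$ columns of $V_S$ span a subspace of dimension at most $k$ sitting inside $\mathbb{R}^d$, so the $k$-dimensional volume is taken to be that induced by Lebesgue measure on this span, which coincides with the classical base-times-height product from the Gram--Schmidt process. When the columns are linearly dependent this volume is $0$; I would handle that degenerate case in parallel with the generic one.

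The computational core is a thin QR factorization $V_S = QR$, where $Q \in \mathbb{R}^{d \times k}$ has orthonormal columns ($Q^\top Q = I_k$) and $R \in \mathbb{R}^{k \times k}$ is upper triangular. Then $L_S = V_S^\top V_S = R^\top Q^\top Q R = R^\top R$, so that $\det L_S = \det(R^\top R) = (\det R)^2 = \prod_{t=1}^k r_{tt}^2$, using that $R$ is square and triangular. This immediately disposes of one side of the identity.

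For the geometric side, I would identify each diagonal entry $|r_{tt}|$ as the $t$-th height in the Gram--Schmidt expansion: the orthonormal vector $q_t$ is the normalized component of $v_{i_t}$ orthogonal to $\operatorname{span}(v_{i_1}, \dots, v_{i_{t-1}})$, and $r_{tt}$ is exactly the length of that orthogonal component, i.e. the distance from $v_{i_t}$ to the span of the earlier vectors. Since the $k$-volume of the parallelotope is the product of these successive orthogonal heights, $\operatorname{Vol}_k(V_S) = \prod_{t=1}^k |r_{tt}|$, and squaring matches $\det L_S$. In the degenerate case some $r_{tt} = 0$, and both $\det L_S$ and $\operatorname{Vol}_k(V_S)$ vanish, so the equality persists; the positive semidefiniteness of $L_S$ (established earlier in \eqref{eq:kernel}) guarantees $\det L_S \ge 0$, so the square root is meaningful throughout.

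The main obstacle is definitional rather than algebraic: rigorously justifying that the $k$-dimensional volume equals the product of successive orthogonal heights (the base-times-height recursion) when the vectors live in an ambient space of strictly larger dimension $d > k$. Everything else is routine linear algebra. As an alternative I could run the argument through the singular value decomposition $V_S = U \Sigma W^\top$, which gives $\det L_S = \prod_{t} \sigma_t^2$ and $\operatorname{Vol}_k(V_S) = \prod_t \sigma_t$ in one stroke; but this route carries the very same definitional content, namely that the product of the singular values is the $k$-volume, so it does not actually sidestep the sole nontrivial point.
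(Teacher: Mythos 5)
Your proof is correct and follows essentially the same route as the paper's: a thin QR factorization $V_S = QR$, the identity $L_S = R^\top R$, and hence $\det L_S = (\det R)^2$. The only difference is that you go further than the paper in justifying the geometric side --- the paper simply invokes $\operatorname{Vol}_k(V_S) = \sqrt{\det(V_S^\top V_S)} = |\det R|$ as a known fact (which, read as a definition, makes the claim nearly tautological), whereas you ground $|\det R| = \prod_t |r_{tt}|$ in the Gram--Schmidt base-times-height recursion and correctly flag this as the one genuinely nontrivial definitional point.
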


\begin{proof}
Let $V_S = QR$ be the QR decomposition of $V_S$, where $Q \in \mathbb{R}^{d \times k}$ has orthonormal columns and $R \in \mathbb{R}^{k \times k}$ is upper triangular. Then,
\begin{equation}
L_S = V_S^\top V_S = R^\top Q^\top Q R = R^\top R,
\end{equation}
using the fact that $Q^\top Q = I_k$. Taking determinants on both sides gives:
\begin{equation}
\det L_S = \det(R^\top R) = (\det R)^2.
\end{equation}
Geometrically, the $k$-dimensional volume satisfies
\begin{equation}
\operatorname{Vol}_k(V_S)=\sqrt{\det(V_S^\top V_S)}=|\det R|,
\end{equation}
hence $\det L_S = (\operatorname{Vol}_k(V_S))^2$.
\end{proof}

This equivalence forms the foundation for interpreting $\det L_S$ as a natural diversity objective: it assigns higher values to sets of vectors that are more `spread out' or `orthogonal' in space.

\subsection{Diversity Upper Bound via Hadamard Inequality}\label{par:hadamard}

The determinant of any Gram matrix $L_S$ formed from unit-norm vectors is upper bounded by 1. This follows from the classical Hadamard inequality, which gives a constraint on the determinant of a positive semi-definite matrix in terms of its diagonal entries.

\begin{corollary}[Hadamard Bound]
\label{cor:hadamard}
Let $L_S$ be the Gram matrix defined above, formed from $k$ unit vectors. Then:
\begin{equation}
  \det L_S \le 1,
  \qquad
  \det L_S = 1 \quad\Longleftrightarrow\quad
  v_{i_p}^\top v_{i_q} = 0 \text{ for all } p \ne q \text{ and } k \le d.
  \label{eq:hadamard}
\end{equation}
\end{corollary}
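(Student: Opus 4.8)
The plan is to derive the bound from the spectral properties of $L_S$ established in the preceding subsection, combined with a trace computation and the arithmetic--geometric mean (AM--GM) inequality. Since $L_S = V_S^\top V_S$ is a Gram matrix, it is symmetric and positive semi-definite, so its eigenvalues $\lambda_1(L_S) \ge \cdots \ge \lambda_k(L_S)$ are all real and non-negative. The first key observation is that the unit-norm assumption pins down the trace: because each diagonal entry is $L_{pp} = v_{i_p}^\top v_{i_p} = \lVert v_{i_p} \rVert_2^2 = 1$, we have $\operatorname{tr}(L_S) = \sum_{p=1}^k \lambda_p(L_S) = k$.

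Next I would write the determinant as the product of eigenvalues, $\det L_S = \prod_{p=1}^k \lambda_p(L_S)$, and apply AM--GM to the non-negative numbers $\lambda_1(L_S), \dots, \lambda_k(L_S)$, whose arithmetic mean is fixed at $1$:
\begin{equation}
\det L_S = \prod_{p=1}^k \lambda_p(L_S) \le \left( \frac{1}{k} \sum_{p=1}^k \lambda_p(L_S) \right)^k = \left( \frac{k}{k} \right)^k = 1,
\end{equation}
which is exactly the claimed bound. Equivalently, one could invoke Proposition~\ref{prop:det-vol} and bound the $k$-volume of the parallelotope spanned by unit vectors, but the eigenvalue route makes the subsequent equality analysis transparent.

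For the equality characterization I would analyze when AM--GM is tight. Equality holds if and only if all the $\lambda_p(L_S)$ coincide; together with $\sum_p \lambda_p(L_S) = k$, this forces every eigenvalue to equal $1$. Since $L_S$ is symmetric with all eigenvalues equal to one, it must be the identity, $L_S = I_k$, i.e. $v_{i_p}^\top v_{i_q} = \delta_{pq}$, which yields the stated orthogonality condition $v_{i_p}^\top v_{i_q} = 0$ for $p \ne q$. The converse is immediate: if the selected vectors are mutually orthogonal unit vectors then $L_S = I_k$ and $\det L_S = 1$, closing the equivalence.

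The one point that needs care---and the main, if minor, obstacle---is the dimension constraint $k \le d$. A set of $k$ orthonormal vectors is linearly independent, and $\mathbb{R}^d$ admits at most $d$ linearly independent vectors, so $\det L_S = 1$ can only occur when $k \le d$; conversely, whenever $k > d$ the columns of $V_S$ are necessarily dependent, $L_S$ is rank-deficient, and $\det L_S = 0 < 1$. I would state this explicitly so that the equality clause is read as the conjunction of orthogonality and $k \le d$, consistent with the geometric picture that a full-dimensional unit parallelotope of volume one cannot be formed once $k$ exceeds the ambient dimension.
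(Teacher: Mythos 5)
Your proof is correct, but it takes a genuinely different route from the paper. The paper proves the corollary by directly invoking Hadamard's inequality for positive semi-definite matrices, $\det A \le \prod_i a_{ii}$, noting that the unit-norm assumption makes every diagonal entry of $L_S$ equal to $1$, and then quoting the equality case (mutual orthogonality) together with the rank observation that $k>d$ forces $\det L_S = 0$. You instead give a self-contained spectral argument: the unit diagonal pins $\operatorname{tr}(L_S)=k$, AM--GM on the non-negative eigenvalues yields $\det L_S \le (\operatorname{tr}(L_S)/k)^k = 1$, and the AM--GM equality case forces all eigenvalues to equal $1$, hence $L_S = I_k$ and orthonormality. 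What each buys: the paper's citation of Hadamard is shorter and would survive a non-constant diagonal, whereas your derivation avoids quoting Hadamard as a black box and makes the equality analysis fully rigorous (the paper's phrasing of Hadamard's equality condition, ``the columns of $A$ are orthogonal,'' is a little loose, and your route sidesteps that). Note that the bound $\det L_S \le (\operatorname{tr}(L_S)/k)^k$ is in general \emph{weaker} than Hadamard's $\prod_i a_{ii}$; the two coincide here only because the diagonal is constant, so your argument is tied to the unit-norm setting --- which is all the corollary needs. Your treatment of the $k \le d$ clause matches the paper's. Incidentally, your AM--GM step is essentially Lemma~\ref{lem:amgm} of the paper's own appendix, so the two approaches are reconciled within the paper itself.
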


\begin{proof}
Hadamard's inequality states that for any positive semi-definite matrix $A \in \mathbb{R}^{k \times k}$ with diagonal entries $a_{ii}$,
\[
  \det A \le a_{11} a_{22} \cdots a_{kk},
\]
with equality if and only if the columns of $A$ are orthogonal.

In our case, $L_S = V_S^\top V_S$ is PSD and satisfies $\operatorname{diag}(L_S) = (1,1,\dots,1)$ since each $\lVert v_i \rVert_2 = 1$. 

Therefore,
\begin{equation}
  \det L_S \le 1,
\end{equation}
with equality if and only if the vectors $v_{i_1}, \dots, v_{i_k}$ are mutually orthogonal; this requires $k\le d$. If $k>d$, then $\operatorname{rank}(L_S)\le d<k$ and $\det L_S=0<1$.
\end{proof}

The determinant $\det L_S$ quantifies how linearly independent or ``spread out'' the vectors in $S$ are. The maximum value 1 occurs when the vectors are exactly orthogonal. Therefore, maximizing $\det L_S$ promotes diversity by encouraging the selection of vectors that are as close to orthogonal as possible.

\subsection{Determinantal Point Processes and Diversity Maximization}

We now formally introduce Determinantal Point Processes (DPPs), which are probability distributions over subsets that inherently promote diversity. In the fixed-size or $k$-DPP setting, each subset $S$ of size $k$ is assigned a probability proportional to the determinant of its associated submatrix $L_S$:
\begin{equation}
\mathbb{P}_L(S) \propto \det L_S, \qquad \text{for } |S| = k.
\end{equation}
Specifically, we formulate this as:
\begin{equation}
\mathbb{P}_L(S) = \frac{\det L_S}{\sum\limits_{\substack{T \subseteq [n] \\ |T| = k}} \det L_T}.
\end{equation}
where the denominator sums over all $k$-subsets of $[n]$, ensuring a valid probability distribution.

This probabilistic model encourages the selection of sets $S$ whose vectors are geometrically diverse, since higher determinant values correspond to higher volume (as shown previously). In practice, one often seeks the \ \emph{most probable} subset under this model, known as the maximum a posteriori (MAP) estimate: 
\begin{equation} S_{\mathrm{MAP}} = \arg\max_{|S| = k} \det L_S. \label{eq:dpp-map}
\end{equation}

This optimization problem aims to find the subset of $k$ vectors that span the largest volume parallelotope in $\mathbb{R}^d$, or equivalently, the subset exhibiting maximal geometric diversity. The DPP framework thus provides both a probabilistic model and a concrete objective—$\det L_S$—for achieving diverse subset selection.

\subsection{Redundancy Metrics and Determinant Bounds}\label{par:redundancy}

While the determinant $\det L_S$ captures diversity, we can also assess subset quality via \emph{redundancy} metrics. These measure how similar the vectors in $S$ are to each other, either in the worst-case or on average. Two commonly used metrics are:
\begin{equation}
  \rho_{\max}(S) = \max_{i \ne j \in S} L_{ij},
  \qquad
  \rho_{\mathrm{avg}}(S) = \frac{2}{k(k-1)} \sum_{i < j \in S} L_{ij}.
  \label{eq:rho-def}
\end{equation}

Here, $L_{ij} = v_i^\top v_j$ is the cosine similarity between unit vectors $v_i$ and $v_j$. The value $\rho_{\max}(S)$ measures the most redundant pair (i.e., most similar), while $\rho_{\mathrm{avg}}(S)$ captures the overall similarity level.

We now relate these redundancy measures to $\det L_S$ using spectral bounds. The first bound is derived using Gershgorin's circle theorem.

\begin{lemma}[Gershgorin-Based Lower Bound]
\label{lem:gershgorin}
For any subset $S$ of size $k$, define $\rho_{\infty}(S) := \max_{i \ne j \in S} |L_{ij}|$. Then:
\begin{equation}
  \lambda_{\min}(L_S) \ge 1 - (k-1)\,\rho_{\infty}(S),
  \qquad
  \det L_S \ge [1 - (k-1)\,\rho_{\infty}(S)]_+^{\,k}.
  \label{eq:gershgorin}
\end{equation}
\end{lemma}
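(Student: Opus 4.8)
The plan is to apply Gershgorin's circle theorem directly to the Gram matrix $L_S$ and then convert the resulting eigenvalue bound into the determinant bound by taking a product over the spectrum. The structural facts I would rely on, all established in the preceding subsections, are that $L_S = V_S^\top V_S$ is symmetric positive semi-definite, that every diagonal entry equals $1$ because each $v_i$ has unit norm, and that every off-diagonal entry $L_{ij} = v_i^\top v_j$ satisfies $|L_{ij}| \le \rho_\infty(S)$ by definition of $\rho_\infty(S)$.

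First I would recall Gershgorin's circle theorem: every eigenvalue of $L_S$ lies in at least one disc centered at $L_{ii} = 1$ with radius $R_i = \sum_{j \ne i,\, j \in S} |L_{ij}|$. Since $S$ has $k$ elements, each row contributes exactly $k-1$ off-diagonal terms, so $R_i \le (k-1)\,\rho_\infty(S)$ uniformly in $i$. For any eigenvalue $\lambda$ of $L_S$, disc membership gives $|\lambda - 1| \le R_i \le (k-1)\,\rho_\infty(S)$, hence $\lambda \ge 1 - (k-1)\,\rho_\infty(S)$. Specializing to the smallest eigenvalue yields the first claimed inequality $\lambda_{\min}(L_S) \ge 1 - (k-1)\,\rho_\infty(S)$.

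Next I would pass to the determinant through $\det L_S = \prod_{i=1}^{k} \lambda_i(L_S)$, splitting into two cases according to the sign of $1 - (k-1)\,\rho_\infty(S)$. If this quantity is nonpositive, the truncation $[\,\cdot\,]_+$ makes the right-hand side zero, and the inequality holds trivially since positive semi-definiteness forces $\det L_S \ge 0$. If instead $1 - (k-1)\,\rho_\infty(S) > 0$, then each of the $k$ eigenvalues is bounded below by this positive number, so multiplying the $k$ lower bounds gives $\det L_S \ge [1 - (k-1)\,\rho_\infty(S)]^k$, which coincides with the truncated expression in this regime.

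This is essentially a textbook application of Gershgorin, so there is no deep obstacle; the only point requiring genuine care is the sign handling in the determinant step. One must not blindly multiply the eigenvalue lower bounds, because a negative lower bound carries no useful information about a nonnegative eigenvalue and products of negative numbers can flip sign. The $[\,\cdot\,]_+$ notation in the statement is precisely what licenses the clean case split, and invoking positive semi-definiteness to dispatch the degenerate case is what keeps the bound valid across the full range of $\rho_\infty(S)$.
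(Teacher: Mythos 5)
Your proposal is correct and follows essentially the same route as the paper: a direct application of Gershgorin's circle theorem to the unit-diagonal Gram matrix, followed by taking the product over the spectrum, with positive semi-definiteness handling the case where the bound is nonpositive. Your explicit case split on the sign is a slightly more careful articulation of the same step the paper compresses into the chain $\det L_S \ge (\lambda_{\min}(L_S))^k \ge [1-(k-1)\rho_\infty(S)]_+^k$.
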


\begin{proof}
For each row $i$ of $L_S$, the Gershgorin radius is $r_i=\sum_{j\ne i}|L_{ij}| \le (k-1)\rho_{\infty}(S)$. By Gershgorin's circle theorem, every eigenvalue of $L_S$ lies in at least one disk centered at $1$ with radius $r_i$, hence
\[
  \lambda_{\min}(L_S) \ge 1-\max_i r_i \ge 1-(k-1)\rho_{\infty}(S).
\]
Since $L_S$ is symmetric PSD, its determinant is the product of eigenvalues, so
\[
  \det L_S = \prod_{i=1}^k \lambda_i(L_S) \ge (\lambda_{\min}(L_S))^k \ge [1 - (k-1)\rho_{\infty}(S)]_+^{\,k}.
\]
\end{proof}

Next, we upper bound $\det L_S$ via the arithmetic mean–geometric mean inequality.

\begin{lemma}[AM--GM Upper Bound]
\label{lem:amgm}
Let $\lambda_1, \dots, \lambda_k$ be the eigenvalues of $L_S$. Then:
\begin{equation}
  \det L_S \le \left(\frac{\operatorname{tr}(L_S)}{k}\right)^k = 1.
  \label{eq:amgm}
\end{equation}
\end{lemma}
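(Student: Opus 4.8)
The plan is to reduce the statement to the spectrum of $L_S$ and then invoke the classical arithmetic-mean--geometric-mean (AM--GM) inequality. As established in the Similarity Kernel discussion, $L_S = V_S^\top V_S$ is symmetric and positive semi-definite, so it has real, non-negative eigenvalues $\lambda_1,\dots,\lambda_k \ge 0$; moreover, $\det L_S = \prod_{i=1}^k \lambda_i$ and $\operatorname{tr}(L_S) = \sum_{i=1}^k \lambda_i$. These two spectral identities, together with the non-negativity coming from positive semi-definiteness, are the only structural facts the argument needs.

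First I would pin down the trace. Since each $v_i$ has unit norm, the diagonal entries of $L_S$ satisfy $L_{ii} = v_i^\top v_i = 1$, so $\operatorname{tr}(L_S) = \sum_{i=1}^k L_{ii} = k$. This is exactly what collapses the general bound to the stated constant. Next I would apply AM--GM to the non-negative numbers $\lambda_1,\dots,\lambda_k$, which gives
\[
\left(\prod_{i=1}^k \lambda_i\right)^{1/k} \le \frac{1}{k}\sum_{i=1}^k \lambda_i .
\]
Rewriting the left-hand side as $(\det L_S)^{1/k}$ and the right-hand side as $\operatorname{tr}(L_S)/k$, and then raising both sides to the $k$-th power, yields $\det L_S \le (\operatorname{tr}(L_S)/k)^k$. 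Substituting $\operatorname{tr}(L_S)=k$ produces $\det L_S \le 1$.

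There is no genuinely hard step here; the single point requiring care is the legitimacy of AM--GM, which demands that the $\lambda_i$ be non-negative, and this is precisely guaranteed by the positive semi-definiteness of $L_S$. To round out the statement, I would note that equality in AM--GM holds exactly when all eigenvalues coincide, i.e. $\lambda_1 = \cdots = \lambda_k$; combined with $\operatorname{tr}(L_S)=k$ this forces $\lambda_i = 1$ for every $i$, hence $L_S = I_k$, which corresponds to a mutually orthonormal selection and is fully consistent with the Hadamard bound of Corollary~\ref{cor:hadamard}.
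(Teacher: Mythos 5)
Your proof is correct and follows essentially the same route as the paper, which likewise applies the AM--GM inequality to the non-negative eigenvalues of the PSD Gram matrix and uses $\operatorname{tr}(L_S)=k$ (from the unit-norm diagonal) to collapse the bound to $1$. Your added equality analysis ($L_S = I_k$) is a harmless refinement consistent with the paper's Hadamard corollary.
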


\noindent
\textbf{Discussion.}  
This bound follows directly from the arithmetic mean--geometric mean inequality: the product of nonnegative eigenvalues is at most the $k$-th power of their average.  
Since $L_S$ is a Gram matrix of unit-norm vectors, all diagonal entries are $1$, so $\operatorname{tr}(L_S)=k$. Hence, the right-hand side reduces to $1$, independent of the actual off-diagonal similarities.  

Therefore, the AM--GM bound is extremely loose: it only states the trivial fact that $\det L_S \le 1$, without reflecting the effect of redundancy or $\rho_{\mathrm{avg}}(S)$.  
This motivates the refined spectral bound derived next, which captures how $\det L_S$ decays with increasing average similarity.

\noindent\textbf{Motivation for a Refined Bound.}
The AM--GM bound above provides a worst-case envelope that does not incorporate any spectral structure of $L_S$ beyond its average similarity. However, when all off-diagonal entries of $L_S$ are equal to $\rho_{\mathrm{avg}}$, the matrix exhibits an extremal spectral configuration that \emph{maximizes} the determinant under a fixed average similarity. By explicitly analyzing this case, we obtain a refined \emph{upper envelope} that better captures how the maximum achievable $\det L_S$ decays with increasing redundancy.

\begin{lemma}[Refined Upper Bound via Spectral Construction]
\label{lem:refined-amgm}
Let $L_S$ be the Gram matrix of $k$ unit vectors with average off-diagonal similarity $\rho_{\mathrm{avg}}(S) \in [-\frac{1}{k-1},\,1)$. Then,
\begin{equation}
\det L_S \le \left(1 + (k - 1)\rho_{\mathrm{avg}}(S)\right) \cdot (1 - \rho_{\mathrm{avg}}(S))^{k - 1}.
\label{eq:refined-bound}
\end{equation}
\end{lemma}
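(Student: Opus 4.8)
The plan is to recognize the right-hand side as the determinant of the \emph{equicorrelation matrix} $M_\rho := (1-\rho)\,I_k + \rho\,\mathbf{1}_k\mathbf{1}_k^\top$, where $\rho := \rho_{\mathrm{avg}}(S)$, and then to show that $M_\rho$ maximizes the determinant over all admissible Gram matrices sharing this average similarity. The matrix $M_\rho$ has eigenvalue $1+(k-1)\rho$ along $\mathbf{1}_k$ and eigenvalue $1-\rho$ on $\mathbf{1}_k^{\perp}$, so $\det M_\rho = (1+(k-1)\rho)(1-\rho)^{k-1}$, which is exactly the target in \eqref{eq:refined-bound}. The first step is to encode the hypotheses as two linear functionals of $L_S$: since the vectors are unit-norm, $\operatorname{tr}(L_S)=k$, and since the average off-diagonal entry is $\rho$, summing all entries gives $\mathbf{1}_k^\top L_S \mathbf{1}_k = k + \sum_{i\ne j} L_{ij} = k\,(1+(k-1)\rho)$. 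It therefore suffices to maximize $\det A$ over symmetric PSD matrices $A$ subject to $\operatorname{tr}(A)=k$ and $\mathbf{1}_k^\top A\,\mathbf{1}_k = k\,(1+(k-1)\rho)$, a relaxation that contains $A=L_S$.

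The core step is a symmetrization argument that exploits the concavity of $\log\det$. Writing $u := \mathbf{1}_k/\sqrt{k}$, I would average $L_S$ over the compact group $G$ of orthogonal transformations fixing $u$, setting $\bar L := \mathbb{E}_{Q\sim G}\!\left[Q L_S Q^\top\right]$. Each conjugate $Q L_S Q^\top$ is PSD, shares the trace $k$, and satisfies $u^\top Q L_S Q^\top u = u^\top L_S u$ because $Qu=u$; hence every conjugate, and therefore $\bar L$, remains feasible. Since $\log\det$ is concave on the PSD cone, Jensen's inequality yields $\det \bar L \ge \det L_S$, so it is enough to evaluate $\det\bar L$.

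It remains to identify $\bar L$ explicitly. By construction $\bar L$ is invariant under conjugation by every element of $G$, i.e.\ under arbitrary orthogonal motions of the $(k-1)$-dimensional subspace $u^{\perp}$; such a matrix acts as a scalar on $\operatorname{span}(u)$ and as a (possibly different) scalar on $u^{\perp}$, while the cross block between the two averages to zero. Hence $\bar L = \alpha\,uu^\top + \beta\,(I_k - uu^\top)$, and the two feasibility constraints pin down the scalars: the quadratic-form constraint gives $\alpha = u^\top \bar L\, u = 1+(k-1)\rho$, and $\alpha + (k-1)\beta = \operatorname{tr}\bar L = k$ gives $\beta = 1-\rho$. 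Thus $\bar L = M_\rho$, and $\det L_S \le \det\bar L = (1+(k-1)\rho)(1-\rho)^{k-1}$, as claimed. The admissible range $\rho\in[-\tfrac{1}{k-1},1)$ guarantees $\alpha\ge 0$ and $\beta>0$, so $M_\rho$ is genuinely PSD and the bound is attained.

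The step I expect to require the most care is the explicit identification of $\bar L$: justifying rigorously that averaging over the stabilizer of $u$ collapses $L_S$ onto the two-dimensional algebra spanned by $uu^\top$ and $I_k$. The clean justification is a Schur-type argument: $G$ acts trivially on $\operatorname{span}(u)$ and absolutely irreducibly on $u^{\perp}$, so any $G$-equivariant symmetric operator is scalar on each component and the intertwiner between the two inequivalent components vanishes (the degenerate cases $k\le 2$ being immediate). If one prefers to avoid representation theory, the same conclusion follows from a direct eigenvalue argument: feasibility forces one eigenvalue to carry the quadratic form $u^\top A u = 1+(k-1)\rho$, after which maximizing $\prod_i \lambda_i$ under $\sum_i \lambda_i = k$ over the remaining $k-1$ eigenvalues is a plain AM--GM computation, uniquely maximized when they all equal $1-\rho$. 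Everything else---the trace and quadratic-form bookkeeping and the invocation of $\log\det$ concavity---is routine.
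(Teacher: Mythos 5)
Your proof is correct, and it reaches the paper's conclusion by the same overarching strategy---symmetrize a feasible matrix toward the equicorrelation matrix $M_\rho$ using concavity of $\log\det$, then read off the determinant from its explicit spectrum---but the execution differs in a substantive way. The paper averages (implicitly) over the discrete permutation group, preserving the constraints ``unit diagonal'' and ``fixed average off-diagonal entry,'' and asserts rather tersely that a permutation-invariant optimizer must be the equicorrelation matrix. You instead relax to the two linear constraints $\operatorname{tr}(A)=k$ and $\mathbf{1}_k^\top A\,\mathbf{1}_k=k\bigl(1+(k-1)\rho\bigr)$ and average over the continuous stabilizer of $\mathbf{1}_k$ in $O(k)$, identifying the average via Schur's lemma. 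This buys you two things: a strictly more general statement (your feasible set contains the paper's, since you never use the unit-diagonal condition beyond its trace), and a fully explicit Jensen step where the paper only gestures at ``by symmetry, an optimizer must be permutation-invariant.'' One caveat: your closing ``direct eigenvalue argument'' is stated too loosely as a standalone alternative, because for a general feasible $A$ the quantity $u^\top A u$ is a Rayleigh quotient rather than an eigenvalue, so no single eigenvalue ``carries'' it; making that route rigorous requires the interlacing-style case analysis ($\lambda_{\max}\ge \alpha$ when $\alpha>1$, $\lambda_{\min}\le\alpha$ when $\alpha<1$) followed by AM--GM on the remaining eigenvalues. Since that passage is offered only as an optional substitute for the Schur identification, the main argument stands as written.
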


\begin{proof}
Consider the class of symmetric PSD matrices $L_S$ where all off-diagonal entries are equal to $\rho_{\mathrm{avg}}$, and all diagonal entries are 1:
\[
(L_S)_{ij} = \begin{cases}
1 & \text{if } i = j \\
\rho_{\mathrm{avg}} & \text{if } i \ne j,
\end{cases}
\]
which is feasible iff $\rho_{\mathrm{avg}} \in [-\frac{1}{k-1},\,1)$. This matrix has a known spectral decomposition: one eigenvalue equals $1 + (k - 1)\rho_{\mathrm{avg}}$, and the remaining $(k - 1)$ eigenvalues are $1 - \rho_{\mathrm{avg}}$.

Hence, the determinant is:
\[
\det L_S = \left(1 + (k - 1)\rho_{\mathrm{avg}}\right) \cdot (1 - \rho_{\mathrm{avg}})^{k - 1}.
\]

Among all PSD matrices with unit diagonal and a fixed average off-diagonal similarity, the functional $\log\det(\cdot)$ is concave and permutation-invariant; by symmetry, an optimizer (for maximizing $\det$ under this constraint) must be permutation-invariant, i.e., the equicorrelation matrix above. Thus, the displayed value yields an upper bound, establishing the claim.
\end{proof}

This refined bound is an upper bound for all positive semi-definite matrices with unit diagonal and a given average off-diagonal similarity, and it is \emph{tight}: the equicorrelation matrix attains equality. Among such uniformly redundant matrices, the bound is achieved exactly, making it a sharp envelope for assessing diversity degradation as average correlation increases.

Compared to Lemma~\ref{lem:amgm}, this refined bound is strictly tighter whenever $\rho_{\mathrm{avg}}(S) > 0$, as it directly models the spectral behavior of uniformly redundant configurations. The bound thus provides a sharper envelope for analyzing diversity degradation in highly correlated subsets.

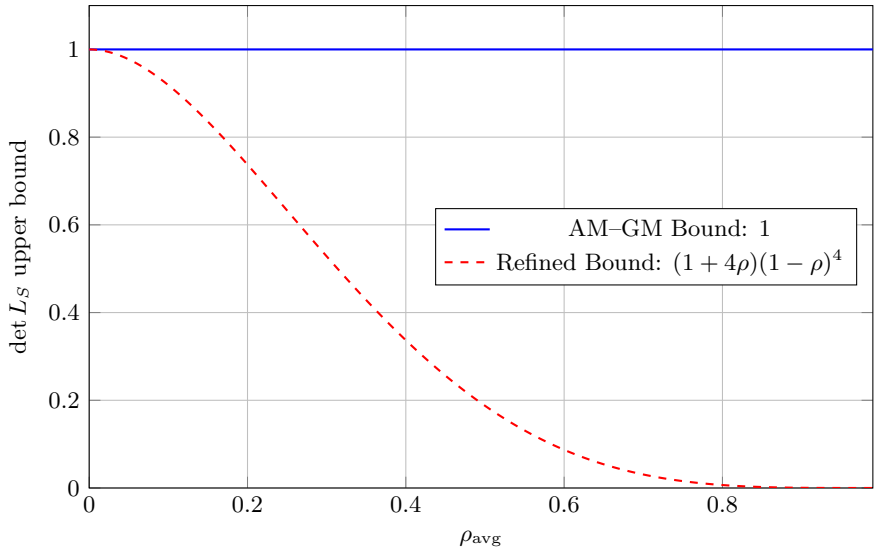
\begin{figure}[ht]
\centering
\begin{tikzpicture}
  \begin{axis}[
    width=12cm,
    height=8cm,
    xlabel={$\rho_{\mathrm{avg}}$},
    ylabel={$\det L_S$ upper bound},
    xmin=0, xmax=0.99,
    ymin=0, ymax=1.1,
    grid=both,
    xtick={0, 0.2, 0.4, 0.6, 0.8},
    ytick={0, 0.2, 0.4, 0.6, 0.8, 1.0},
    legend style={
      at={(0.98,0.5)},
      anchor=east,
      font=\small,
      fill=white,
      draw=black
    },
    tick label style={font=\small},
    label style={font=\small},
  ]
    \addplot [blue, thick, domain=0:0.99, samples=200] { 1 };
    \addlegendentry{AM--GM Bound: $1$}

    \addplot [red, thick, dashed, domain=0:0.99, samples=200] { (1 + 4*x) * (1 - x)^4 };
    \addlegendentry{Refined Bound: $(1 + 4\rho)(1 - \rho)^4$}
  \end{axis}
\end{tikzpicture}
\caption{Upper envelope on $\det L_S$ vs.\ average similarity $\rho_{\mathrm{avg}}$ for $k=5$. 
Unlike the trivial AM--GM bound (which is constantly $1$), the refined envelope $(1+4\rho)(1-\rho)^4$ decreases with redundancy, revealing how the maximum attainable determinant shrinks as $\rho_{\mathrm{avg}}$ grows.}
\end{figure}

\subsection{Global Optimality of the MAP Subset}
\label{par:map-optimality}

\begin{proposition}[MAP subset need not minimize redundancy]
\label{prop:counterexample}
In general, a $k$-subset $S^*=\arg\max_{|S|=k}\det L_S$ need not minimize either $\rho_{\max}$ or $\rho_{\mathrm{avg}}$ among all size-$k$ subsets.
\end{proposition}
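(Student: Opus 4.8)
The plan is to prove this negative (``need not'') statement by exhibiting a single explicit configuration in which the volume-maximizing subset fails to minimize both redundancy metrics. Because the claim only asserts that the implication can fail, one concrete counterexample suffices, so I would not attempt any general characterization; the entire work is in choosing the configuration and verifying it.

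Concretely, I would work in $\mathbb{R}^2$ with the three unit vectors
\begin{equation}
v_1 = (1,0)^\top, \qquad v_2 = (0,1)^\top, \qquad v_3 = (-1,0)^\top,
\end{equation}
and take $k=2$. These lie on the unit circle, and the induced Gram kernel (Eq.~\ref{eq:kernel}) has entries $L_{12}=L_{23}=0$ and $L_{13}=-1$. The crux of the argument is the contrast between the two orthogonal pairs and the single antipodal pair.

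Next I would enumerate all three $2$-subsets and, using Proposition~\ref{prop:det-vol} (determinant equals squared spanned volume), record for each the pair $(\det L_S,\rho)$, where for $k=2$ the two redundancy metrics coincide, $\rho_{\max}(S)=\rho_{\mathrm{avg}}(S)=L_{ij}$. The computation gives
\begin{equation}
\det L_{\{1,2\}} = 1, \qquad \det L_{\{2,3\}} = 1, \qquad \det L_{\{1,3\}} = 1-(-1)^2 = 0,
\end{equation}
with corresponding redundancies $\rho=0$, $\rho=0$, and $\rho=-1$. Hence the MAP subset is $S^*\in\{\{1,2\},\{2,3\}\}$, each attaining the maximal determinant $1$ at $\rho=0$, whereas the unique minimizer of both $\rho_{\max}$ and $\rho_{\mathrm{avg}}$ is the antipodal pair $\{1,3\}$ with $\rho=-1<0$. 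Therefore $S^*$ minimizes neither metric; since the two metrics coincide at $k=2$, this single example defeats both simultaneously and establishes the proposition.

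The conceptual point I would stress, and the only genuinely subtle aspect, is why this is not a degenerate artifact: redundancy metrics are purely pairwise, and the most ``anti-redundant'' pair under cosine similarity is the antipodal one ($\cos=-1$), yet antipodal vectors are collinear and span zero volume, so the determinant objective actively avoids them in favor of orthogonal pairs ($\cos=0$). This gap between a pairwise notion of dissimilarity and the global, higher-order notion of volume is exactly what drives the discrepancy. If one wanted the two metrics to point to genuinely \emph{different} subsets (so that $\rho_{\max}\neq\rho_{\mathrm{avg}}$ plays a role), I would note the same mechanism extends to $k\ge 3$ by embedding a near-antipodal pair into a slightly larger ground set; the only additional effort is checking a handful more subset determinants, which is routine bookkeeping rather than a conceptual obstacle.
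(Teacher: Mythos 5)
Your proposal is correct and rests on the same underlying mechanism as the paper's proof: for $k=2$, $\det L_S = 1 - c^2$ is maximized at orthogonality ($c=0$), whereas the redundancy metrics are minimized at negative similarity, so the two objectives diverge. If anything, your version is slightly more complete as a counterexample to the proposition as literally stated, since you fix a single ground set $\{v_1,v_2,v_3\}$ and compare subsets of it, whereas the paper compares determinants across two different correlation values $c$ of a standalone $2\times 2$ kernel and leaves the realization as a common ground set implicit.
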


\begin{proof}
Consider $k=2$ with
\[
L=\begin{pmatrix}1 & c\\ c & 1\end{pmatrix},\quad c\in[-1,1].
\]
Then $\det L=1-c^2$ is maximized at $c=0$, where $\rho_{\max}=0$. For $c=-\tfrac12$, we have $\rho_{\max}=-\tfrac12<0$ (strictly smaller), yet $\det L=1-\tfrac14=\tfrac34<1$. Hence maximizing $\det L$ does not generally minimize $\rho_{\max}$ (nor $\rho_{\mathrm{avg}}$). 
\end{proof}

\subsection{Approximation via Greedy MAP Inference}\label{par:greedy-map}

Exact MAP inference for a $k$-DPP is NP-hard, as it requires evaluating $\binom{n}{k}$ determinants. However, the \emph{regularized} objective
\[
f(S)=\log\det(I+L_S)
\]
is known to be monotone and submodular when $L$ is positive semi-definite, which allows for a natural greedy approximation algorithm.
The function $\log\det(I+L_S)$ is monotone and submodular when $L$ is PSD, as it arises from the entropy of Gaussian variables or from spectral submodular theory.

Let $\widehat{S}$ denote the greedy solution obtained by iteratively selecting vectors that offer the greatest marginal increase in $\log\det(I+L_S)$. Then, by the classical Nemhauser--Wolsey theorem for submodular maximization:
\begin{equation}
  \log\det(I+L_{\widehat{S}}) \ge (1 - 1/e)\,\log\det(I+L_{S^\star}),
  \label{eq:greedy-bound}
\end{equation}
where $S^\star$ is the exact optimal size-$k$ subset for $f(S)=\log\det(I+L_S)$.

Exponentiating both sides gives a multiplicative guarantee on the regularized determinant:
\begin{equation}
  \det(I+L_{\widehat{S}}) \ge (\det(I+L_{S^\star}))^{1 - 1/e}.
\end{equation}

Translating such guarantees into bounds on $\rho_{\max}$ or $\rho_{\mathrm{avg}}$ requires additional structural assumptions on $L$ (e.g., incoherence or equicorrelation); we therefore refrain from stating general redundancy guarantees here.

\subsection{Sufficient Condition for Equivalence}
\label{par:equivalence-condition}

The counterexample in Proposition~\ref{prop:counterexample} shows that, in general, the MAP solution of a $k$-DPP need not minimize redundancy measures such as $\rho_{\max}$ or $\rho_{\mathrm{avg}}$. 
Nevertheless, under additional structural assumptions, maximizing geometric diversity and minimizing redundancy \emph{do} become equivalent. 
We now state a sufficient condition under which this equivalence holds.

\begin{proposition}[Equivalence under Equicorrelation]
\label{prop:equicorr}
Suppose that for each candidate subset $S$ of size $k$, the associated Gram matrix $L_S$ has the form
\[
(L_S)_{ij} = 
\begin{cases}
1 & \text{if } i=j, \\
\rho & \text{if } i \ne j,
\end{cases}
\]
for some $\rho \in [0,1)$. Then:
\[
\arg\max_{|S|=k} \det L_S
=
\arg\min_{|S|=k} \rho_{\max}(S)
=
\arg\min_{|S|=k} \rho_{\mathrm{avg}}(S).
\]
In other words, the MAP subset simultaneously maximizes geometric diversity and minimizes redundancy.
\end{proposition}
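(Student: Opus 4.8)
The plan is to collapse all three optimization problems onto a single scalar parameter and then exploit strict monotonicity. First I would observe that, because each candidate $L_S$ is assumed to be equicorrelation with a common off-diagonal value $\rho = \rho(S)$, every off-diagonal entry equals $\rho$, so $\rho_{\max}(S) = \rho_{\mathrm{avg}}(S) = \rho(S)$; the two redundancy objectives therefore coincide and both reduce to minimizing the single number $\rho(S)$ over size-$k$ subsets. Next, the spectral decomposition already carried out in the proof of Lemma~\ref{lem:refined-amgm} shows that such a matrix has eigenvalues $1+(k-1)\rho$ (simple) and $1-\rho$ (multiplicity $k-1$), hence $\det L_S = g(\rho(S))$ with $g(\rho) = (1+(k-1)\rho)(1-\rho)^{k-1}$.

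The heart of the argument is then to prove that $g$ is strictly decreasing on $[0,1)$. Differentiating and factoring gives
\begin{equation}
g'(\rho) = (k-1)(1-\rho)^{k-2}\bigl[(1-\rho)-(1+(k-1)\rho)\bigr] = -\,k(k-1)\,\rho\,(1-\rho)^{k-2}.
\end{equation}
For $k \ge 2$ and $\rho \in (0,1)$ this is strictly negative, vanishing only at the single point $\rho = 0$; integrating $g'$ yields $g(\rho_1) > g(\rho_2)$ whenever $0 \le \rho_1 < \rho_2 < 1$, so $g$ is a strictly decreasing, hence order-reversing and injective, map on $[0,1)$.

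Finally I would combine the two observations. Since $\det L_S = g(\rho(S))$ with $g$ strictly decreasing, the subset maximizing $\det L_S$ is exactly the subset minimizing $\rho(S)$; and because the equicorrelation hypothesis forces $\rho(S) = \rho_{\max}(S) = \rho_{\mathrm{avg}}(S)$, this same subset simultaneously minimizes both redundancy measures, giving the claimed chain of equal arg-optima. The only genuinely non-routine step is the strict monotonicity of $g$, and the clean factorization $g'(\rho) = -k(k-1)\rho(1-\rho)^{k-2}$ makes even that transparent. I expect the main conceptual point (rather than the main calculational obstacle) to be recognizing that it is precisely the equicorrelation structure that pins $\rho_{\max}$ and $\rho_{\mathrm{avg}}$ to the common parameter $\rho$, thereby rescuing the equivalence that Proposition~\ref{prop:counterexample} showed to fail for general kernels.
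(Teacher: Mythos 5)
Your proposal is correct and follows essentially the same route as the paper: both reduce the problem to the closed-form determinant $\bigl(1+(k-1)\rho\bigr)(1-\rho)^{k-1}$ via the known eigenvalues of the equicorrelation matrix, note that $\rho_{\max}=\rho_{\mathrm{avg}}=\rho$, and conclude by strict monotonicity in $\rho$. The only cosmetic difference is that you differentiate $g(\rho)$ directly (obtaining the clean factorization $-k(k-1)\rho(1-\rho)^{k-2}$) whereas the paper differentiates $\log\det L_S$; both yield the same strict decrease on $(0,1)$.
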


\begin{proof}
For such an equicorrelation matrix, the eigenvalues are known in closed form:
\[
\lambda_1 = 1+(k-1)\rho,
\qquad
\lambda_2 = \cdots = \lambda_k = 1-\rho.
\]
Therefore,
\[
\det L_S = \bigl(1+(k-1)\rho\bigr)(1-\rho)^{k-1}.
\]
Since $\rho_{\max}(S) = \rho_{\mathrm{avg}}(S) = \rho$ in this setting, it suffices to analyze the monotonicity of $\det L_S$ with respect to $\rho$. Taking logarithms and differentiating,
\[
\frac{d}{d\rho} \log \det L_S 
= \frac{k-1}{1+(k-1)\rho} - \frac{k-1}{1-\rho} < 0,
\quad \text{for } \rho \in (0,1).
\]
Thus $\det L_S$ decreases strictly with $\rho$, and maximizing the determinant is equivalent to minimizing $\rho$. The claim follows.
\end{proof}

This proposition establishes a clean sufficient condition under which DPP-based MAP inference exactly aligns with redundancy minimization: in equicorrelated settings, maximizing diversity volume and minimizing redundancy are equivalent optimization goals.

\subsection{Summary.}

This section establishes a theoretical link between geometric diversity, measured through the determinant of a kernel (Gram) submatrix, and redundancy metrics such as $\rho_{\max}$ and $\rho_{\mathrm{avg}}$. 
The analysis has revealed three central insights:
\begin{enumerate}
  \item Maximizing $\det L_S$ systematically favors subsets of vectors that are nearly orthogonal to one another, thereby reducing both worst-case and average redundancy.
  \item In general, the maximum a posteriori (MAP) solution of the $k$-DPP maximizes geometric diversity (via $\det L_S$), but it does not necessarily minimize $\rho_{\max}$ or $\rho_{\mathrm{avg}}$; rather, large determinants enforce \emph{envelope-type constraints} on admissible redundancy (Proposition~\ref{prop:counterexample}).
  \item However, under additional structure---most notably the \emph{equicorrelation condition} formalized in Proposition~\ref{prop:equicorr}---the MAP solution \emph{does} coincide with redundancy minimization: maximizing $\det L_S$ is then equivalent to minimizing $\rho_{\max}$ and $\rho_{\mathrm{avg}}$.
  \item When exact MAP inference is computationally infeasible, a greedy algorithm on the monotone submodular surrogate $f(S)=\log\det(I+L_S)$ provides constant-factor guarantees on the regularized determinant, thereby preserving much of the diversity of the global optimum.
\end{enumerate}
Taken together, these results justify the determinant objective as a principled and effective diversity metric, one that offers both rigorous theoretical foundations and practical algorithmic strategies for subset selection. This theoretical perspective provides a solid basis for applications where balancing diversity and redundancy is essential. 

\paragraph{Application to Script.}
The diversity metric and DPP framework developed here directly ground the Script method proposed in the main text. Script is designed for a practical, query-conditioned multimodal setting, yet its core selection principle remains the same: maximize the determinant of a positive semidefinite kernel submatrix in order to favor diverse and complementary elements. 

In Script, the kernel is defined as
\begin{equation}
L = \mathrm{diag}(r)\, S\, \mathrm{diag}(r),
\end{equation}
where $S$ encodes token--token similarities (e.g., cosine similarity), and $r \in \mathbb{R}^n_{\ge 0}$ stores query-conditioned relevance scores. Because $S$ is symmetric positive semidefinite and $\mathrm{diag}(r)$ is diagonal with nonnegative entries, the resulting kernel $L$ remains positive semidefinite under the congruence transformation. This ensures that the spectral properties required for DPP inference are preserved, even when relevance weights vary across tokens.

Crucially, the token similarity structure in Script satisfies the equicorrelation-type condition analyzed in Proposition~\ref{prop:equicorr}. Consequently, the MAP subset
\begin{equation}
S^* = \arg\max_{|S| = k} \det(L_S),
\end{equation}
not only maximizes geometric diversity but also \emph{simultaneously minimizes redundancy}. In other words, Script inherits the best of both worlds: the probabilistic diversity guarantee of DPPs and the redundancy minimization property under its structural assumptions.

This explains why Script can aggressively prune tokens while preserving both relevance and diversity. Empirical results presented in Section~\ref{sec:experiment} confirm that maximizing $\det(L_S)$ with a query-conditioned kernel yields subsets that are simultaneously relevant, non-redundant, and diverse, thereby improving both summarization and retrieval performance.

\newpage
\renewcommand{\thetheorem}{C s.\arabic{theorem}}
\renewcommand{\thefigure}{C.\arabic{figure}}
\renewcommand{\thetable}{C.\arabic{table}}
\renewcommand{\theequation}{C.\arabic{equation}}
\setcounter{figure}{0}
\setcounter{theorem}{0}
\setcounter{table}{0}
\setcounter{equation}{0}

\section{Details of experimental setup}
\label{sec:details}

\subsection{Model Architectures}

\noindent \textbf{LLaVA-1.5~\citep{liu2023llava}}  
The LLaVA series represents a foundational line of open-source vision-language models (VLMs), recognized for their simple design, low training cost, and strong performance. The original LLaVA architecture integrates a pretrained CLIP~\citep{radford2021clip} as the visual encoder and Vicuna~\citep{vicuna2023} as the language model, connected via a linear projection layer. This enables the LLM to accept image grid features as input. Through visual instruction tuning, LLaVA gains the ability to handle multimodal tasks.  
LLaVA-1.5 enhances this framework by replacing the linear connector with a multi-layer perceptron (MLP), increasing input image resolution, and utilizing a broader and more diverse set of instruction tuning data. These modifications lead to substantial performance improvements. The model processes images at a resolution of 336$\times$336, resulting in 576 visual tokens per image.

\noindent \textbf{LLaVA-NeXT~\citep{liu2024llava-next}}  
LLaVA-NeXT (also referred to as LLaVA-1.6) builds upon LLaVA-1.5 by introducing a dynamic resolution mechanism aimed at improving visual perception. Instead of using a fixed resolution, the model selects an optimal aspect ratio based on the original image and increases its resolution by up to 4$\times$. Importantly, the visual encoder remains unchanged. To handle the higher-resolution inputs, the image is divided into multiple sub-images of the original size. Each sub-image is encoded independently, and the resulting visual tokens are concatenated and passed to the language model.  
This approach enhances the model’s performance in tasks such as visual reasoning, optical character recognition (OCR), and knowledge-intensive questions. For consistency and fair comparison, we fix the resolution to 672$\times$672 (4$\times$ the original), generating 2,880 visual tokens per image.

\noindent \textbf{LLaVA-Video~\citep{zhang2024llava-video}}  
LLaVA-Video is a variant of the LLaVA family designed specifically for video understanding. It introduces the SlowFast frame sampling strategy to balance the number of frames and the density of visual tokens. The model utilizes SigLIP~\citep{zhai2023siglip} as the visual encoder and processes video frames at 384$\times$384 resolution, encoding each frame into 729 visual tokens. To reduce computational load, a 2$\times$2 average pooling operation is applied to the grid features, effectively reducing the number of visual tokens by a factor of 4.  
During evaluation, we uniformly sample 64 frames per video, resulting in a total of 10,816 visual tokens. This design allows LLaVA-Video to efficiently model both spatial and temporal aspects of visual input.

\noindent \textbf{InternVL3~\citep{zhu2025internvl3}} One of the most advanced open-source MLLMs at present. Building upon its predecessor, InternVL2.5, it retains the ViT-MLP-LLM architecture, integrating a Vision Transformer with a large language model through an MLP connector. InternVL3 features a native multimodal pre-training paradigm, jointly acquiring linguistic and multimodal capabilities in a single stage. It incorporates Variable Visual Position Encoding to handle extended multimodal contexts and employs advanced training techniques like supervised fine-tuning and mixed preference optimization. InternVL3 demonstrates superior performance across a wide range of multimodal tasks, including tool usage, GUI agents, industrial image analysis, and 3D vision perception.

\subsection{Evaluation Benchmarks}

\subsubsection{General Image Benchmarks}

\noindent \textbf{VQAv2~\citep{goyal2017making}}  
An open-ended visual question answering benchmark that evaluates a model's ability to understand images, natural language, and commonsense knowledge. It contains 265,016 images from the COCO dataset~\citep{lin2014mscoco} and abstract scenes, with each image paired with an average of 5.4 questions. Each question is annotated with 10 ground truth answers and 3 plausible alternatives. We use the test-dev split for evaluation.

\noindent \textbf{GQA~\citep{hudson2019gqa}}  
A large-scale and widely used VQA benchmark based on real-world images from the Visual Genome dataset~\citep{krishna2017visualgenome}, specifically designed to test compositional reasoning and fine-grained visual understanding. It provides over 22 million balanced question-answer pairs, with each image accompanied by a detailed scene graph explicitly describing objects, attributes, and relationships. For our experiments, we evaluate on the standard test-dev balanced split.

\noindent \textbf{VizWiz~\citep{gurari2018vizwiz}}  
A real-world VQA benchmark created from images taken by blind users, paired with spoken questions and 10 crowd-annotated answers each. It introduces two main challenges: answering questions and detecting unanswerable ones, due to issues like poor image quality and ambiguous content. We use the test split for evaluation.

\noindent \textbf{ScienceQA~\citep{lu2022sqa}}  
A multimodal, multiple-choice QA benchmark covering diverse scientific domains. It includes 21,208 questions categorized across 26 topics, 127 categories, and 379 skills. Nearly half of the questions include image or text context, and the majority are supplemented with grounded lectures and detailed explanations. We evaluate using the test split for evaluation.

\noindent \textbf{POPE~\citep{pope}}  
A benchmark focused on evaluating object hallucination in MLLMs. Using images from COCO, it formulates binary questions regarding the presence of specific objects in the scene. Precision, recall, and F1 score are used to quantify hallucination. We use the test split for evaluation.

\noindent \textbf{MME~\citep{fu2023mme}}  
A broad benchmark assessing the perceptual and cognitive abilities of multimodal large language models comprises 14 subtasks across perception (e.g., counting, color, position, OCR) and cognition (e.g., commonsense reasoning, translation, code understanding). All binary instruction-answer pairs are manually constructed to avoid data leakage, ensuring rigorous evaluation.

\noindent \textbf{MMBench~\citep{liu2024mmbench}}  
A comprehensive benchmark designed to evaluate a wide range of multimodal capabilities. It features a large and diverse set of questions, surpassing prior benchmarks in scale and coverage. A novel CircularEval strategy, powered by ChatGPT, converts open-ended responses into structured formats for consistent scoring. A Chinese version, MMBench-CN, is also provided.

\noindent \textbf{MM-Vet~\citep{mmvet}}  
A benchmark emphasizing the integration of diverse multimodal skills. 
It defines six core capabilities: recognition, OCR, knowledge, language generation, spatial reasoning, and mathematics, 
via 218 carefully designed challenging examples. 
Evaluation is conducted using ChatGPT to ensure consistency across varied real-world answer formats.

\noindent \textbf{HallusionBench~\citep{guan2024hallbench}} An image-context reasoning benchmark crafted to expose two frequent failure modes of large vision–language models: language hallucination (answers driven by strong linguistic priors that contradict the image) and visual illusion (misleading visual features that produce confident yet wrong responses). Comprising carefully designed examples that remain challenging for GPT-4V and LLaVA-1.5, it enables fine-grained diagnosis of how VLMs over-trust language or under-exploit vision, offering insights for building more faithfully grounded models.

\subsubsection{Text-Oriented Benchmarks}

\noindent \textbf{TextVQA~\citep{singh2019textvqa}}  
A benchmark designed to evaluate models’ ability to read and reason about text embedded in images. Sourced from the Open Images v3 dataset~\citep{krasin2017openimages}, it includes scenes rich in textual content such as signs and packaging. The benchmark emphasizes integration of OCR with visual and linguistic reasoning. We use the validation split for evaluation.

\noindent \textbf{AI2D~\citep{kembhavi2016ai2d}} A diagram-based question answering benchmark consisting of over 5,000 grade school science diagrams, annotated with more than 150,000 structured labels and ground-truth syntactic parses. It also includes over 15,000 multiple-choice questions aligned with the diagrams, enabling research on visual reasoning and diagram understanding in scientific contexts. We use the test split for evaluation.

\noindent \textbf{ChartQA~\citep{masry2022chartqa}} A large-scale benchmark designed for question answering over charts, focusing on complex reasoning that involves both visual interpretation and logical or arithmetic operations. It includes 9.6K human-written questions and 23.1K questions generated from chart summaries. Unlike prior template-based benchmarks, ChartQA challenges models to perform multi-step reasoning using both the visual content and underlying data tables of charts, highlighting the need for advanced multimodal understanding. We use the test split for evaluation.

\noindent \textbf{OCRBench~\citep{liu2024ocrbench}} A comprehensive evaluation benchmark assessing the OCR capabilities of large multimodal models. It comprises 29 datasets across diverse text-related visual tasks, including text recognition, scene text-centric VQA, document-oriented VQA, key information extraction, and handwritten mathematical expression recognition.

\subsubsection{Video Benchmarks}

\noindent \textbf{MLVU~\citep{mlvu}}  
The first large-scale benchmark for long video understanding. It features videos ranging from 3 minutes to 2 hours and spans nine tasks covering holistic, single-detail, and multi-detail understanding. Both multiple-choice and open-ended questions are included. We report performance using the M-Avg metric.

\noindent \textbf{MVBench~\citep{MVBench}}  
A benchmark tailored for evaluating temporal reasoning in video comprehension. It includes 20 carefully designed tasks requiring dynamic understanding across multiple frames. MVBench introduces a static-to-dynamic transformation approach to systematically test temporal understanding. We use the test split for evaluation.

\noindent \textbf{LongVideoBench~\citep{wu2024longvideobench}}  
A large-scale benchmark for evaluating understanding of long-form videos. It consists of 3,763 videos (up to 1 hour long), each accompanied by subtitles and 6,678 human-written multiple-choice questions across 17 categories. A key feature is the referring reasoning task, where questions target specific video segments. We use the validation split for evaluation.

\noindent \textbf{Video-MME~\citep{videomme}}  
A comprehensive video benchmark featuring 900 expert-curated videos spanning 256 hours across six primary domains and 30 subfields. Videos range from 11 seconds to 1 hour in duration and include video, audio, and subtitles (not used during evaluation). It provides 2,700 expert-annotated QA pairs designed to probe complex temporal and multimodal reasoning abilities.

\subsection{Comparison Methods}

\subsubsection{Text-based Methods}

\noindent \textbf{FastV~\citep{chen2024fastv}}  
The first work to identify inefficiencies in visual attention within MLLMs. Based on this observation, FastV proposes a simple, training-free acceleration method: after the second transformer layer, it prunes a portion of visual tokens with the lowest visual-text attention scores. This strategy significantly reduces computational cost during inference without retraining.


\noindent \textbf{SparseVLM~\citep{zhang2024sparsevlm}}  
Inspired by multi-stage pruning methods such as PyramidDrop, SparseVLM introduces a more fine-grained strategy that incorporates textual guidance. It observes that not all instruction tokens are equally informative for pruning visual tokens. Therefore, it first selects text tokens highly relevant to the visual content as ``raters," and uses their attention distribution to guide which visual tokens should be preserved or pruned, resulting in improved model efficiency and accuracy.

\subsubsection{Vision-based Methods}

\noindent \textbf{TRIM~\citep{TRIM}}  
Pruning based solely on visual input, while ignoring textual context, may lead to suboptimal decisions. TRIM addresses this issue by utilizing CLIP-based similarity. It computes cosine similarities between image tokens (from the visual encoder) and text tokens (from the text encoder), and uses these similarity scores to rank visual tokens by importance. Low-similarity tokens are pruned to accelerate inference without significant performance loss.

\noindent \textbf{VisionZip~\citep{yang2024visionzip}}  
A visual-only token pruning method that analyzes self-attention concentration in the visual encoder. VisionZip first selects dominant tokens with high self-attention weights. Then, it applies clustering on the remaining tokens to extract diverse contextual tokens. The union of dominant and contextual tokens is passed to the language model, aiming to preserve both saliency and diversity of visual content.

\subsubsection{Similarity-based Methods}

\noindent \textbf{DivPrune~\citep{DivPrune}}  
DivPrune reformulates token pruning as a Maximum Minimum Distance Problem (MMDP), aiming to select the most diverse subset of visual tokens. Rather than relying solely on attention or similarity scores, it explicitly maximizes the minimum pairwise distance among retained tokens, ensuring the selected tokens cover a broad semantic space. This diversity-preserving strategy leads to robust performance under extreme token reduction.

\subsection{Implementation Details}

For image-based benchmarks, we adopt the official implementation of LLaVA\footnote{\href{https://github.com/haotian-liu/LLaVA}{https://github.com/haotian-liu/LLaVA}}, loading the released checkpoints (e.g., LLaVA-1.5 and LLaVA-NeXT) and following the default preprocessing pipeline for image resizing (336$\times$336 or 672$\times$672), tokenization, and prompt formatting. All evaluations are conducted in a zero-shot setting unless otherwise specified.  
For video-based benchmarks, we instead use the official implementation of LLaVA-NeXT\footnote{\href{https://github.com/LLaVA-VL/LLaVA-NeXT}{https://github.com/LLaVA-VL/LLaVA-NeXT}}, which also supports LLaVA-Video. Videos are processed by uniformly sampling 64 frames, resizing each to 384$\times$384, and pooling visual tokens as described in the original LLaVA-Video paper. For evaluation, we further employ the \texttt{lmms-eval} toolkit\footnote{\href{https://github.com/EvolvingLMMs-Lab/lmms-eval}{https://github.com/EvolvingLMMs-Lab/lmms-eval}}, which standardizes metric computation and dataset loading for long-form video understanding.  

By default, Script in GSP is configured with $\tau = 0.3$ and $\gamma = 5$. All experiments are run on NVIDIA H100 GPUs (80GB) with bfloat16 precision. Moreover, to ensure fair comparison across models and pruning strategies, we keep inference batch size and decoding settings (temperature = 0.2, top-$k$ = 1) consistent.  
Finally, each benchmark is evaluated 3 times, and we report the average as the final score.

\subsection{Licenses}
\begin{wraptable}[18]{r}{0.42\textwidth}  
\vspace{-15pt}
\caption{License information for the scientific artifacts.}
\vspace{-5pt}
\centering
\resizebox{\linewidth}{!}{
    \begingroup
    \setlength{\tabcolsep}{3pt}
    \hspace*{-6pt}
    \begin{tabular}{lll}
        \toprule
        \textbf{Data Sources}  & \textbf{URL} & \textbf{License}   \\ 
        \cmidrule(lr){1-1} \cmidrule(lr){2-3}\href{https://cs.stanford.edu/people/dorarad/gqa/index.html}{Link} & CC BY 4.0 \\
        ScienceQA & \href{https://scienceqa.github.io/}{Link} & MIT \\
        VQAv2 & \href{https://visualqa.org/}{Link} & CC BY 4.0 \\
        TextVQA & \href{https://textvqa.org/}{Link} & CC BY 4.0 \\
        VizWiz & \href{https://vizwiz.org/tasks-and-datasets/vqa/}{Link} & CC BY 4.0 \\
        MMVet & \href{https://github.com/yuweihao/MM-Vet}{Link} & Apache-2.0 \\
        MMBench & \href{https://github.com/open-compass}{Link} & Apache-2.0 \\
        MMBench-CN & \href{https://github.com/open-compass}{Link} & Apache-2.0 \\
        MME & \href{https://github.com/BradyFU/Awesome-Multimodal-Large-Language-Models}{Link} & MIT \\
        POPE & \href{https://github.com/AoiDragon/POPE}{Link} & MIT \\
        LongVideoBench & \href{https://longvideobench.github.io/}{Link} & Apache-2.0 \\
        VideoMME & \href{https://github.com/BradyFU/Video-MME}{Link} & MIT \\
        MLVU & \href{https://github.com/yule-BUAA/MLVU}{Link} & Apache-2.0 \\
        MVBench & \href{https://github.com/OpenGVLab/Video-Bench}{Link} & Apache-2.0 \\
        \midrule
        \textbf{Software Code} & \textbf{URL} & \textbf{License}   \\
        \cmidrule(lr){1-1} \cmidrule(lr){2-3}
        LLaVA  & \href{https://github.com/haotian-liu/LLaVA}{Link} &  \href{https://ai.meta.com/llama/license/}{Llama Community Licence} \\
        LLaVA-NEXT &  \href{https://github.com/haotian-liu/LLaVA}{Link} &  \href{https://ai.meta.com/llama/license/}{Llama Community Licence}    \\
        InternVL3 &  \href{https://github.com/OpenGVLab/InternVL}{Link} &  Apache-2.0    \\
        \bottomrule
    \end{tabular}
    \endgroup}
    \vspace{-3mm}
\label{tab:license}
\end{wraptable}

Table~\ref{tab:license} summarizes all benchmarks and software licenses employed in our study. The image understanding benchmarks such as GQA, VQAv2, TextVQA, and VizWiz are released under the CC BY 4.0 license, while others, including ScienceQA, MME, and POPE, adopt the MIT license, and recent large-scale resources such as MMWet, MMBench, and MVBench are distributed under Apache-2.0. For video reasoning, we rely on LongVideoBench and MLVU (Apache-2.0) as well as VideoMME (MIT). On the software side, multimodal models like LLaVA and LLaVA-NEXT are covered by the Llama Community License, whereas InternVL3 is open-sourced under Apache-2.0. We have reviewed the terms of each license, ensured that all datasets and models are publicly available, and strictly limited their usage to non-commercial, academic research purposes. No proprietary or closed-access resources are included, which guarantees reproducibility, transparency, and compliance with community standards.

\renewcommand{\thetheorem}{D.\arabic{theorem}}
\renewcommand{\thefigure}{D.\arabic{figure}}
\renewcommand{\thetable}{D.\arabic{table}}
\renewcommand{\theequation}{D.\arabic{equation}}
\setcounter{figure}{0}
\setcounter{theorem}{0}
\setcounter{table}{0}
\setcounter{equation}{0}
\begin{table}[t]
  \centering
  \caption{\textbf{Performance comparisons on InternVL3-8B~\cite{zhu2025internvl3} across 10 image understanding benchmarks}. The best results in each setting are \textbf{bolded}, and the second-best are \underline{underlined}.}
  \label{tab:internvl3}
  \resizebox{\linewidth}{!}{
    \begin{tabular}{l|cccccccc|cc}
    \toprule
\textbf{Method} & \textbf{AI2D} & \textbf{TextVQA} & \textbf{ChartQA} & \textbf{OCRBench} & \textbf{HallBench} & \textbf{MME} & \textbf{MMB-EN} & \textbf{MMB-CN} & \textbf{Acc.} & \textbf{Average} \\
\noalign{\hrule height 1pt}
\rowcolor{gray!20}
\multicolumn{11}{c}{\textit{Upper Bound, All 1280 Tokens} ($\mathbf{100\%}$)} \\
\rowcolor{lightgray!25} 
\textcolor{gray!80}{InternVL3-8B} & \textcolor{gray!80}{85.28} & \textcolor{gray!80}{81.51} & \textcolor{gray!80}{85.07} & \textcolor{gray!80}{853} & \textcolor{gray!80}{50.02} & \textcolor{gray!80}{2393.22} & \textcolor{gray!80}{83.82} & \textcolor{gray!80}{82.54} & \textcolor{gray!80}{84.22} & \textcolor{gray!80}{100.0\%} \\
\noalign{\hrule height 1pt}
\rowcolor{gray!20}
\multicolumn{11}{c}{\textit{Retain 256 Tokens} \textcolor{Green}{($\downarrow\mathbf{80.0\%}$)}} \\
FastV & 82.21 & 74.34 & 70.58 & 632 & 48.45 & 2348,31 & 83.36 & 82.04 & 77.98 &  92.6\% \\
DivPrune & 80.88 & 64.70 & 57.51 & 477 & 38.63 & 2249.17 & 80.63 & 80.27 & 70.41 &  82.80\% \\
\rowcolor{cyan!25} \textbf{Script} & 82.87 & 75.97 & 72.20 & 640 & 48.98 & 2334.22 & 83.45 & 81.57 & \textbf{78.35} &  \textbf{93.03\%} \\
\noalign{\hrule height 1pt}
\rowcolor{gray!20}
\multicolumn{11}{c}{\textit{Retain 128 Tokens} \textcolor{Green}{($\downarrow\mathbf{90.0\%}$)}} \\
FastV & 77.35 & 63.55 & 46.82 & 426 & 42.75 & 2250.31 & 81.09 & 80.20 & 68.44 &  80.9\% \\
DivPrune & 76.45 & 55.44 & 42.58 & 378 & 37.57 & 2166.22 & 78.49 & 77.55 & 64.31 &  75.75\% \\
\rowcolor{cyan!25} \textbf{Script} & 79.88 & 67.65 & 50.78 & 471 & 44.46 & 2282.98 & 82.12 & 80.53 & \textbf{70.98} &  \textbf{84.29\%} \\
\bottomrule
\end{tabular}
  }
\end{table}

\newpage
\section{Additional Experimental Results}
\label{sec:addexp}

\subsection{Script for advanced open-source MLLM}

In addition to LLaVA, we further apply Script to one of the most advanced open-source MLLMs to date, InternVL3. The results are shown in Table~\ref{tab:internvl3}. Here, we fix the input resolution to 896$\times$896, yielding 1,280 visual tokens. Notably, unlike its performance on the LLaVA series, DivPrune exhibits a significant performance drop on InternVL3, as it does not account for the relevance to user instructions during pruning. In contrast, our Script jointly considers both diversity and relevance, consistently achieving the best performance across different reduction ratios. Specifically, even when 90\% of the visual tokens are removed, our method retains 83.9\% of the original performance, 3\% higher than the second-best FastV, demonstrating its effectiveness and adaptability in advanced MLLM architectures.

\subsection{Script for Large Parameters of Model}
\begin{table*}[t]
\vspace{-10pt}
\caption{\textbf{Performance comparisons on LLaVA-1.5-13B~\cite{liu2024llava-next} across 10 image understanding benchmarks}.}
\label{tab:llava-13b}
\setlength{\tabcolsep}{5pt}
\resizebox{\textwidth}{!}{
\begin{tabular}{lc|cccccccccc|cc}
\noalign{\hrule height 1pt}
\textbf{Method} & Venue &\textbf{$\text{VQA}^\text{V2}$} & \textbf{GQA} & \textbf{VizWiz} & \textbf{$\text{SQA}^\text{IMG}$} & \textbf{$\text{VQA}^\text{Text}$} & \textbf{POPE} & \textbf{MME} & \textbf{$\text{MMB}^\text{EN}$} & \textbf{$\text{MMB}^\text{CN}$} & \textbf{MMVet} & \textbf{Acc.} &{\textbf{Average}}\\
\noalign{\hrule height 1pt}
\rowcolor{gray!20}
\multicolumn{14}{c}{\textit{Upper Bound, 576 Tokens (100\%), 3.817 TFLOPs}}\\
\textcolor{gray!80}{LLaVA-1.5-13B}~\cite{liu2023llava} & \textcolor{gray!80}{\textit{Nips'23}} & \textcolor{gray!80}{80.10} & \textcolor{gray!80}{63.35} & \textcolor{gray!80}{53.36} & \textcolor{gray!80}{72.85} & \textcolor{gray!80}{61.24} & \textcolor{gray!80}{86.00} & \textcolor{gray!80}{1531.25} & \textcolor{gray!80}{68.55} & \textcolor{gray!80}{63.55}  &   \textcolor{gray!80}{36.22} & \textcolor{gray!80}{66.21}  &   \textcolor{gray!80}{100\%} \\
\noalign{\hrule height 1pt}
\rowcolor{gray!20}
\multicolumn{14}{c}{\textit{Retain 128 Tokens in Average ($\downarrow$ 77.8\%), $\sim$0.833 TFLOPs}} \\
FastV~\cite{chen2024fastv} & \textit{ECCV'24} & 75.53 & 58.13 & 54.46 & 74.32 & 58.76 & 75.55 & 1460.56 & 66.01 & 62.23 & 32.78 & 63.17 &  95.41\% \\
TRIM~\cite{TRIM}& \textit{COLING'25} & 76.34 & 59.14 & 49.87 & 72.14 & 55.08 & 86.38 & 1426.49 & 67.11 & 58.14 & 35.13 & 63.06 &  95.24\% \\
VisionZip~\cite{yang2024visionzip} & \textit{CVPR'25}  & 76.83 & 57.79 & 52.03 & 73.68 & 58.39 & 82.77 & 1449.42 & 67.14 & 62.95 & 36.01 & 64.01 &  96.67\% \\
DivPrune~\cite{DivPrune} & \textit{CVPR'25} & 77.41 & 59.21 & 53.35 & 72.28 & 58.40 & 86.81 & 1457.97 & 66.13 & 60.74 & 34.41 & 64.16 &  96.90\% \\
SparseVLM~\cite{zhang2024sparsevlm} & \textit{ICML'25} & 77.61 & 59.46 & 51.74 & 74.23 & 59.93 & 85.02 & 1487.59 & 68.14 & 62.36 & 35.92 & 64.87 &  97.97\% \\
\rowcolor{cyan!25} \textbf{Script (Ours)} & \textbf{\textit{Proposed}} & 77.87 & 59.27 & 52.49 & 73.29 & 58.45 & 87.31 & 1498.30 & 67.15 & 61.35 & 36.12 & \textbf{64.83} &  \textbf{97.64\%} \\
\noalign{\hrule height 1pt}
\rowcolor{gray!20}
\multicolumn{14}{c}{\textit{Retain 64 Tokens in Average ($\downarrow$ 88.9\%), $\sim$0.415 TFLOPs}} \\
FastV~\cite{chen2024fastv} & \textit{ECCV'24} & 65.73 & 51.39 & 53.48 & 73.01 & 53.74 & 56.49 & 1246.54 & 59.12 & 55.18 & 26.89 & 55.78 &  84.24\% \\
TRIM~\cite{TRIM}& \textit{COLING'25} & 73.12 & 57.85 & 49.23 & 72.00 & 52.10 & 86.65 & 1406.12 & 65.04 & 52.57 & 27.98 & 60.73 &  91.72\% \\
VisionZip~\cite{yang2024visionzip} & \textit{CVPR'25}  & 73.75 & 56.23 & 53.12 & 74.29 & 57.41 & 75.67 & 1379.66 & 64.94 & 61.33 & 33.40 & 61.91 &  93.51\% \\
DivPrune~\cite{DivPrune} & \textit{CVPR'25} & 75.20 & 57.91 & 54.49 & 71.67 & 57.54 & 84.35 & 1454.29 & 64.13 & 59.87 & 29.31 & 62.73 &  94.74\% \\
SparseVLM~\cite{zhang2024sparsevlm} & \textit{ICML'25} & 73.12 & 55.91 & 52.17 & 72.09 & 57.14 & 77.91 & 1374.35 & 65.12 & 60.23 & 32.94 & 61.62 &  93.07\% \\
\rowcolor{cyan!25} \textbf{Script (Ours)} & \textbf{\textit{Proposed}} & 76.67 & 59.64 & 53.55 & 72.75 & 57.86 & 87.19 & 1466.98 & 65.85 & 58.72 & 36.20 & \textbf{64.20} &  \textbf{96.96\%} \\
\noalign{\hrule height 1pt}
\rowcolor{gray!20}
\multicolumn{14}{c}{\textit{Retain 32 Tokens in Average ($\downarrow$ 94.5\%), $\sim$0.208 TFLOPs}} \\
FastV~\cite{chen2024fastv} & \textit{ECCV'24} &  61.13 & 48.36 & 51.68 & 72.37 & 50.73 & 53.99 & 1198.33 & 54.27 & 53.22 & 23.65 & 52.93 & 79.95\% \\
TRIM~\cite{TRIM}& \textit{COLING'25} & 69.83 & 55.67 & 48.58 & 70.64 & 49.65 & 85.85 & 1284.57 & 63.13 & 45.45 & 26.34 & 57.79 & 86.28\% \\
VisionZip~\cite{yang2024visionzip} & \textit{CVPR'25}   & 68.40 & 52.71 & 53.09 & 72.79 & 55.20 & 66.85 & 1257.67 & 61.32 & 55.58 & 29.43 & 57.81 & 87.62\% \\
DivPrune~\cite{DivPrune} & \textit{CVPR'25} & 72.00 & 56.20 & 54.55 & 70.89 & 54.76 & 79.12 & 1405.02 & 61.47 & 57.12 & 27.98 & 60.42 & 91.25\% \\
SparseVLM~\cite{zhang2024sparsevlm} & \textit{ICML'25} &  71.57 & 54.05 & 51.54 & 70.86 & 53.74 & 77.45 & 1327.37 & 62.88 & 58.91 &  28.13 & 59.55 & 89.94\%\\
\rowcolor{cyan!25} \textbf{Script (Ours)} & \textbf{\textit{Proposed}} & 75.25 & 58.75 & 53.35 & 71.99 & 55.43 & 87.31 & 1421.10 & 63.79 & 56.36 & 30.77 & \textbf{62.41} & \textbf{94.26\%} \\
\noalign{\hrule height 1pt}
\end{tabular}
}
\end{table*}

\begin{table*}[t]
\caption{\textbf{Performance comparisons on LLaVA-Next-13B~\cite{liu2024llava-next} across 8 image understanding benchmarks}. The best results in each setting are \textbf{bolded}, and the second-best are \underline{underlined}.}
\label{tab:llava-next-13b}
\setlength{\tabcolsep}{5pt}
\resizebox{\textwidth}{!}{
\begin{tabular}{lc|cccccccccc|cc}
\noalign{\hrule height 1pt}
\textbf{Method} & Venue &\textbf{$\text{VQA}^\text{V2}$} & \textbf{GQA} & \textbf{VizWiz} & \textbf{$\text{SQA}^\text{IMG}$} & \textbf{$\text{VQA}^\text{Text}$} & \textbf{POPE} & \textbf{MME} & \textbf{$\text{MMB}^\text{EN}$} & \textbf{$\text{MMB}^\text{CN}$} & \textbf{MMVet} & \textbf{Acc.} &{\textbf{Average}}\\
\noalign{\hrule height 1pt}
\rowcolor{gray!20}
\multicolumn{14}{c}{\textit{Upper Bound, 576 Tokens (100\%), 3.817 TFLOPs}}\\
\textcolor{gray!80}{LLaVA-NeXT-13B}~\cite{liu2024llava-next} & \textcolor{gray!80}{\textit{Nips'23}} & \textcolor{gray!80}{82.30} & \textcolor{gray!80}{64.35} & \textcolor{gray!80}{59.15} & \textcolor{gray!80}{73.15} & \textcolor{gray!80}{63.20} & \textcolor{gray!80}{85.20} & \textcolor{gray!80}{1539.50} & \textcolor{gray!80}{68.55} & \textcolor{gray!80}{61.05}  &   \textcolor{gray!80}{45.02} & \textcolor{gray!80}{67.91}  &   \textcolor{gray!80}{100\%} \\
\noalign{\hrule height 1pt}
\rowcolor{gray!20}
\multicolumn{14}{c}{\textit{Retain 640 Tokens in Average ($\downarrow$ 77.8\%), $\sim$4.627 TFLOPs}} \\
FastV~\cite{chen2024fastv} & \textit{ECCV'24} & 79.34 & 60.89 & 56.44 & 71.75 & 60.74 & 80.24 & 1536.7 & 65.25 & 59.39 & 43.68 & 65.45 &  96.38\% \\
TRIM~\cite{TRIM}& \textit{COLING'25} & 79.34 & 63.19 & 54.31 & 71.42 & 57.64 & 87.31 & 1543.36 & 68.75 & 61.21 & 42.34 & 66.26 &  97.57\% \\
VisionZip~\cite{yang2024visionzip} & \textit{CVPR'25}  & 79.37 & 62.79 & 56.12 & 70.83& 61.91 & 85.83 & 1529.22 & 68.31 & 62.61 & 46.98 & 67.13 &  98.84\% \\
DivPrune~\cite{DivPrune} & \textit{CVPR'25}  & 80.34 & 63.54 & 56.73 & 72.12 & 59.42 & 86.44 & 1531.41 & 67.51 & 62.59 & 39.10 & 66.42 &  97.81\% \\
SparseVLM~\cite{zhang2024sparsevlm} & \textit{ICML'25}  & 79.49 & 62.74 & 57.55 & 72.35 & 62.48 & 85.56 & 1573.74 & 68.85 & 64.10 & 41.35 & 67.31 &  99.11\% \\
\rowcolor{cyan!25} \textbf{Script (Ours)} & \textbf{\textit{Proposed}}& 81.14 & 64.22 & 57.02 & 72.08 & 61.40 & 87.31 & 1552.61 & 68.93 & 61.91 & 47.13 & \textbf{67.87} &  \textbf{99.95\%} \\
\noalign{\hrule height 1pt}
\rowcolor{gray!20}
\multicolumn{14}{c}{\textit{Retain 320 Tokens in Average ($\downarrow$ 88.9\%), $\sim$2.314 TFLOPs}} \\
FastV~\cite{chen2024fastv} & \textit{ECCV'24} & 66.98 & 54.36 & 53.33 & 70.11 & 55.34 & 64.10 & 1288.0 & 59.48 & 54.14 & 30.52 & 57.27 &  84.34\% \\
TRIM~\cite{TRIM}& \textit{COLING'25} & 75.94 & 61.13 & 52.12 & 69.59 & 52.18 & 87.42 & 1484.6 & 67.53 & 57.34 & 33.41 & 63.09 &  92.90\% \\
VisionZip~\cite{yang2024visionzip} & \textit{CVPR'25} & 76.18 & 60.37 & 54.84 & 70.12 & 60.17 & 82.33 & 1417.13 & 66.45 & 62.53 & 41.21 & 64.50 &  94.96\% \\
DivPrune~\cite{DivPrune} & \textit{CVPR'25}  & 78.13 & 61.28 & 55.10 & 72.39 & 57.46 & 85.32 & 1465.01 & 65.91 & 61.59 & 39.42 & 64.98 &  95.68\% \\
SparseVLM~\cite{zhang2024sparsevlm} & \textit{ICML'25} & 76.75 & 60.91 & 54.75 & 70.94 & 60.01 & 81.35 & 1491.46 & 68.10 & 63.45 & 39.36 & 65.05  &  95.78\% \\
\rowcolor{cyan!25} \textbf{Script (Ours)} & \textbf{\textit{Proposed}}  & 79.64 & 63.11 & 55.31 & 71.46 & 58.74 & 87.63 & 1501.15 & 66.73 & 61.48 & 42.32 & \textbf{66.16} &  \textbf{97.42\%} \\
\noalign{\hrule height 1pt}
\rowcolor{gray!20}
\multicolumn{14}{c}{\textit{Retain 160 Tokens in Average ($\downarrow$ 94.4\%), $\sim$1.156 TFLOPs}} \\
FastV~\cite{chen2024fastv} & \textit{ECCV'24} & 62.67  & 50.79 & 52.03 & 69.11 & 51.74 & 62.95 & 1229.18 & 56.48 & 53.16 & 27.36 & 54.77 & 80.66\% \\
TRIM~\cite{TRIM}& \textit{COLING'25} & 72.15 & 58.93 & 51.12 & 69.19 & 49.12 & 87.00 & 1392.30 & 65.75 & 51.65 & 27.68 & 60.21 & 88.66\% \\
VisionZip~\cite{yang2024visionzip} & \textit{CVPR'25}  & 72.45 & 57.81 & 52.35 & 69.47 & 58.69 & 76.38 & 1393.49 & 64.85 & 60.01 & 35.39 & 61.70 & 90.86\% \\
DivPrune~\cite{DivPrune} & \textit{CVPR'25}  & 75.64 & 60.30 & 53.15 & 71.44 & 56.43 & 81.29 & 1436.17 & 65.19 & 60.59 & 37.34 & 63.31 & 93.22\% \\
SparseVLM~\cite{zhang2024sparsevlm} & \textit{ICML'25} & 74.62  & 59.79 & 52.38 & 69.89 & 55.89 & 80.92 & 1429.44 & 65.19 & 59.17 & 36.48 & 62.58 & 92.15\%\\
\rowcolor{cyan!25} \textbf{Script (Ours)} & \textbf{\textit{Proposed}} & 77.85 & 62.32 & 53.19 & 71.37 & 56.47 & 88.83 & 1476.99 & 65.49 & 59.91 & 40.34 & \textbf{64.96} & \textbf{95.65\%} \\
\noalign{\hrule height 1pt}
\end{tabular}
}
\end{table*}

To evaluate the effectiveness of our proposed method on larger language models, we apply Script to two models equipped with 13B LLMs: LLaVA-1.5-13B and LLaVA-NeXT-13B. 
The results are presented in Table~\ref{tab:llava-13b} and Table~\ref{tab:llava-next-13b}.
The larger language models lead to significant performance improvements and also make MLLMs less sensitive to visual token pruning. Among various pruning strategies, text-attention-based methods benefit the most from scaling up the language model, indicating that a larger LLM brings more accurate attention. 
Across different types of pruning methods, Script consistently outperforms all other approaches under various reduction ratios. With 77.8\% of visual tokens removed, our method retains 97.64\% and 99.95\% of the original performance on LLaVA-1.5-13B and LLaVA-NeXT-13B, respectively, demonstrating its effectiveness on larger language models.


\begin{table}[!t]
\vspace{-5mm}
\renewcommand{\arraystretch}{1}
\begin{center}
\setlength{\tabcolsep}{2.5pt}
\caption{Hyperparameter sensitivity analysis of Scripts on LLaVA-1.5-7B with 64 tokens retained. The hyperparameters include the graph threshold $\tau$ (0.1, 0.3, 0.5, 0.7, 0.9), scaling factor $\gamma$ (1, 10, 50, 100), and kernel choices ($S$ vs. $S'$). \textbf{Bold} is the default setting (Hyperparameter).}
\resizebox{\linewidth}{!}{
\begin{tabular}{ccccccccccccc}
\toprule
\multirow{2}{*}{\textbf{Hyperparameter}} & \multicolumn{10}{c}{\textbf{Benchamark}} & \multirow{2}{*}{\textbf{Average}} & \multirow{2}{*}{\textbf{Relative}}\\
\cmidrule(lr){2-11}
& \textbf{$\text{VQA}^\text{V2}$} & \textbf{GQA} & \textbf{VizWiz} & \textbf{$\text{SQA}^\text{IMG}$} & \textbf{$\text{VQA}^\text{Text}$} & \textbf{POPE} & \textbf{MME} & \textbf{$\text{MMB}^\text{EN}$} & \textbf{$\text{MMB}^\text{CN}$} & \textbf{MMVet}  \\
\midrule
\rowcolor{gray!20}
\multicolumn{13}{c}{\textit{Upper Bound, 576 Tokens (100\%), 3.817 TFLOPs}}\\
& \textcolor{gray!80}{61.94} & \textcolor{gray!80}{64.09} & \textcolor{gray!80}{58.10} & \textcolor{gray!80}{1507.06} & \textcolor{gray!80}{86.96} & \textcolor{gray!80}{69.41} & \textcolor{gray!80}{78.50} & \textcolor{gray!80}{58.20} & \textcolor{gray!80}{50.32}  &   \textcolor{gray!80}{31.82} & \textcolor{gray!80}{63.47}  &   \textcolor{gray!80}{100\%} \\
\midrule
\rowcolor{gray!20} \multicolumn{13}{c}{$\tau$ (graph threshold, \textbf{0.1, 0.3, 0.5, 0.7, 0.9})} \\
0.1 & 59.07  & 61.30 & 51.43  & \textbf{1421.74} & 84.76 & 66.88  & 73.98 & \textbf{55.31} & \textbf{54.44} & 28.76  & 60.71 & 95.64\% \\
\textbf{0.3} & 59.28  & \textbf{61.90} &\textbf{52.93}  & 1412.08 & \textbf{86.95} & \textbf{68.65}  & \textbf{75.08} & 55.20 & 54.31 & \textbf{29.96}  & \textbf{61.49} & \textbf{96.88\%} \\
0.5 & \textbf{59.33}  & \textbf{61.90} & 52.71  & 1409.31 & 85.16 & 68.00  & 74.28 & 55.10 & 54.07 & 29.06  & 61.01 & 96.12\% \\
0.7 & 58.41  & 60.92 & 52.48  & 1394.89 & 84.05 & 67.45  & 74.72 & 54.99 & 53.98 & 28.77 & 60.56 & 95.42\% \\
0.9 & 58.77  & 60.52 & 51.94  & 1377.28 & 85.74 & 67.34  & 74.11 & 53.82 & 53.99 & 29.41  & 60.45 & 95.25\% \\
\rowcolor{gray!20} \multicolumn{13}{c}{$\gamma$ (Scaling factor)} \\
1 &59.11  & 61.50 & 52.77  & 1399.22 & 86.80 & 66.40  & 74.55 & 53.90 & 53.50 & 29.87 & 60.83 & 95.85\%\\
\textbf{5} &59.28  & \textbf{61.90} & 52.93  & 1412.08 & \textbf{86.95} & \textbf{68.65}  & \textbf{75.08} & 55.20 & \textbf{54.31} & 29.96 & \textbf{61.49} & \textbf{96.88\%} \\
10 &59.30  & 61.80 & \textbf{52.95}  & \textbf{1415.20} & 86.75 & 67.32  & 74.68 & \textbf{55.70} & 54.10 & 29.20 & 61.25 & 96.51\%\\
50 &59.35  & \textbf{61.90} & 52.90  & 1407.75 & 86.60 & 67.75  & 73.88 & 54.75 & 54.20 & 29.55 & 61.13 & 96.31\%\\
100 &\textbf{59.38}  & \textbf{61.90} & 52.90  & 1412.02 & 86.90 & 68.20  & 74.35 & 54.66 & 54.30 & \textbf{30.30} & 61.35 & 96.66\%\\
\rowcolor{gray!20} \multicolumn{13}{c}{Kernel Choices \textbf{($S$ vs. $S'$)}} \\
$S$ & \textbf{59.58}  & 60.30 & 51.75  & 1399.20 & 85.40 & 66.25  & \textbf{75.30} & 54.40 & 53.11 & 28.77  & 60.47 & 95.28\%\\
\textbf{$S'$} &59.28  & \textbf{61.90} & \textbf{52.93}  & \textbf{1412.08} & \textbf{86.95} & \textbf{68.65}  & 75.08 & \textbf{55.20} & \textbf{54.31} & \textbf{29.96}  & \textbf{61.49} & 96.88\%\\
\bottomrule
\end{tabular}
}
\label{tab:sensitivity}
\end{center}
\vspace{-2mm}
\end{table}

\subsection{Hyperparameter sensitivity \& guidance. }
\label{sec:sensitivity}

To better understand the robustness of Scripts, we conduct a hyperparameter sensitivity analysis under the setting where 64 tokens are retained. Specifically, we vary the graph threshold $\tau$ , the scaling factor $\gamma$ in Eq.~\ref{eq:score}, and kernel 
choices ($S$ vs. $S'$). 
As shown in Table~\ref{tab:sensitivity}, our method demonstrates remarkable robustness to the choice of threshold: across $\tau \in [0.1, 0.9]$, the average relative performance consistently remains above $95\%$, indicating minimal sensitivity to this hyperparameter. Moreover, performance varies smoothly and predictably with $\tau$, without instability or abrupt drops, further confirming the stability and reliability of our approach under different settings. In practice, extremely low (0.1) or high (0.9) values lead to slightly worse performance, while moderate settings achieve a better trade-off. Among them, $\tau = 0.3$ yields the strongest overall results. 
For the scaling factor $\gamma$, all values produce competitive outcomes, but $\gamma = 5$ achieves the most consistent improvements across benchmarks. In terms of kernel design, $S'$ consistently 
outperforms $S$, suggesting that it better captures redundancy patterns among tokens. In the comparison between $S$ and $S'$, the former represents the redundancy of tokens informed by visual similarity, while the latter incorporates both visual redundancy and query relevance.
Based on these observations, we use $\tau = 0.3$, $\gamma = 5$, and $S'$ as the default configuration.

It is worth noting that in Script, the parameter $k$ in DPP is not treated as a tunable hyperparameter but as a direct control of the number of retained tokens. This is conceptually equivalent to the pruning ratio $p$ in GSP, with the correspondence:
\begin{equation}
k = (1 - p)\,n,
\end{equation}
where $n$ is the number of original tokens. In other words, while $p$ specifies the proportion of redundant tokens to be discarded, $k$ explicitly determines the number of tokens to be preserved by ranking redundancy from low to high. Detailed results under different $k$ settings are reported in Section~\ref{sec:experiment} and Appendix~\ref{sec:addexp}.

\begin{table}[!t]
\vspace{-4mm}
\renewcommand{\arraystretch}{1}
\begin{center}
\setlength{\tabcolsep}{2.5pt}
\caption{Intersection dynamics $(\text{GSP} \cap \text{QCSP})$}
\resizebox{\linewidth}{!}{
\begin{tabular}{ccccccccccc}
\toprule
\multicolumn{10}{c}{\textbf{Benchamark}} & \multirow{2}{*}{\textbf{Average}}\\
\cmidrule(lr){1-10}
\textbf{$\text{VQA}^\text{V2}$} & \textbf{GQA} & \textbf{VizWiz} & \textbf{$\text{SQA}^\text{IMG}$} & \textbf{$\text{VQA}^\text{Text}$} & \textbf{POPE} & \textbf{MME} & \textbf{$\text{MMB}^\text{EN}$} & \textbf{$\text{MMB}^\text{CN}$} & \textbf{MMVet} \\
\midrule
\rowcolor{gray!20}
\multicolumn{11}{c}{\textit{Retain 192 Tokens in Average ($\downarrow$ 66.7\%), $\sim$1.253TFLOPs}} \\
45.69 & 45.48 & 54.40 & 47.28 &48.72 & 44.53 & 49.32 & 52.44 & 48.21 & 49.08 & 48.52\\
\rowcolor{gray!20} \multicolumn{11}{c}{\textit{Retain 128 Tokens in Average ($\downarrow$ 77.8\%), $\sim$0.833 TFLOPs}} \\
33.29  & 33.78 & 40.89 & 33.68 &36.75 & 31.55 & 31.50 & 42.12 & 37.51 & 36.26 & 35.73 \\
\rowcolor{gray!20} \multicolumn{11}{c}{\textit{Retain 64 Tokens in Average ($\downarrow$ 88.9\%), $\sim$0.415 TFLOPs}} \\
21.89  & 23.05 & 26.25&21.32 & 26.74& 19.27& 20.81 & 28.71 & 23.42 & 24.58 & 23.61\\
\rowcolor{gray!20} \multicolumn{11}{c}{\textit{Retain 32 Tokens in Average ($\downarrow$ 94.5\%), $\sim$0.208 TFLOPs}} \\
15.75 & 16.45 & 17.49&15.53 & 19.38 & 11.95 & 14.85 &  15.51 & 18.83  &18.44 & 16.42\\
\rowcolor{gray!20} \multicolumn{11}{c}{\textit{Retain 16 Tokens in Average ($\downarrow$ 97.3\%), $\sim$0.103 TFLOPs}} \\
10.73  & 11.16 & 10.83 & 11.77 & 11.57 & 6.82 & 9.84 & 11.72 & 12.62 & 12.94 & 11.00 \\
\bottomrule
\end{tabular}
}
\label{tab:overlap}
\end{center}
\end{table}

\subsection{Overlap analysis between GSP and QCSP.}
\label{sec:overlap}

GSP and QCSP are two complementary modules that operate from different perspectives and together determine which tokens are retained. Table~\ref{tab:overlap} quantifies the overlap size between the two modules under different retention budgets. 

Overall, the overlap ratio decreases steadily as the pruning ratio increases. This trend is intuitive: when fewer tokens are allowed to remain, it becomes more difficult for both modules to consistently select the same tokens, given their distinct selection criteria. For example, retaining 192 tokens leads to an average overlap of $66.7\%$, while retaining only 16 tokens reduces the overlap to $97.3\%$, indicating that QCSP dominates token retention under stricter pruning. 

Another noteworthy observation is that the overlap never reaches $100\%$, which necessitates the use of QCSP as a fallback mechanism to ensure the desired number of tokens is met. As the pruning becomes more aggressive, QCSP contributes proportionally more to the final selection. At the same time, our ablation studies confirm that GSP provides complementary benefits by capturing structural redundancy patterns that QCSP alone may overlook. 

These results highlight the importance of the cooperative design: GSP guides pruning with a global structural view, while QCSP ensures query relevance and satisfies token budget constraints. Their interaction achieves a balance between efficiency and accuracy, with QCSP gradually taking on a stronger role as the pruning ratio increases.

\subsection{Statistical Significance Experiments}
\begin{wraptable}[12]{r}{0.5\textwidth}  
\centering
\vspace{-5mm}
\caption{
Statistical comparison between our method and other baselines on \textsc{MME} benchmark with ratain 160 tokens. \textbf{StdDev} denotes standard deviation. Reported $p$-values correspond to the paired one-sided $t$-test and Wilcoxon signed-rank test. All experiments are set with the significance level of $\alpha = 5\%$.
}
\vspace{-3mm}
\resizebox{\linewidth}{!}{
\begin{tabular}{lcccc}
\toprule
\textbf{Method} & \textbf{Mean (\%)} & \textbf{StdDev} & \textbf{$t$-test $p$} & \textbf{Wilcoxon $p$}  \\
\midrule
FastV  & 1076.81  & 2.62  &  \(1.01\times10^{-43}\) & \(9.54\times10^{-7}\)\\
TRIM &  1316.74 &  3.41 & \(8.96\times10^{-35}\) & \(9.54\times10^{-7}\) \\
VisionZip & 1349.28  & 2.34  & \(8.49\times10^{-36}\) & 
\(9.54\times10^{-7}\) \\
DivPrune &  1347.72 &  2.04 & \(1.90\times10^{-36}\) & \(9.54\times10^{-7}\) \\
SparseVLM & 1369.62  & 2.39  & \(7.58\times10^{-35}\) & \(9.54\times10^{-7}\) \\
\textbf{Script(Ours)} & 1487.98 & 0.94 & 1.00 & 1.00 \\
\bottomrule
\end{tabular}
}
\label{tab:significance}
\end{wraptable}

To assess the robustness and statistical reliability of our method, we therefore conduct 20 independent runs using LLaVA-Next-7B under identical experimental settings with varying random seeds, as shown in Table \ref{tab:significance}. 
All statistical comparisons are made consistently against the Script.
To determine whether Script is statistically better than other methods, we accordingly perform paired one-sided $t$-tests and Wilcoxon signed-rank tests. We explicitly evaluate the null hypothesis $H_0$: Script performs the same as the others, against the alternative hypothesis $H_1$: Script performs better. 
Since the resulting $p$-values are significantly lower than the significance threshold of $\alpha = 0.05$, we can confidently reject the null hypothesis $H_0$. This suggests that the observed performance difference is statistically significant.

\renewcommand{\thetheorem}{E.\arabic{theorem}}
\renewcommand{\thefigure}{E.\arabic{figure}}
\renewcommand{\thetable}{E.\arabic{table}}
\setcounter{figure}{0}
\setcounter{theorem}{0}
\setcounter{table}{0}
\renewcommand{\theequation}{E.\arabic{equation}}
\setcounter{equation}{0}

\section{Additional Visualization Results}
\label{sec:addvis}
\begin{figure}[ht]
    \centering
    \includegraphics[width=\linewidth]{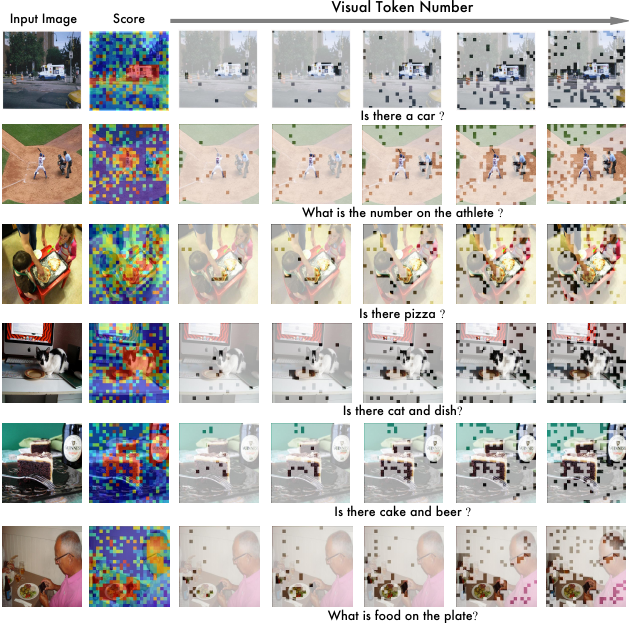}
    \caption{\textbf{Visualizations of relevance scores and retained tokens.} Each visualization illustrates the spatial attention allocated by models to regions corresponding to various textual instructions, demonstrating the capacity of pre-trained multimodal models to identify and focus on task-specific visual elements.}
    \label{fig:case_study}
\end{figure}

\begin{figure}[ht]
    \centering
    \includegraphics[width=\linewidth]{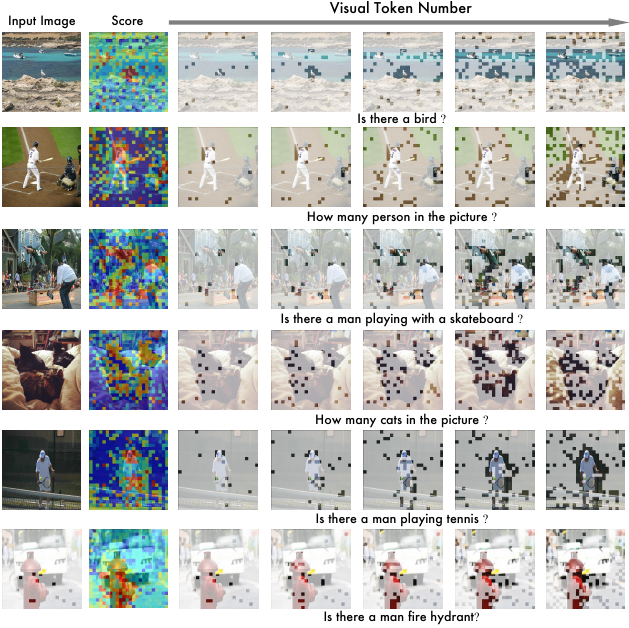}
    \caption{\textbf{Visualizations of relevance scores and retained tokens.} Each visualization illustrates the spatial attention allocated by models to regions corresponding to various textual instructions, demonstrating the capacity of pre-trained multimodal models to identify and focus on task-specific visual elements.}
    \label{fig:case_study1}
\end{figure}

\subsection{Case study}
In this section, we provide additional visualizations that comprehensively illustrate the relevance scores and retained visual tokens as shown in Figures \ref{fig:case_study} and \ref{fig:case_study1}. These visualizations further emphasize the strengths of models utilizing language-image pre-training, showcasing their enhanced capability to align textual instructions accurately with the corresponding visual regions. Specifically, the visualizations depict clear and intuitive correspondences: for instance, when instructions involve detecting the presence of specific objects or counting individuals or animals, the relevance maps highlight pertinent regions directly associated with these queries. Such precise spatial attention enables effective and efficient visual token pruning, significantly reducing redundancy by retaining only informative regions. This targeted retention not only improves computational efficiency but also boosts interpretability by transparently displaying the reasoning processes of multimodal large language models. Overall, these visualizations underscore the robustness and adaptability of pre-trained models in dynamically identifying and concentrating on task-specific visual details essential for accurate multimodal understanding.

\begin{figure}[htbp]
    \centering
    \includegraphics[width=\linewidth]{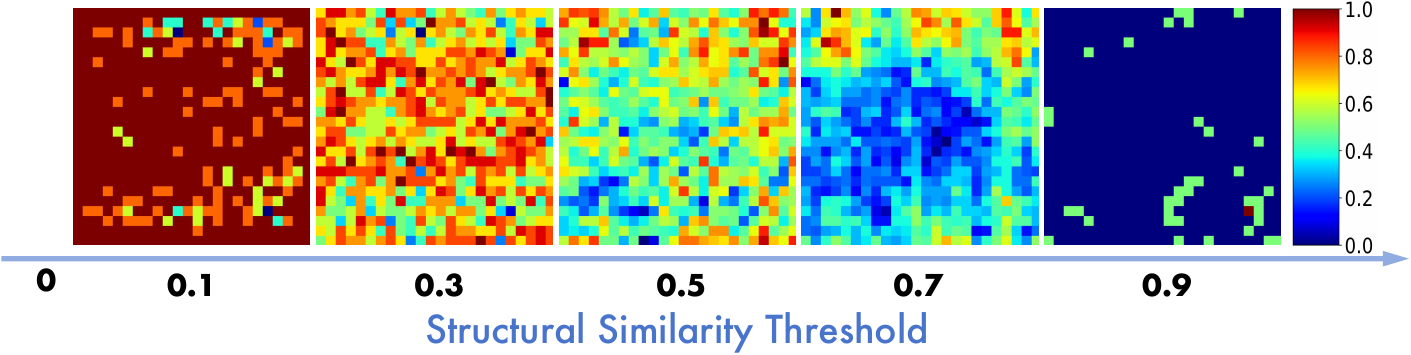}
    \caption{\textbf{Visualization of structural redundancy under different thresholds.} We show patch-level redundancy maps computed from CLIP-ViT features on images from the COCO dataset. Redundancy is defined as the average cosine similarity between each patch and its spatial neighbors. \textbf{\textcolor{red}{Red}} regions indicate high structural redundancy, while \textbf{\textcolor{blue}{blue}} regions are more distinctive. As the threshold increases (from left to right: 0.1 to 0.9), the resulting selection becomes increasingly sparse, focusing on structurally salient regions.}
    \label{fig:redundancy-score}
\end{figure}

\subsection{Redundancy Visualization Across Thresholds}
To better understand how structural redundancy varies across visual patches, we visualize the \textit{redundancy score maps} under multiple thresholds ($\tau= 0.1, 0.3, 0.5, 0.7, 0.9$), as shown in Fig.~\ref{fig:redundancy-score}. For each image, a patch-wise redundancy score is computed by averaging the cosine similarity of each patch with its 8-connected spatial neighbors. Binary masks are then obtained by applying different thresholds $\tau$ to these scores.

From these visualizations, several detailed observations emerge:

\noindent\textbf{1. Very low thresholds saturate the map.}  
At $\tau = 0.1$, almost every patch is marked redundant, producing a mask that is nearly saturated. This indicates that even weak local correlations are enough to pass the threshold, causing both background and structured regions to be flagged. Such masks preserve nearly all information but fail to distinguish between repetitive and unique content, providing little benefit for redundancy reduction.

\noindent\textbf{2. Progressive sparsification with increasing thresholds.}  
As $\tau$ increases to $0.3$ and $0.5$, the maps become less dense and begin to highlight differences between structurally homogeneous regions and structurally diverse ones. Patches in flat or repetitive textures remain marked as redundant, while more irregular patches start to drop out. This progression shows how the threshold acts as a filter, gradually shifting the representation from “inclusive” to “selective.”

\noindent\textbf{3. Emergence of discriminative structures at mid thresholds.}  
Around $\tau = 0.5$, the contrast between redundant and non-redundant patches becomes clearer. Large homogeneous regions are consistently retained as redundant, but edges, boundaries, and irregular textures are less frequently included. This suggests that mid thresholds achieve a balance: they still capture redundancy for compression purposes, yet they allow distinctive features to remain available for recognition or further processing.

\noindent\textbf{4. Strong filtering at high thresholds.}  
At $\tau = 0.7$ and especially $\tau = 0.9$, the redundancy maps become extremely sparse. Only the most distinctive patches, which sharply differ from their neighbors, survive as non-redundant. This makes the mask highly selective, isolating unique structures while discarding most background and repeated patterns. However, such aggressiveness may also risk losing subtle contextual cues that are semantically useful.

\noindent\textbf{5. Spatial clustering of redundancy.}  
Across all thresholds, redundant patches tend not to appear as isolated points but rather form spatially contiguous clusters. This behavior indicates that redundancy is inherently a local phenomenon, strongly driven by the short-range similarity between neighboring patches. It also reinforces the idea that large uniform areas will consistently be flagged together, rather than in scattered fragments.

\noindent\textbf{Summary of trade-offs.}  
Threshold selection directly governs the density and informativeness of the redundancy maps. Low thresholds preserve nearly all content but provide little compression, while high thresholds isolate only the most distinctive patches but risk losing contextual information. Mid-level thresholds (e.g., $\tau \approx 0.5$) offer a favorable compromise, retaining structural diversity while still filtering out redundant clusters. This motivates our pruning strategy, which leverages structural redundancy to maintain discriminative information while reducing redundancy.

\subsection{Error Analysis}
\label{sec:Error_Analysis}
\begin{figure}[!t]
    \centering
    \includegraphics[width=\linewidth]{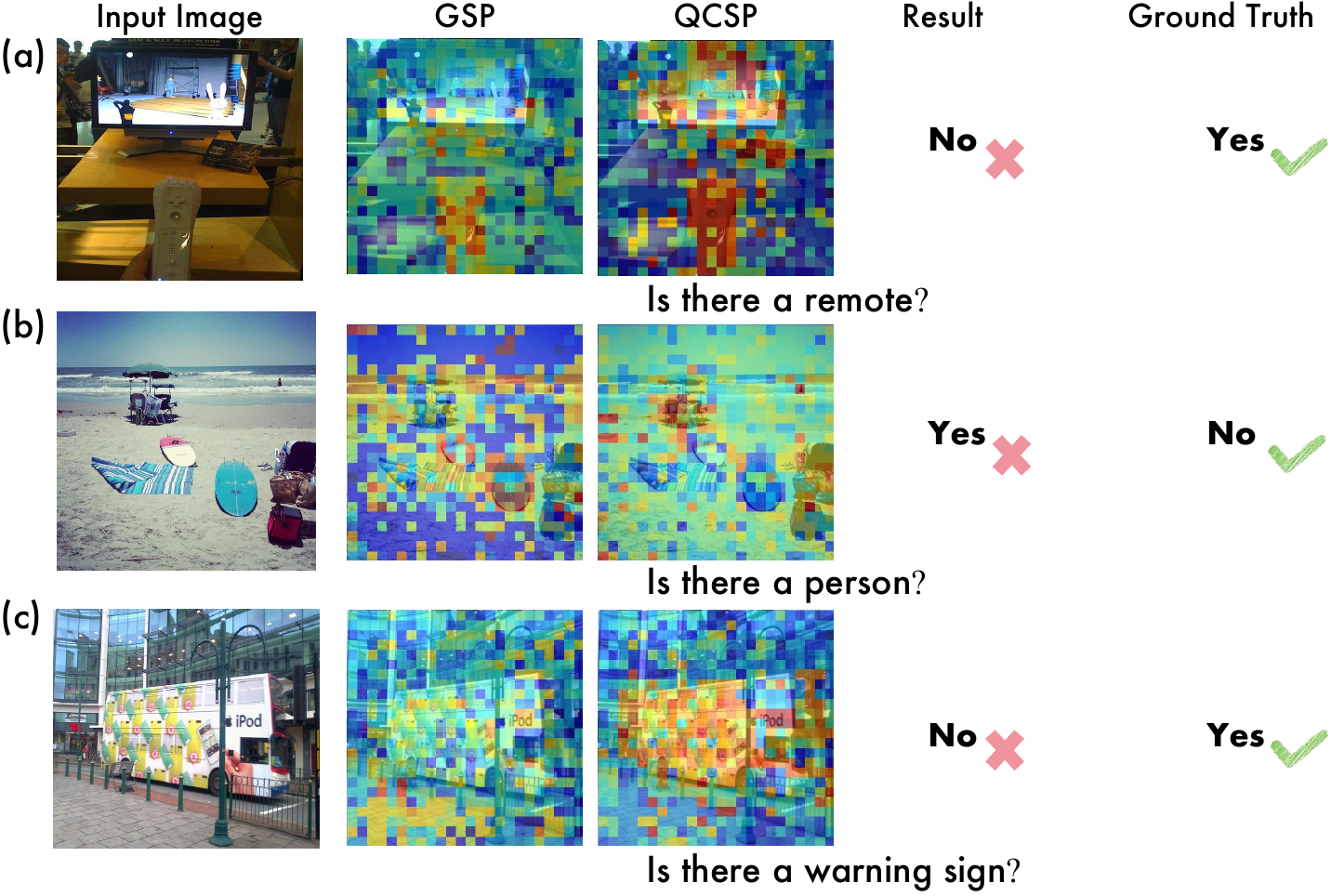}
    \caption{\textbf{Error case analysis.} Representative failure patterns when pre-trained multimodal models process task-specific queries, comparing query, Graph-Structured Pruning (GSP), Query-Conditioned Semantic Pruning (QCSP), model prediction, and ground truth. (a) \emph{Model capability:} success when textual guidance suffices (GSP correct, QCSP correct). (b) \emph{Text–visual misalignment:} errors from limited CLIP encoder alignment (GSP correct, QCSP incorrect). (c) \emph{Visual feature deficiency:} failures due to inadequate CLIP visual representations, leading to incorrect grounding and relevance (GSP incorrect, QCSP incorrect).}
    \label{fig:error_case}
\end{figure}

The case study in Figures \ref{fig:case_study} and \ref{fig:case_study1} has provided an excellent 
visualization of a success case. To provide a more balanced perspective, it is equally important 
to include and discuss failure cases. These highlight the inherent limitations 
of current pruning strategies and offer insight into potential directions for improvement.
As shown in Figure \ref{fig:error_case}, we highlight three illustrative types of errors, though in reality these categories are not mutually exclusive and some overlap may exist.

\begin{itemize}
    \item[(a)] \textbf{Model capability error.} This case illustrates the inherent limitations of 
    the underlying LLM. Both GSP and QCSP assign relatively high retention scores to the visual 
    token corresponding to the remote control. However, the final prediction is still incorrect. 
    This suggests that even when query-relevant tokens are preserved, reasoning failures may occur 
    due to the LLM’s limited capacity in associating visual evidence with the query semantics.

    \item[(b)] \textbf{Text–visual misalignment.} Here, QCSP assigns higher scores to tokens related 
    to the chairs under the parasol, while GSP distributes attention in a scattered manner. The 
    model incorrectly concludes that a person is present. This reflects a semantic drift issue, 
    where token relevance scoring does not fully capture the nuanced mapping between textual 
    concepts (``person'') and visual regions (``chair''), leading to confusion in visually similar 
    contexts. Such errors highlight the need for stronger cross-modal grounding mechanisms that 
    can better align text queries with the correct visual entities.

    \item[(c)] \textbf{Compounded pruning error.} In this example, both GSP and QCSP assign 
    misleading values to tokens, causing the warning sign to be pruned or overlooked. As a result, 
    the system outputs an incorrect response. Unlike case (a), the error here stems from cumulative 
    misjudgments at the pruning stage rather than downstream reasoning. This indicates that pruning 
    errors can directly suppress crucial evidence, creating an information bottleneck that the LLM 
    cannot recover from.
\end{itemize}

In summary, these failure cases expose three distinct but complementary challenges: (i) intrinsic 
reasoning limits of the base LLM, (ii) misalignment between text semantics and visual grounding, 
and (iii) error propagation from aggregated pruning strategies. Together, they emphasize that while 
Script substantially improves efficiency and often preserves accuracy, there remain scenarios where 
query-specific signals are underutilized or entirely lost. Future work could address these issues 
by incorporating adaptive confidence calibration, multimodal consistency checks, or dynamic 
pruning thresholds that adjust based on query difficulty.

\renewcommand{\thetheorem}{F.\arabic{theorem}}
\renewcommand{\thefigure}{F.\arabic{figure}}
\renewcommand{\thetable}{F.\arabic{table}}
\renewcommand{\theequation}{F.\arabic{equation}}
\setcounter{figure}{0}
\setcounter{theorem}{0}
\setcounter{table}{0}
\setcounter{equation}{0}

\section{Limitations and Future Work.}
\label{sec:limitation}

One key limitation of our work is that the proposed pruning method requires direct access to encoded visual tokens during inference, which restricts its applicability to open-source Multimodal Large Language Models. By contrast, widely used proprietary systems such as ChatGPT, Gemini, and Claude are closed black boxes where intermediate visual features are inaccessible. Although these models also incur high computational costs for visual reasoning, our method cannot be deployed in such settings.  

Another limitation is that our evaluation is limited to vision--language inputs, assuming queries and context are restricted to text and visual tokens. We do not consider other modalities such as audio, depth, or IMU signals. Multimodal tasks like audio-grounded dialogue, audio-visual question answering, or egocentric video reasoning introduce challenges beyond vision--language settings. For instance, audio streams are temporally dense and require alignment between speech content, prosody, and visual context, while depth and IMU data carry fine-grained geometric and motion cues that evolve continuously over time. Efficient pruning in such scenarios would require not only modality-specific encoders but also redundancy metrics that capture temporal correlations (e.g., repeated acoustic frames or redundant motion patterns) rather than purely spatial visual redundancy. While exploring these directions would broaden the scope of our approach, they fall outside the primary focus of this work and are left for future investigation.

Moreover, while our method is compatible with advanced open-source MLLMs such as Qwen2.5-VL and InternVL3, we observe that these models are more sensitive to visual token pruning compared to the LLaVA series. Under the same pruning ratio, they exhibit more significant performance degradation. This likely stems from the fact that such architectures already incorporate internal visual token compression techniques, such as pixel unshuffle or token merging, which reduce redundancy prior to pruning. As a result, additional pruning may lead to excessive information loss. Adapting pruning strategies to better suit these optimized architectures, for example, through pruning-aware training, modality-specific heuristics, or model-adaptive redundancy metrics, remains an important direction for future research.

Finally, our method requires manually selecting the threshold $\tau$. Although our sensitivity analysis (Table~\ref{tab:sensitivity}) shows relatively stable performance across a wide range of values and we provide default recommendations, the threshold remains fixed rather than adaptively determined. Future work could explore automated or learning-based strategies for setting $\tau$, such as validation-driven calibration, distribution-aware rules (e.g., Otsu or quantile-based selection on similarity scores), or meta-learning schemes that condition $\tau$ on image/query characteristics. Such strategies may further improve robustness under domain shift and heterogeneous token redundancy profiles. We believe addressing this limitation will enhance the practicality and generalizability of pruning in multimodal systems.

\renewcommand{\thetheorem}{G.\arabic{theorem}}
\renewcommand{\thefigure}{G.\arabic{figure}}
\renewcommand{\thetable}{G.\arabic{table}}
\setcounter{figure}{0}
\setcounter{theorem}{0}
\setcounter{table}{0}
\renewcommand{\theequation}{G.\arabic{equation}}
\setcounter{equation}{0}

\section{Broader Impacts}
\label{sec:impact}

Multimodal large language models have demonstrated remarkable success across a wide range of domains, including education, accessibility, robotics, and content creation. Despite their capabilities, these models often incur high inference costs, particularly when processing high-resolution images or extended video sequences. Such computational demands pose substantial challenges for real-world deployment.

In this work, we introduce a simple yet effective visual token pruning strategy that enhances inference efficiency without requiring any additional training. By selectively removing redundant visual inputs, our method significantly reduces computational overhead and deployment costs. This improvement facilitates the deployment of MLLMs on resource-constrained platforms, such as mobile devices and edge computing environments, thereby promoting broader accessibility and scalability in practical applications.

However, it is important to note that improving computational efficiency does not inherently address the ethical challenges associated with MLLMs. Our method does not mitigate risks related to potential misuse, such as the generation of harmful content or the dissemination of misinformation. As such, continued research and policy development are essential to ensure the responsible and safe use of increasingly efficient multimodal language models.

\end{document}